\def\eqref#1{equation~\ref{#1}}
\def\1{\bm{1}}
\DeclareMathAlphabet{\mathsfit}{\encodingdefault}{\sfdefault}{m}{sl}
\SetMathAlphabet{\mathsfit}{bold}{\encodingdefault}{\sfdefault}{bx}{n}
\newcommand{\E}{\mathbb{E}}
\newcommand{\R}{\mathbb{R}}
\newcommand{\bx}{{\bf x}}
\newcommand{\bw}{{\bf w}}
\newcommand{\bu}{{\bf u}}
\newcommand{\buh}{\hat{\bf u}}
\newcommand{\bwh}{\hat{\bf w}}
\newcommand{\by}{{\bf y}}
\newcommand{\bU}{{\bf U}}
\newcommand{\bV}{{\bf V}}
\newcommand{\bv}{{\bf v}}
\newcommand{\Uh}{\hat{U}}
\newcommand{\uh}{\hat{u}}
\newcommand{\bUh}{{\bf \hat{U}}}
\newcommand{\cC}{\mathcal{C}}
\newcommand{\cU}{\mathcal{U}}
\newcommand{\cUh}{\hat{\mathcal{U}}}
\newcolumntype{P}[1]{>{\centering\arraybackslash}p{#1}}
\newcommand{\centered}[1]{\begin{tabular}{c} #1 \end{tabular}}
\newtheorem{thm}{Theorem}
\newtheorem{lem}{Lemma}
\newtheorem{rem}{Remark}
\newtheorem{definition}{Definition}
\newcommand{\berivan}[1]{{\color{red}\textbf{BI}: #1}}
\newcommand{\noa}[1]{{\color{green!50!black}\textbf{AN}: #1}}
\newcommand{\wip}[1]{{\color{blue} #1}}
\newcommand{\berivan}[1]{}
\newcommand{\noa}[1]{}
\newcommand{\wip}[1]{}
\begin{document}

% If your paper is accepted and the title of your paper is very long,
% the style will print as headings an error message. Use the following
% command to supply a shorter title of your paper so that it can be
% used as headings.
%
%\runningtitle{I use this title instead because the last one was very long}

% If your paper is accepted and the number of authors is large, the
% style will print as headings an error message. Use the following
% command to supply a shorter version of the authors names so that
% they can be used as headings (for example, use only the surnames)
%
%\runningauthor{Surname 1, Surname 2, Surname 3, ...., Surname n}

\twocolumn[

\aistatstitle{An Information-Theoretic Justification for Model Pruning}

\aistatsauthor{ Berivan Isik \And Tsachy Weissman \And  Albert No }

\aistatsaddress{Stanford University \\ berivan.isik@stanford.edu \And Stanford University \\ tsachy@stanford.edu \And Hongik University \\ albertno@hongik.ac.kr } ]

\begin{abstract}
    \label{abstract}
We study the neural network (NN) compression problem, viewing the tension between the compression ratio and NN performance through the lens of rate-distortion theory. We choose a distortion metric that reflects the effect of NN compression on the model output and derive the tradeoff between rate (compression) and distortion. In addition to characterizing theoretical limits of NN compression, this formulation shows that \emph{pruning}, implicitly or explicitly, must be a part of a good compression algorithm. This observation bridges a gap between parts of the literature pertaining to NN and data compression, respectively, providing insight into the empirical success of model pruning. Finally, we propose a novel pruning strategy derived from our information-theoretic formulation and show that it outperforms the relevant baselines on CIFAR-10 and ImageNet datasets.

\end{abstract}
\section{Introduction}
\label{intro}

The recent success of NNs in various machine learning applications has come with their over-parameterization. Deployment of such over-parameterized models on edge devices is challenging as these devices have limited storage, computation, and power resources. Motivated by this, there has been significant interest in NN compression by the research community. The most established NN compression techniques can be broadly grouped into five categories: quantization \citep{  quant_1, scalableQuant, jacob2018quantization, jung2019learning,  quant_3, choi2020universal, young2020transform, idelbayev2021optimal} and coding \citep{DeepCABAC, zhe2021rate} of NN parameters, pruning \citep{deep_compression, molchanov2016pruning, carreira2018learning, liu2018rethinking, yu2018nisp,   lin2019towards, peng2019collaborative, xiao2019autoprune, zhao2019variational, blalock2020state,   elsen2020fast, park2020lookahead,  renda2020comparing}, Bayesian compression \citep{federici2017improved, louizos2017bayesian, louizos2017learning, molchanov2017variational, compress_var_info}, distillation \citep{distillation, polino2018model, wang2019private}, and low-rank matrix factorization \citep{low_rank1, low_rank2, Idelbayev_low_rank3}. The success of these techniques in compressing NN models without a significant performance loss brings a theoretical question: \emph{what is the fundamental limit of NN compression while maintaining a target performance?}

%highest achievable compression ratio given a target performance for the compressed model?} 
%\noa{instead of using the term ``compression ratio'' explicitly, how about
%``what is the fundamental limit of NN compression while maintaining a target performance?''}

\looseness=-1
A similar question arises in the classical data compresssion problem as well \citep{salomon2004data}.
\cite{shannon2001mathematical} introduced the mathematical formulation of the data compression problem, where the goal is to describe a source sequence with the minimum number of bits. In an information-theoretic sense, entropy is the limit of how much a source sequence can be losslessly compressed.
However, \emph{in practice}, there are many sources such as image, video, and audio, where lossless compression cannot achieve a high enough compression rate.  
In such cases, we need to compress the source sequence in a \emph{lossy} manner allowing some \emph{distortion} between the source and reconstruction. This is where rate-distortion theory comes into the picture. For lossy compression, rate-distortion theory gives the limit of how much a source sequence can be compressed without exceeding a target distortion level \citep{berger2003rate}.

\looseness=-1
In this work, we connect these two lines of research and study the theoretical limits of lossy NN compression via rate-distortion theory. In particular, we consider a classical lossy compression problem to compress NN weights while minimizing the perturbation in the NN output space. We first (1) define a distortion metric that upper bounds the output perturbation due to compression, then (2) find a probability distribution that fits NN parameters, and finally (3) derive the rate-distortion function for the chosen distortion metric and distribution. This function describes the theoretical tradeoff between rate (compression ratio) and NN output perturbation, thus provides insight into how compressible NN models are. Furthermore, our findings indicate that the compressed model that reaches the optimal achievable compression ratio must be sparse. This suggests that a good NN compression algorithm must, implicitly or explicitly, involve a pruning step,
complimenting the empirical success of pruning strategies \citep{stateofsparsity}. Therefore, we provide theoretical support for pruning as a rate-distortion theoretic compression scheme that maintains the model output. 

\looseness=-1
Inspired by this observation, we propose a practical lossy compression algorithm for NN models. The reconstruction of our algorithm is a sparse model, which naturally induces a novel pruning strategy. Our algorithm is based on \emph{successive refinability} -- a property that often helps to reduce the complexity of lossy compression algorithms \citep{equitz1991successive}. Our strategy differs from previous score-based pruning methods as it relies solely on an information-theoretic approach to a data compression problem with additional practical benefits that we cover in Section~\ref{experiments}. We also prove that the proposed algorithm is sound from a rate-distortion theoretic perspective. We demonstrate the efficacy of our pruning strategy on CIFAR-10 and ImageNet datasets. Lastly, we show that our strategy provides a tool for compressing NN gradients as well,
an important objective in communication-efficient federated learning (FL) settings \citep{kairouz2019advances}.
The contributions of our paper can be summarized as: 
\looseness=-1
\begin{itemize}
    \item We take a step in bridging the gap between NN compression  
    and data compression. 
    \item We present the rate-distortion theoretical limit of achievable NN compression given a target distortion level and show that pruning is an essential part of a good compression algorithm.
    \item We propose a novel pruning strategy derived from our findings, which outperforms relevant baselines.% in the literature. 
\end{itemize}
\looseness=-1

\section{Related Work}
\label{related}

This section is devoted to prior work on NN compression that has the same flavor as ours, in particular, we touch on (a) data compression approaches to NN compression and (b) pruning. We cover related works in classical data compression as we go through the methodology in Sections~\ref{prelims}, \ref{method1_rate_distortion}, and \ref{method2_SuRP}.

\looseness=-1
\paragraph{From Data Compression to NN Compression.}

To date, several works have proposed to minimize the bit-rate (compressed size) of NNs with quantization techniques \citep{  quant_3, idelbayev2021optimal, QuantNoise}.
Some recent work has shown promising results to go beyond quantization using tools from data compression.
For instance, \cite{havasi2018minimal} and \cite{oktay2019scalable} have trained a model to jointly optimize compression and performance of the model using tools from minimum description length principle \citep{grunwald2007minimum} and a recently advanced image compression framework \citep{balle2016end}, respectively. 
%have used tools from minimum description length principle \cite{grunwald2007minimum} and a recently advanced image compression framework \cite{balle2016end}, respectively, to train compressible NN models. 
While we share the same goal with these papers, our focus is on compressing NN models \emph{post-training}. With this distinction, our work is most related to \citep{gao2019rate}, where the authors have put the first attempt to approach NN compression from a rate-distortion theoretic perspective. Although they have shown achievability results on one-layer networks, their results do not generalize to deeper networks without first-order Taylor approximations. Moreover, their formulation relies on the assumption that NN weights follow Gaussian distribution, which currently lacks empirical evidence. On the other hand, we show achievable compression ratios generalized to multi-layer networks without making linear approximations and provide strong empirical evidence for our choice of \emph{Laplacian} distribution for %\noa{(normalized)}
NN weights. 

\looseness=-1
\paragraph{Pruning.} The overparameterized nature of NNs has motivated researchers to explore ways to find and remove redundant parameters \citep{OBD, OBS}. The idea of iterative magnitude pruning was shown to be remarkably successful in deep NNs first by \cite{deep_compression}, and since then, NN pruning research has accelerated. To improve upon the iterative magnitude pruning scheme of \citep{deep_compression}, researchers have looked for different ways to adjust the pruning ratios across layers. For instance, \cite{zhu2017prune} have suggested pruning the parameters uniformly across layers. \cite{stateofsparsity}, on the other hand, have shown better results when the first convolutional layer is excluded from the pruning and the last fully-connected layer is not pruned more than $80 \%$. Layerwise pruning ratio has also been investigated for NNs pruned at initialization since the explosion of the Lottery Ticket Hypothesis \citep{frankle2018lottery, morcos2019one}. \cite{evci2020rigging} have shown promising results on NNs pruned at initialization where the pruning ratio across layers is adjusted by Erd\H{o}s-R\'enyi kernel method, as introduced by \cite{mocanu2018scalable}. More recently, \cite{lee2020deeper} have proposed adjusting the pruning threshold for each layer based on the norm of the weights at that layer.
We follow a similar methodology in \citep{lee2020deeper} to normalize the parameters prior to applying our \emph{novel} pruning algorithm. Unlike other pruning strategies, our algorithm outputs a pruned (sparse) model,
without an explicit score-based pruning step. Instead, our reconstruction goes from the coarsest (sparsest) to the finest representation of the model. %\noa{we can remove this} %This is similar to the progressive or hierarchical image compression techniques such as JPEG \cite{mallat2009theory, lewis1992image, rabbani2002jpeg2000}. %we develop the algorithm from a theoretical formulation of the NN compression problem,
%where our reconstruction goes from the coarsest (sparsest) to the finest representation of the model.
Parallel to our work, a recent study has proposed a heuristic bottom-up approach as opposed to the common top-down pruning approach
and provided promising empirical results \citep{chen2021long}.
To the best of our knowledge, our work is the first to provide a rate-distortion theoretic justification for pruning. 
\looseness=-1

\section{Preliminaries}
\label{prelims}
In this section, we present the problem setup and briefly introduce the rate-distortion theory and the successive refinement concept.

\looseness=-1
\subsection{Problem Statement} 
We study a NN compression problem where the network $\by = f(\bx;\bw)$ characterizes a prediction from the input space $\mathcal{X}$ to the output space $\mathcal{Y}$, parameterized by weights $\bw$. Our goal is to minimize the difference between $\by=f(\bx;\bw)$ and $\hat{\by}=f(\bx; \bwh)$, where $\bwh$ is a compressed version of the trained parameters $\bw$. In Section~\ref{distortion}, we define an appropriate distortion function $d(\bw, \bwh)$ that reflects the perturbation in the output space $\|f(\bx;\bw) - f(\bx;\bwh)\|_1$. This is a lossy compression problem where the distortion is a measure of the distance between the original model and the compressed model,
and the rate is the number of bits required to represent one weight.
In information-theoretic term, rate distortion theory characterizes the minimum achievable rate given the target distortion.

\looseness=-1
\subsection{Notation} 
Throughout the paper, $\bw \in \R^n$ is the weights of a trained model. Logarithms are natural logarithms.
Rate is defined as nats (the unit of information obtained from natural logarithm) per symbol (weight in our case). We use lower case $u$ to denote the realization of a scalar random variable $U$ and $\bu=u^n= (u_1,\dots, u_n)$ to denote the realization of a random vector $\bU = U^n = (U_1, \dots, U_n)$. We use the term ``perturbation'' for the change in the model output due to compression, whereas ``distortion'' $d(\bw, \bwh)$ refers to the change in the \emph{parameter} space. Lastly, $d(u^n, \uh^n) = \frac{1}{n}\sum_{i=1}^n d(u_i, \uh_i)$ is the regular extension of the distortion function for an $n$ dimensional vector.

%%%%%%%%%%%%%%%%%%%%%%%%%%%%%%%%%%%%%%%%%%%%%%%%%%
%% Rate Distortion
%%%%%%%%%%%%%%%%%%%%%%%%%%%%%%%%%%%%%%%%%%%%%%%%%%
\looseness=-1
\subsection{Rate-Distortion Theory} 
Let $U_1, \dots, U_n\in\cU$ be a source sequence generated by i.i.d.\ $\sim p(u)$ where $p(u)$ is a probability density function and $\cU=\R$.
The encoder $f_e: \cU^n \rightarrow  \{0,1\}^{nR}$ describes this sequence in $nR$ bits, where this binary representation is called a ``message'' $m$.
The decoder $f_d: \{0,1\}^{nR} \rightarrow \cUh^n$ reconstructs an estimate $\buh = \uh^{n} \in \cUh^n$ based on $m\in\{0,1\}^{nR}$
where $\cUh=\R$ as well. This process, summarized in Figure~\ref{fig:enc_dec_diagram}(a), is called lossy source coding. The number of bits per source symbol  ($\frac{nR}{n}=R$ in this case) and the ``distance''
$d(\bu, \buh) = d(u^n, \uh^n)=\frac{1}{n}\sum_{i=1}^nd(u_i,\uh_i)$ between $\bu$ and $\buh$ are named as rate and distortion, respectively.
Ideally, we would like to keep both rate and distortion low, but there is a tradeoff between these two quantities,
which is characterized by the rate-distortion function \citep{shannon2001mathematical, berger2003rate, elements_of_it} as:
\begin{equation}
\label{rd}
    R(D) = \min_{p(\uh|u): \E [d(u, \uh)] \leq D} I(U; \Uh)
\end{equation}
where $I(U;\Uh)$ is the mutual information between $U$ and $\Uh$,
and $d(\cdot, \cdot)$ is a predefined distortion metric, e.g. $\ell_2$ distance.
The rate-distortion function $R(D)$ in Eq.~\ref{rd} is the minimum achievable rate at distortion $D$,
and the conditional distribution $p(\uh|u)$ that achieves $I(U; \Uh) = R(D)$ explains
how an optimal encoder-decoder pair should operate for the source $p(u)$.
We can also define the inverse, namely the distortion-rate function $D(R)$, which is the minimum achievable distortion at rate $R$.
Clearly, source distribution has a critical role in the solution of the rate distortion problem.
We discuss possible assumptions for the distribution of NN weights in Section~\ref{RD_function}. 

\begin{figure}[h!]
        \centering 
        \subfigure[Lossy Source Coding.]{\includegraphics[width=.45\textwidth]{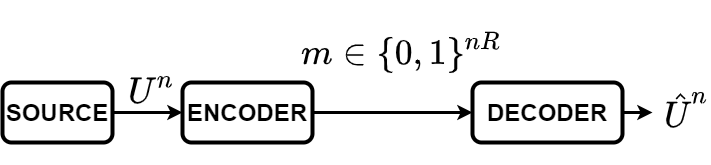}}
        \subfigure[Successive Refinement. ]{\includegraphics[width=.45\textwidth]{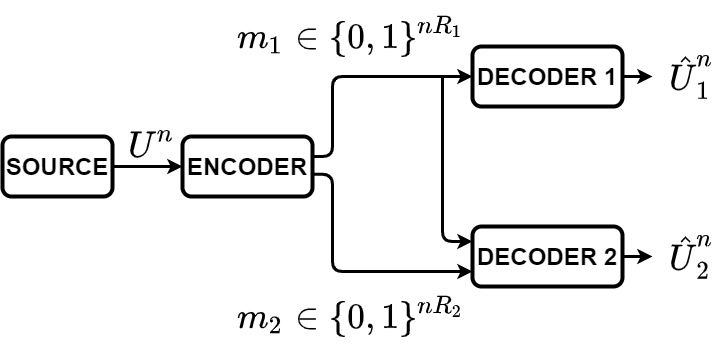}}
       \caption{(a) Source Coding, (b) Successive Refinement with $2$ Decoders.}
       \label{fig:enc_dec_diagram}
       \end{figure}

%%%%%%%%%%%%%%%%%%%%%%%%%%%%%%%%%%%%%%%%%%%%%%%%%%
%% Successive Refinement
%%%%%%%%%%%%%%%%%%%%%%%%%%%%%%%%%%%%%%%%%%%%%%%%%%
\looseness=-1
\subsection{Successive Refinement} 
In the successive refinement problem, summarized in Figure~\ref{fig:enc_dec_diagram}(b), the encoder wants to describe the source to two decoders, where each decoder has its own target distortion, $D_1$ and $D_2$.
Instead of having separate encoding schemes for each decoder,
the successive refinement encoder encodes a message $m_1$ for Decoder 1 (with higher target distortion, $D_1$),
and encodes an extra message $m_2$ where the second decoder gets both $m_1$ and $m_2$.
Receiving both $m_1$ and $m_2$, Decoder 2 reconstructs $\bUh_2$ with distortion $D_2$.
Since the message $m_1$ is re-used, the performance of successive refinement encoder is sub-optimal in general.
However, in some cases, the successive refinement encoder achieves the optimum rate-distortion tradeoff as if dedicated encoders were used separately.
In such a case, we call the source (distribution) and the distortion pair successively refinable \citep{koshelev1980hierarchical, equitz1991successive}.
In Section~\ref{sec:fixed_successive_refinement}, we discuss how to achieve low complexity via successive refinement.

\renewcommand{\Pr}[1]{\mbox{{Pr}}\left[#1\right]}

\section{Rate-Distortion Theory for Neural Network Parameters}
\label{method1_rate_distortion}
In this section, we first derive the distortion metric to be used in the rate-distortion function, then we estimate the source distribution (probability density of NN weights), and finally, we present the rate-distortion function associated with the chosen distortion metric and the source distribution. 
   
%%%%%%%%%%%%%%%%%%%%%%%%%%%%%%%%%%%%%%%%%%%%%%%%%%
%% Distortion Metric
%%%%%%%%%%%%%%%%%%%%%%%%%%%%%%%%%%%%%%%%%%%%%%%%%%
\looseness=-1
\subsection{Distortion Metric}
\label{distortion}
\looseness=-1
Our objective is to minimize the difference between the output of the original NN model and the compressed model.
Formally, we would like to keep the output perturbation $\|f(\bx;\bw) - f(\bx;\bwh)\|_1$ small.
Since the effect of a weight distortion on the output space $f(\bx;\bw)$ is intractable for deep NNs,
we seek to find a distortion function on parameter space that upper bounds $\|f(\bx;\bw) - f(\bx;\bwh)\|_1$.

Prior work has derived an upper bound for the $\ell_2$ norm of the output perturbation as the Frobenius norm of the difference between $\bw$ and $\bwh$
when only a single layer is compressed \citep{lee2020deeper}.
%More precisely, consider a NN model with $d$ layers.
More precisely, consider a fully connected NN model with $d$ layers and ReLU activation. Let $\bw$ be the weights of the original trained model and $\bwh$ be a compressed version of $\bw$ where $\bwh$ is the same with $\bw$ except in the $l$-th layer.
In such a case, i.e., when only a single layer is compressed, the output perturbation is bounded by
\vspace{-1.5mm}
\begin{align}
\begin{aligned}
&\sup_{\|\bx\|_2\leq 1} \|f(\bx;\bw) - f(\bx;\bwh)\|_2 \\
 & \leq \frac{\|\bw^{(l)} - \bwh^{(l)}\|_F}{\|\bw^{(l)}\|_F} \cdot \left(\prod_{k=1}^d \|\bw^{(k)}\|_F\right)
\label{eq:lamp bound}
\end{aligned}
\end{align}

where $\bw^{(l)}$ indicates the weights of the $l$-th layer. Inspired by Eq.~\ref{eq:lamp bound}, \cite{lee2020deeper} have introduced Layer-Adaptive Magnitude-based Pruning (LAMP) score $(\bw^{(l)}_i)^2/ \left (\sum_{j} ( \bw^{(l)}_j )^2 \right )$ to measure the importance of the weight $\bw^{(l)}_i$ for pruning. Notice that Eq.~\ref{eq:lamp bound} holds only when a single layer is pruned. 

In this work, we follow a similar strategy to relate the ``$\ell_1$ norm of perturbation on the output space'' to 
``$\ell_1$ norm of the weight distortion after compression'', but not limited to single-layer compression.

%\noa{the following theorem needs an additional assumption $\|\bwh^{(l)}\|_1\leq \|\bw^{(l)}\|_1$ for all $1\leq l\leq d$ which we mention later}
\begin{thm}\label{thm:l1bound}
Suppose $f(\cdot;\bw)$ is a fully-connected NN model with $d$ layers and 1-Lipschitz activations $\sigma(\cdot)$ such that $\sigma(0)=0$, e.g., ReLU. Let $\bwh$ be the reconstructed weights (after compression) where all layers are subject to compression. If $\|\bw^{(l)}\|_1 \geq \|\bwh^{(l)}\|_1$ for all $1\leq l \leq d$ \footnote{We provide a symmetric version of Theorem~\ref{thm:l1bound} in Appendix~\ref{sec:modifiedTheorem1}, which essentially implies the same upper bound on the output perturbation without requiring the additional condition of $\|\bw\|_1 \geq \|\bwh\|_1$.}, then, we have the following bound on the output perturbation:
\begin{align}
\begin{aligned}
&\sup_{\|x\|_1\leq 1} \|f(\bx, \bw) - f(\bx, \bwh)\|_1
\\ &\leq \left(\sum_{l=1}^d \frac{\|\bw^{(l)} - \bwh^{(l)}\|_1}{\|\bw^{(l)}\|_1}\right) \left(\prod_{k=1}^d \|\bw^{(k)}\|_1\right)
\label{eq:l1bound}
\end{aligned}
\end{align}
%\noa{or we can say
%\begin{align}
%\begin{aligned}
%\sup_{\|x\|_1\leq 1} \frac{ \|f(\bx, \bw) - f(\bx, %\bwh)\|_1}{\prod_{k=1}^d \|\bw^{(k)}\|_1}
%\\ \leq \left(\sum_{l=1}^d \frac{\|\bw^{(l)} - %\bwh^{(l)}\|_1}{\|\bw^{(l)}\|_1}\right) 
%\label{eq:l1bound}
%\end{aligned}
%\end{align}
%which implies ``normalized output perturbation'' is bounded by %``normalized wight distortion''.
%Then, we do not have to explain why we ignore the last term in %Eq.~\ref{eq:l1bound}.
%}
i.e., the output perturbation is bounded by the $\ell_1$ distortion of the normalized weights.
\end{thm}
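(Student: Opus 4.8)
The plan is a layerwise hybrid (telescoping) argument: interpolate between $\bw$ and $\bwh$ one layer at a time, bound the output change caused by each single-layer swap, and sum. Fix an input $\bx$ with $\|\bx\|_1\le 1$ and write $h^{(0)}=\bx$, $h^{(k)}=\sigma(\bw^{(k)}h^{(k-1)})$ for the hidden activations of the original network (and $\hat h^{(k)}$ for those of $\bwh$). Two elementary facts are used throughout. Since $\sigma$ is $1$-Lipschitz with $\sigma(0)=0$, for all vectors $u,v$ we have $\|\sigma(u)-\sigma(v)\|_1\le\|u-v\|_1$ and $\|\sigma(u)\|_1=\|\sigma(u)-\sigma(0)\|_1\le\|u\|_1$; together with $\|Az\|_1\le\|A\|_1\|z\|_1$ (which holds for the entrywise $\ell_1$ matrix norm, as it dominates the induced $\ell_1$-to-$\ell_1$ operator norm) this yields: (i) a forward norm bound $\|h^{(k)}\|_1\le\prod_{j=1}^k\|\bw^{(j)}\|_1$ (and the analogue for $\hat h^{(k)}$), and (ii) pushing two inputs through one further genuine layer contracts their $\ell_1$ distance by at most the factor $\|\bw^{(k)}\|_1$ (resp.\ $\|\bwh^{(k)}\|_1$).

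Next I would set up the interpolation. For $0\le l\le d$, let $\bw_{[l]}$ be the parameter vector taking layer $k$ from $\bw$ if $k\le l$ and from $\bwh$ if $k>l$, so $\bw_{[d]}=\bw$ and $\bw_{[0]}=\bwh$. Telescoping and the triangle inequality give
\[
\|f(\bx;\bw)-f(\bx;\bwh)\|_1\le\sum_{l=1}^d\|f(\bx;\bw_{[l]})-f(\bx;\bw_{[l-1]})\|_1 .
\]
The networks $\bw_{[l]}$ and $\bw_{[l-1]}$ coincide except at layer $l$ ($\bw^{(l)}$ versus $\bwh^{(l)}$): they share layers $1,\dots,l-1$ (from $\bw$) and layers $l+1,\dots,d$ (from $\bwh$). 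Hence both feed the common activation $h^{(l-1)}$ into layer $l$, where the two branches first differ by at most
\[
\|\sigma(\bw^{(l)}h^{(l-1)})-\sigma(\bwh^{(l)}h^{(l-1)})\|_1\le\|\bw^{(l)}-\bwh^{(l)}\|_1\,\|h^{(l-1)}\|_1\le\|\bw^{(l)}-\bwh^{(l)}\|_1\prod_{k=1}^{l-1}\|\bw^{(k)}\|_1
\]
by fact (i), and the shared tail layers $l+1,\dots,d$ (built from $\bwh$) then amplify this gap by at most $\prod_{k=l+1}^d\|\bwh^{(k)}\|_1$ by fact (ii).

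Finally I would invoke the hypothesis $\|\bwh^{(k)}\|_1\le\|\bw^{(k)}\|_1$ for every $k$ to replace $\prod_{k=l+1}^d\|\bwh^{(k)}\|_1$ by $\prod_{k=l+1}^d\|\bw^{(k)}\|_1$, giving
\[
\|f(\bx;\bw_{[l]})-f(\bx;\bw_{[l-1]})\|_1\le\|\bw^{(l)}-\bwh^{(l)}\|_1\prod_{k\ne l}\|\bw^{(k)}\|_1=\frac{\|\bw^{(l)}-\bwh^{(l)}\|_1}{\|\bw^{(l)}\|_1}\prod_{k=1}^d\|\bw^{(k)}\|_1 ,
\]
and summing over $l$ (the bound is uniform in $\bx$, so taking the supremum over $\|\bx\|_1\le1$ changes nothing) yields Eq.~\ref{eq:l1bound}.

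The main thing to get right — and, I expect, the only genuine obstacle — is the role of the assumption $\|\bwh^{(l)}\|_1\le\|\bw^{(l)}\|_1$: it is exactly what lets the tail factors be rewritten in terms of the original weights, so the final bound involves only the $\|\bw^{(k)}\|_1$. Without it, the $l$-th term carries $\|\bwh^{(k)}\|_1$ factors in its tail; one could then either bound $\|\bwh^{(k)}\|_1\le\|\bw^{(k)}\|_1+\|\bw^{(k)}-\bwh^{(k)}\|_1$ and absorb the surplus, or symmetrize the hybrid (average the two orderings head$=\bw$/tail$=\bwh$ and head$=\bwh$/tail$=\bw$), which is presumably the route of the assumption-free variant in the footnoted appendix. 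A minor bookkeeping point is ensuring the ``first-divergence'' term is slotted between precisely the head product $\prod_{k<l}$ and the tail product $\prod_{k>l}$, which the indexing of $\bw_{[l]}$ makes automatic.
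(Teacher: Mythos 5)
Your proposal is correct and is essentially the paper's own argument: the paper proves the bound by peeling the last layer recursively (triangle inequality at the top layer, the induced-$\ell_1$-norm and 1-Lipschitz bounds, the forward lemma $\|f(\bx;\bw^{(1:i)})\|_1\le\prod_{j=1}^i\|\bw^{(j)}\|_1\|\bx\|_1$, and the assumption $\|\bwh^{(l)}\|_1\le\|\bw^{(l)}\|_1$ to replace the compressed-weight factors), and unrolling that recursion gives exactly your hybrid telescoping sum with the same per-layer term. Your closing remark about symmetrizing to drop the assumption also matches the paper's Appendix B variant.
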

The matrix norm $\|\cdot\|_1$ is an induced norm by $\ell_1$ vector norm.
The proof is given in Appendix~\ref{distortion_appendix}. In Section~\ref{sec:surp} (Remark~\ref{rem_add_constraint}), we show that the proposed compression algorithm satisfies the additional assumption 
$\|\bw^{(l)}\|_1 \geq \|\bwh^{(l)}\|_1$ for all $1\leq l \leq d$. 
Since the last term in Eq.~\ref{eq:l1bound}, $\left(\prod_{k=1}^d \|\bw^{(k)}\|_1\right)$, is independent of the compression,
we do not include this term in our weight distortion function.
Then, one distortion function that naturally arises from Theorem~\ref{thm:l1bound} is $d(\bw, \bwh) = \sum_{l=1}^d \frac{\|\bw^{(l)} - \bwh^{(l)}\|_1}{\|\bw^{(l)}\|_1}$.
By changing the notation slightly, we would like to minimize the following distortion function
\begin{align}
d(\bu, \buh) = \frac{1}{n}\sum_{i =1}^n |u_i - \uh_i|
\label{eq:distortion_l1}
\end{align}
where $\bu$ is the normalized weights arisen from the normalization in Eq.~\ref{eq:l1bound},
i.e., $\bu^{(l)} = \frac{\bw^{(l)}}{\|\bw^{(l)} \|_1}$ for $l = 1, \dots, d$.
In the next section, we derive the rate-distortion function with the distortion metric in Eq.~\ref{eq:distortion_l1},
which approximates the perturbation ($\ell_1$ loss) on the output space due to compression.

%%%%%%%%%%%%%%%%%%%%%%%%%%%%%%%%%%%%%%%%%%%%%%%%%%
%% Rate-distortion function - optimality of pruning
%%%%%%%%%%%%%%%%%%%%%%%%%%%%%%%%%%%%%%%%%%%%%%%%%%
\subsection{Rate-Distortion Function for Neural Network Parameters}
\label{RD_function}
   
Since we define our distortion function as the $\ell_1$ distortion between $\bu$ and $\buh$ as in Eq.~\ref{eq:distortion_l1}, where $\bu$ is the normalized NN weights, we can formulate the compression problem as a lossy compression of the normalized NN weights.
Before deriving the rate-distortion function, we need a source distribution that fits the normalized weights $\bu$.
Figure~\ref{fig:density_weight} shows that Laplacian distribution is a good fit for pretrained NN weights after normalization 
%(we show that it is a good fit without the normalization as well in Appendix~\ref{density_appendix})
as opposed to the common Gaussian assumption in the prior work \citep{gao2019rate}. %The weights in Figure~~\ref{fig:density_weight} are loaded from PyTorch's pretrained models and no further training performed.  
For Figure~\ref{fig:density_weight}, we use PyTorch's pretrained models with no further training.
   
Now that we have a distortion metric and a source distribution, suitable for NN compression problem, we can finally derive the rate-distortion function.
We consider i.i.d.\ Laplacian source sequence $u_1, \dots, u_n$ distributed according to $f_L(u; \lambda) = \frac{\lambda}{2} e^{-\lambda|u|}$  with zero-mean and scale factor of $\lambda$, reconstructed sequence $v_1, \dots, v_n$, and $\ell_1$ distortion given in Eq.~\ref{eq:distortion_l1} with $\buh = \bv$. The rate-distortion function, which is the minimum achievable rate given the target distortion $D$ follows by:

\begin{lem}[\cite{berger2003rate}] The rate-distortion function for a Laplacian source with $\ell_1$ distortion is given by
\begin{align}
    R(D) = \begin{cases} -\log(\lambda D), & \ 0 \leq D \leq \frac{1}{\lambda} \\
    0, &  D > \frac{1}{\lambda}\end{cases}
\label{eq_rd_lap}
\end{align}
with the following optimal conditional probability distribution that achieves the minimum rate: 
\begin{align}
f_{\bU|\bV}(u|v) = \frac{1}{2D} e^{-|u-v|/D}.
\end{align}
Moreover, the marginal distribution of $\bV$ for the optimal reconstruction is %$f_{\bV}(v) = \lambda^2D^2 \cdot \delta (v) + (1- \lambda^2 D^2) \cdot \frac{\lambda}{2} e^{-\lambda|v|},$
\begin{align}
  f_{\bV}(v) = \lambda^2D^2 \cdot \delta (v) + (1- \lambda^2 D^2) \cdot \frac{\lambda}{2} e^{-\lambda|v|},  
\label{marginal}
\end{align}
where $\delta(v)$ is a Dirac measure. 
\label{rd_lap}
\end{lem}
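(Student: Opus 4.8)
The plan is to prove the two matching bounds $R(D)\ge -\log(\lambda D)$ and $R(D)\le -\log(\lambda D)$ on the nontrivial regime $0\le D\le 1/\lambda$, to handle $D>1/\lambda$ via the constant reconstruction, and to exhibit the optimizing test channel along the way, from which Eq.~\ref{marginal} will drop out.

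For the converse I would use the Shannon lower bound. For any channel $p(\uh\mid u)$ with $\E|U-\Uh|\le D$,
\begin{align}
I(U;\Uh) = h(U) - h(U\mid \Uh) = h(U) - h(U-\Uh\mid \Uh) \ge h(U) - h(U-\Uh),
\end{align}
since conditioning does not increase differential entropy. One computes $h(U)=\log(2e/\lambda)$ for the Laplacian source, and, because the Laplacian maximizes differential entropy among densities on $\R$ with a prescribed bound on the first absolute moment, the constraint $\E|U-\Uh|\le D$ yields $h(U-\Uh)\le\log(2eD)$. Hence $I(U;\Uh)\ge \log(2e/\lambda)-\log(2eD)=-\log(\lambda D)$, which together with $I\ge 0$ gives the claimed lower bound.

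For achievability, rather than optimizing over forward channels I would postulate the joint law via a backward channel: let $\bV$ have the mixture distribution in Eq.~\ref{marginal} and put $U=V+Z$ with $Z$ independent of $V$ and Laplacian with density $\tfrac{1}{2D}e^{-|z|/D}$. The key check is that the induced marginal of $U$ is exactly $f_L(\cdot;\lambda)$: the characteristic function of the mixture $f_\bV$ is $\lambda^2D^2 + (1-\lambda^2D^2)(1+t^2/\lambda^2)^{-1}$, that of $Z$ is $(1+D^2t^2)^{-1}$, and their product simplifies to $(1+t^2/\lambda^2)^{-1}$, the characteristic function of $f_L(\cdot;\lambda)$. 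The mixture weights $\lambda^2D^2$ and $1-\lambda^2D^2$ are valid probabilities precisely when $D\le 1/\lambda$, matching the nontrivial branch of Eq.~\ref{eq_rd_lap}, and the claimed forward conditional $f_{\bU\mid\bV}(u\mid v)=\tfrac{1}{2D}e^{-|u-v|/D}$ is immediate from $U-V=Z$. For this channel $\E|U-V|=\E|Z|=D$ and $I(U;V)=h(U)-h(Z)=\log(2e/\lambda)-\log(2eD)=-\log(\lambda D)$, so the converse is met with equality. Finally, for $D>1/\lambda$ the choice $\Uh\equiv 0$ gives $\E|U|=1/\lambda<D$ and $I(U;\Uh)=0$, so $R(D)=0$, and the branches agree at $D=1/\lambda$.

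The main obstacle is the achievability step: one must realize that the right object to write down is the backward channel $U=V+Z$ and then verify that the source marginal emerges exactly, which is what forces the specific point-mass-plus-Laplacian form in Eq.~\ref{marginal}; nonnegativity of the mixture weights is precisely what restricts the construction to $D\le 1/\lambda$. The converse is comparatively routine modulo recalling the maximum-entropy characterization of the Laplacian under an $\ell_1$-moment constraint, and one only needs mild care in treating $f_\bV$ as a mixed distribution with an atom at $0$ when computing $\E|U-V|$ and $I(U;V)$.
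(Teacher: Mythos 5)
Your proof is correct, and the achievability half is essentially identical to the paper's: you posit the backward channel $U=V+Z$ with $V$ drawn from the point-mass-plus-Laplacian mixture and $Z$ an independent Laplacian of scale $D$, and verify via transforms that the marginal of $U$ comes out as $f_L(\cdot;\lambda)$ (the paper does this with Laplace transforms, you with characteristic functions -- the same computation), then read off $\E|U-V|=D$ and $I(U;V)=-\log(\lambda D)$. The only real divergence is in the converse: the paper follows a Verd\'u-style change-of-measure argument, decomposing $I(U;\Uh)$ as a conditional KL divergence against the auxiliary test channel $Q_{U|\Uh}(u|\uh)\propto e^{-|u-\uh|/D}$ plus an explicitly computable term, and dropping the nonnegative divergence; you instead invoke the Shannon lower bound, $I(U;\Uh)\ge h(U)-h(U-\Uh)$, together with the maximum-entropy characterization of the Laplacian under a first-absolute-moment constraint. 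The paper itself notes that the SLB yields the same bound, so this is a cosmetic rather than substantive difference; your route is slightly quicker if one is willing to quote the max-entropy fact, while the paper's divergence decomposition is more self-contained and makes the equality conditions (used to certify the achieving channel) explicit. You also handle the $D>1/\lambda$ branch and the positivity of the mixture weights explicitly, which the paper leaves implicit.
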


\begin{figure*}[t]
    \centering %
        \subfigure[ResNet-18. ]{\includegraphics[width=.245\textwidth]{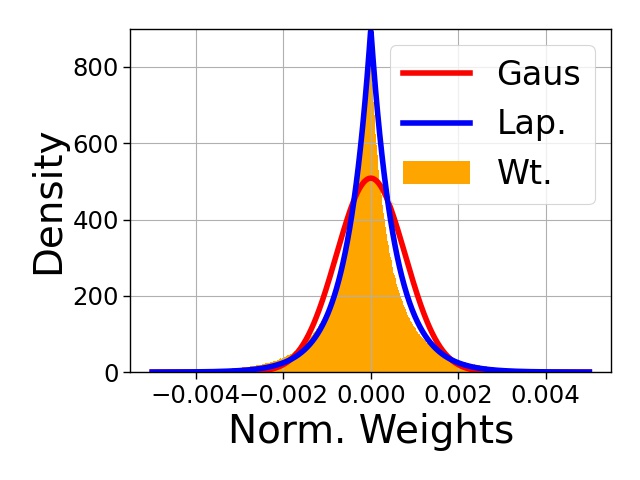}}
        \subfigure[ResNet-50.]{\includegraphics[width=.245\textwidth]{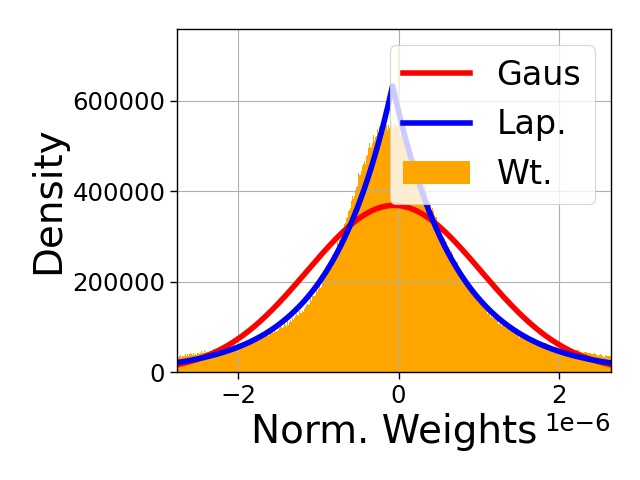}}
        \subfigure[ResNet-152.]{\includegraphics[width=.245\textwidth]{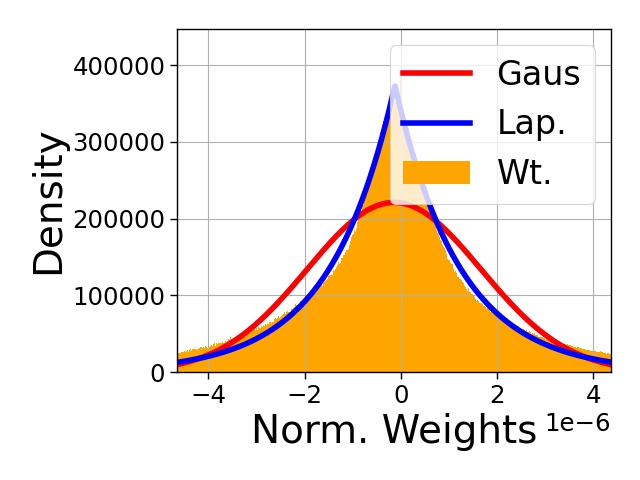}}
         \subfigure[Wide ResNet-50.]{\includegraphics[width=.245\textwidth]{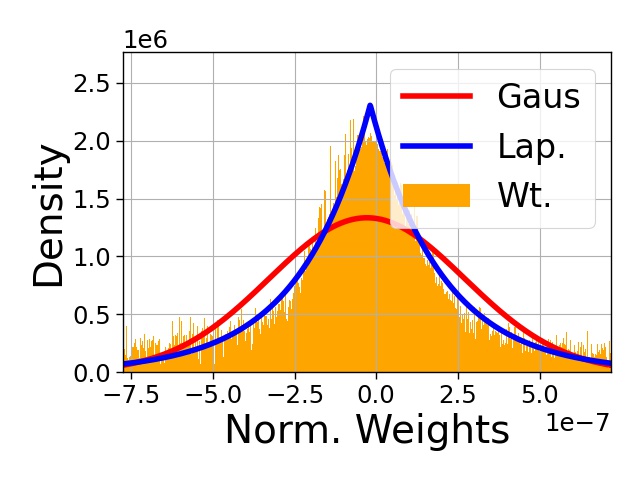}}
    \caption{Density of normalized weights. (a) ResNet-18, (b) ResNet-50, (c) ResNet-152, and (d) Wide ResNet-50. Gaus: Gaussian, Lap.: Laplacian, Wt.: Normalized NN weights. We use PyTorch's pretrained models with no further training.}\label{fig:density_weight}
    \vspace{-2mm}
   \end{figure*}
   
The proof of Lemma~\ref{rd_lap} is given in Appendix~\ref{lemma_proof_appendix}. The rate-distortion function in Eq.~\ref{eq_rd_lap} describes the tradeoff between NN compression ratio and weight distortion $D$ -- which upper bounds the \emph{output} perturbation. Lemma~\ref{rd_lap} further indicates that:
\begin{enumerate}

    \item[(1)] The rate-distortion theoretic optimal encoder-decoder pair makes the reconstruction sparse as the optimal marginal distribution in Lemma~\ref{rd_lap} is a sparse Laplacian distribution with sparsity $\lambda^2 D^2$. %More concretely, the reconstructed NN model must be sparse to satisfy the conditions for the optimal compression scheme. 
    Therefore, unless a compression scheme involves an implicit or explicit pruning step (to make the reconstruction sparse), the reconstruction does not follow the optimal marginal distribution. This would leave a sub-optimal %make the compression scheme sub-optimal 
    compression scheme since the mutual information $I(U; \Uh)$ between the source and reconstruction would be strictly larger than the rate-distortion function.
    %\noa{I like this paragarph, but I think we can compress this to meet length requirement}
    
    \item[(2)] Once $\bV$ is reconstructed at the decoder, the error term on the encoder side, $\bU-\bV$, follows a Laplacian distribution with parameter $1/D$ (see the conditional distribution in Lemma~\ref{rd_lap}). This allows for a practical coding scheme with low complexity based on successive refinement. That is, we can iteratively \footnote{The term ``iterative'' in our proposed algorithm is different from the ``iterative'' magnitude pruning concept.} describe NN weights with reasonable complexity.% at each iteration, which we elaborate more on in Section~\ref{method2_SuRP}. 
\end{enumerate}

In Theorem~\ref{thm:l1bound}, we add another constraint that the norm of the reconstructed weights at each layer is smaller than the norm of the original weights at the same layer ($\|\bw^{(l)}\|_1 \geq \|\bwh^{(l)}\|_1$). This is mainly because (1) sign change in the NN weights can significantly affect the NN output, hence sign bits must be protected to maintain the performance \citep{isiknoisy};
and (2) this inequality ($\|\bw^{(l)}\|_1 \geq \|\bwh^{(l)}\|_1$) is necessary to apply the iterative compression algorithm based on successive refinement (to be discussed in Section~\ref{method2_SuRP}).

In the next section, we develop a NN compression algorithm merging (i) our theoretical findings in Lemma~\ref{rd_lap} for \emph{optimality}  and (ii) successive refinement property for \emph{practicality}.

%%%%%%%%%%%%%%%%%%%%%%%%%%%%%%%%%%%%%%%%%%%%%%%%%%
%% The algorithm
%%%%%%%%%%%%%%%%%%%%%%%%%%%%%%%%%%%%%%%%%%%%%%%%%%
%%% Successive Refinement with Fixed Number of Decoders
%%%%%%%%%%%%%%%%%%%%%%%%%%%%%%%%%%%%%%%%%%%%%%%%%%%%%%%%%

\section{Successive Refinement for Pruning %(SuRP)
}
\label{method2_SuRP}
Rate-distortion theory, although, gives the limit of lossy compression and suggests that pruning must be a part of a good compression algorithm, does not explicitly give the optimal compression algorithm.
In \emph{theory}, a compression algorithm could be designed by letting the encoder pick the closest codeword from a random codebook
generated according to the marginal distribution of $\bV$ in Lemma~\ref{rd_lap}, as suggested by \cite{shannon2001mathematical}.
However, such a compressor would not be practical due to the size of the randomly generated codebook $|\cC| = 2^{nR(D)}$ (exponential in $n$ -- number of weights in our case). While designing practical compression algorithms without sacrificing the optimality is a fundamental dilemma in data compression, recent studies have shown that it is possible to design theoretically optimal schemes with low complexity for certain source distributions. In particular, for a successively refinable source, an optimal compression algorithm can also be practical \citep{no2016strong}.
We exploit this idea for the Laplacian source and develop a practical iterative compression algorithm that is rate-distortion theoretically optimal. We call it Successive Refinement for Pruning (SuRP) since it also outputs a sparse model, which can be viewed as a pruned model (although we do not explicitly prune the model).
We first present the successive refinement scheme for Laplacian source that shows the core idea to achieve lower complexity, but still impractical. We then push further to provide the practical algorithm and prove the optimality in a rate-distortion theoretic sense.

\looseness=-1
\subsection{Successive Refinement with Randomly Generated Codebooks}
\label{sec:fixed_successive_refinement}
Instead of a successive refinement scheme with two decoders as described in Section \ref{prelims},
we consider successive refinement with $L$ decoders. Let $\lambda = \lambda_1 < \dots < \lambda_L$ where $D_t = 1/\lambda_{t+1}$ is the target distortion at the $t$-th decoder. This is because the error term at iteration $t$ has a Laplacian distribution with parameter $\lambda_{t+1} = 1/D_t$ in an optimal compression scheme (see Lemma~\ref{rd_lap}).
We begin by setting $\bU^{(1)} = u^n$.
At the $t$-th iteration, the encoder finds $\bV^{(t)}$ that minimizes the distance $d(\bU^{(t)}, \bV^{(t)})$ from a codebook $\cC^{(t)}$,
then computes the residual $\bU^{(t+1)} = \bU^{(t)} - \bV^{(t)}$.
The $t$-th codebook $\cC^{(t)}$ consists of $2^{nR/L}$ codewords generated by the marginal distribution in Lemma~\ref{rd_lap}:
\begin{align*}
f_{\bV^{(t)}}(v) = \frac{\lambda_{t}^2}{\lambda_{t+1}^2} \cdot \delta (v)
+ \left(1-\frac{\lambda_{t}^2}{\lambda_{t+1}^2}\right) \cdot \frac{\lambda_{t}}{2}e^{-\lambda_{t} |v|}
\end{align*}
Since $\bU^{(t+1)}$ is again an i.i.d.\ Laplacian random sequence with parameter $\lambda_{t+1}=1/D_{t}$ (from the conditional probability in Lemma~\ref{rd_lap}),
the encoder can keep applying the same steps for Laplacian sources at each iteration. In summary, for $1\leq t\leq L-1$, the information-theoretic successive refinement encoder performs the following steps iteratively: (1) find $\bV^{(t)}\in\cC^{(t)}$ that minimizes $d(\bU^{(t)}, \bV^{(t)})$; and (2) update $\bf U$ as ${\bf U}^{(t+1)} = \bU^{(t)} - \bV^{(t)}$. The decoder, on the other hand, reconstructs $\bUh^{(t)} = \sum_{\tau=1}^t \bV^{(\tau)}$ at iteration $t$. This scheme has a complexity of $L\cdot 2^{nR/L}$ (the total size of the codebooks in $L$ iterations), which is lower than the naive random coding strategy ($2^{nR}$ at once). At the same time, it still achieves the rate-distortion limit, i.e., does not sacrifice the optimality, thanks to successive refinability of Laplacian source. 
However, the complexity is still exponential in $n$, which is impractical. We fix this in the next section.

%%% SuRP Algorithm
%%%%%%%%%%%%%%%%%%%%%%%%%%%%%%%%%%%%%%%%%%%%%%%%%%%%%%%%%
\subsection{SuRP Algorithm}\label{sec:surp}
The algorithm in Section~\ref{sec:fixed_successive_refinement} is rate-distortion theoretic optimal with lower complexity thanks to successive refinability, but still impractical due to the exponential size of the codebooks. In this section, we develop a new algorithm SuRP, that enjoys both practicality and optimality. Concretely, SuRP does not require a random codebook
%We saw in Section~\ref{sec:fixed_successive_refinement} that the information-theoretic approach is rate-distortion theoretic optimal with lower complexity due to successive refinability, but still impractical with the exponential size of codebooks. In this section, we develop a new algorithm SuRP, that enjoys both practicality and optimality. Concretely, SuRP does not require a random codebook
or a search for the nearest codeword $\bV^{(t)}$ from $\bU^{(t)}$ at each iteration, yet still rate-distortion theoretic optimal.
With the same initialization $\bU^{(1)}= u^n$ and $\lambda_1=\lambda$, new iterative coding scheme for $1\leq t\leq L-1$ is as follows:
\begin{enumerate}
    \item[(1)] Find indices $(i,j)$ such that $\bU^{(t)}_i \geq \frac{1}{\lambda_{t}}\log \frac{n}{2\beta}$
    and $\bU^{(t)}_j \leq -\frac{1}{\lambda_{t}}\log \frac{n}{2\beta}$.
    If there are more than one such indices, pick $(i,j)$ randomly. Encode $(i, j)$ as $m_t$.
    
\item[(2)] Let $\bV^{(t)}$ be an $n$-dimensional all-zero vector except $\bV^{(t)}_i = \frac{1}{\lambda_{t}}\log\frac{n}{2\beta}$
    and $\bV^{(t)}_j = -\frac{1}{\lambda_{t}}\log \frac{n}{2\beta}$.
    
\item[(3)] Let $\bU^{(t+1)} = \bU^{(t)} - \bV^{(t)}$.

\item[(4)] Set $\lambda_{t+1} = \frac{n}{n-2\log \frac{n}{2\beta}} \cdot \lambda_{t}$. 

\end{enumerate}
Here, $\beta>1$ is a tunable parameter. Similar to the algorithm in Section~\ref{sec:fixed_successive_refinement}, the reconstruction at $t$-th iteration would be $\bUh^{(t)} = \sum_{\tau=1}^t \bV^{(\tau)}$. We note that the encoder still communicates a sparse vector $\bV^{(t)}$ with two nonzero entries by sending $m_t=(i,j)$.  We give the pseudocode for SuRP in Appendix~\ref{algorithms_exponential_appendix}.

This coding scheme is equivalent to the original scheme in Section~\ref{sec:fixed_successive_refinement}, where $\frac{\lambda_{t}}{\lambda_{t+1}} = \frac{n-2\log \frac{n}{2\beta}}{n}$ for $1\leq t\leq L-1$ except the fact that the encoder does not do a search over a randomly generated codebook with exponential size, i.e., SuRP is practical. However, there is still an \emph{implicit} codebook $\cC^{(t)}$ at every iteration $t$, which consists of $n$-dimensional all-zero vectors except for two nonzero elements of values $\pm\frac{1}{\lambda_{t-1}}\log \frac{n}{2\beta}$.
The size of this codebook is $n(n-1)$ (not exponential anymore). Since these \emph{implicit} codebooks are not directly generated from the optimal marginal distribution in Lemma~\ref{rd_lap}, it is not obvious that SuRP is rate-distortion theoretic optimal. However, we prove the optimality under some criteria  in Section~\ref{zero_rate_opt}. % that it is rate-distortion theoretic optimal under certain criteria. 
   
We highlight that our scheme follows a bottom-up approach, in that sparsity in the reconstructed weights starts from $100\%$ at the first iteration and it decreases as the decoder receives new indices from the encoder (see Figure~\ref{fig:SURP_imagenet}(a)). This is similar to the progressive/hierarchical image compression techniques \citep{lewis1992image, rabbani2002jpeg2000}. Similarly, from Figure~\ref{fig:SURP_imagenet}(b), accuracy increases through the iterations. 

As a practical issue, when there is no index $i$ or $j$ such that $\bU_i^{(t)}\geq \frac{1}{\lambda_{t-1}} \log{\frac{n}{2 \beta}}$ or $\bU_j^{(t)}\leq - \frac{1}{\lambda_{t-1}} \log{\frac{n}{2 \beta}}$, the encoder re-estimates $\lambda$ and sends a refreshed value to the decoder. Obviously, these refreshments must be avoided to preserve optimality. We have seen in our experiments that this is a rare situation ($20$ refreshments in 20M iterations) and hence has a negligible effect on the overall optimality. In fact, we control the probability of this undesired situation (when there is need for refreshment) with the tunable parameter $\beta$. More precisely, the probability that all Laplacian random variables are smaller than $\frac{1}{\lambda}\log \frac{n}{2\beta}$ in magnitude (i.e., no index $i$ or $j$ found) is
\begin{align}
\begin{aligned}
\label{eq:prob}
& P \left [\max X_i < \frac{1}{\lambda}\log \frac{n}{2\beta} \mbox{ or }  \min X_i > -\frac{1}{\lambda}\log \frac{n}{2\beta} \right ]\\ 
& \leq P  \left [\max X_i < \frac{1}{\lambda}\log \frac{n}{2\beta}  \right ] +P  \left[\min X_i > -\frac{1}{\lambda}\log \frac{n}{2\beta}  \right] \\
& = 2\left(1 - \frac{1}{2}\frac{2\beta}{n}\right)^n \approx 2e^{-\beta}.
\end{aligned}
\end{align}
We set $\beta = \log n$ to bound this probability by $\frac{2}{n}$, which converges to $0$ as $n$ increases. We discuss the choice of $\beta$ in more detail with empirical results in Appendix~\ref{appedix_beta}. %\berivan{ablation study}
\begin{figure}[t]
%\vspace{-6mm}
    \centering %
        \subfigure[Sparsity. ]{\includegraphics[width=.23\textwidth]{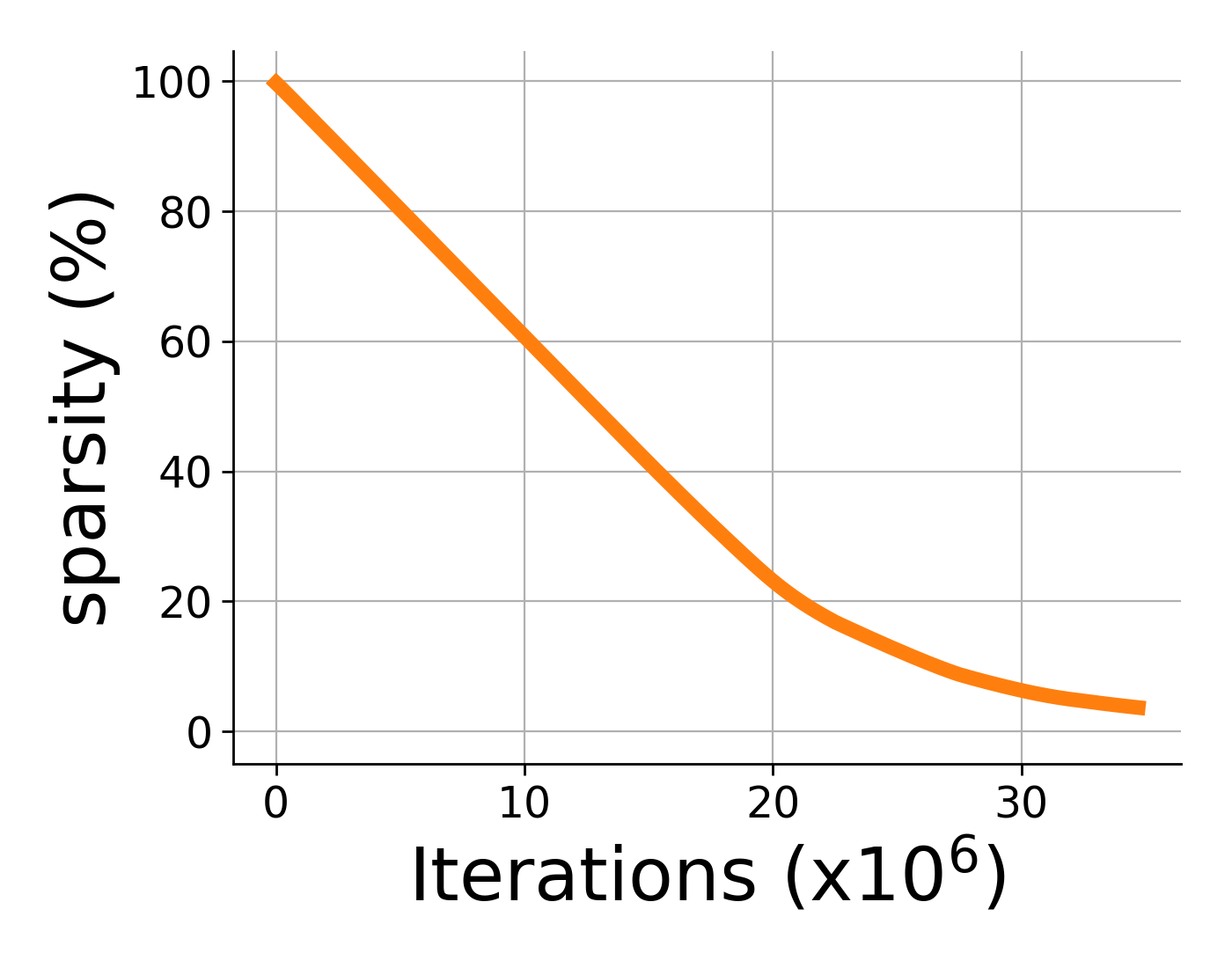}}
        \subfigure[Accuracy. ]{\includegraphics[width=.23\textwidth]{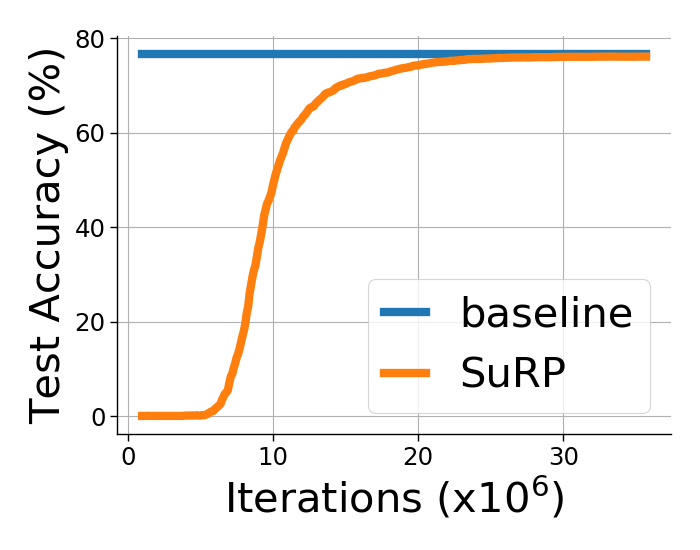}}
    \caption{Sparsity and accuracy of the reconstructed ResNet-50 on ImageNet during one cycle of SuRP. Iterations correspond to iterations running inside SuRP.}\label{fig:SURP_imagenet}
    %\vspace{-12mm}
   \end{figure}

\begin{rem}
From the extreme value theory, the maximum of $n$ Laplacian random variables concentrates near $\frac{1}{\lambda}\log \frac{n}{2}$,
which is the case of $\beta=1$.
Thus, one iteration of SuRP can be viewed as finding a near-maximum (and minimum) element. From this perspective, magnitude pruning can be viewed as a special case of SuRP. 
\end{rem}

\begin{rem}
SuRP guarantees $\|\bU^{(t)}\|_1 \geq \|\bV^{(t)}\|_1$ for all $t$. This implies that the magnitude of weights in $\bw$ is always larger than the magnitude of weights in $\bwh$. From Theorem~\ref{thm:l1bound}, we can say that the $\ell_1$ weight distortion of SuRP algorithm
is an upper bound to the NN model's output perturbation.
\label{rem_add_constraint}
\end{rem}
%%%%%%%%%%%%%%%%%%%%%%%%%%%%%%%%%%%%%%%%%%%%%%%%%%
%% Zero-rate optimality of SuRP
%%%%%%%%%%%%%%%%%%%%%%%%%%%%%%%%%%%%%%%%%%%%%%%%%%

\subsection{Zero-Rate Optimality of SuRP}
\label{zero_rate_opt}

In this section, we prove that SuRP is a zero-rate optimal compression algorithm. Given that SuRP uses an \emph{implicit} codebook of size $n(n-1)$ at each iteration, the rate is found as $R_n = \frac{\log n(n-1)}{n}$. We note that $R_n$ gets arbitrarily close to zero as $n$ increases. Moreover, the decrement in the distortion at each iteration is given as $D_n = \frac{2}{n \lambda } \log \frac{n}{2\beta_n}$, where $\beta_n=\beta$ as before. We start with the definition of \emph{zero-rate optimality}, which states that a sub-linear number of bits (in our case $\log n(n-1)$ nats) is being used optimally in the rate-distortion theoretic sense. 

\begin{definition}[Zero-rate optimality] A scheme with rate $R_n$, distortion decrement $D_n$, and distortion-rate function $D(\cdot)$, is zero-rate optimal if $\lim_{n\rightarrow\infty} R_n = 0$ and $ \lim_{n\rightarrow \infty} \frac{D_n}{R_n} = D'(0)$. 
\end{definition}
This implies that a zero-rate optimal scheme achieves the ``slope'' of the distortion-rate function at zero rate $R=0$. In the case of Laplacian source, this slope is $D'(0) = -\frac{1}{\lambda}$ since the distortion-rate function is $D(R) = \frac{1}{\lambda} e^{-R}$, which can be derived from the rate-distortion function in Lemma~\ref{rd_lap}. Finally, the following theorem states that a single iteration of SuRP is zero-rate optimal.

\setlength{\tabcolsep}{3pt}
\begin{table*}[!h]
%\vspace{-5mm}
\centering
\caption{Accuracy of VGG-16, ResNet-20, and DenseNet-121 on CIFAR-10. Results are averaged over five runs. %Global \citep{morcos2019one}, Unif. \citep{zhu2017prune}, Adap. \citep{stateofsparsity}, RiGL \citep{evci2020rigging}, LAMP \citep{lee2020deeper}. 
}
\resizebox{1.0\textwidth}{!}{
\begin{tabular}{clcccccccccc}
\toprule
&  \textbf{Pruning Ratio:}  & $93.12\%$   & $95.60\%$      & $97.19\%$    & $98.20\%$    & $98.85\%$    & $99.53\%$   & $99.70\%$   & $99.81\%$ & $99.88\%$ \\ \midrule
\centered{VGG-16}& \centered{Global  \\ Uniform  \\ Adaptive  \\RiGL  \\ LAMP  \\ SuRP (ours)   }
& \centered{$91.30$ \\ $91.47$  \\ $91.54$ \\ $92.34$ \\ $92.24$ \\ $\textbf{92.55}$} 
& \centered{$90.80$  \\ $90.78$ \\ $91.20$  \\ $91.99$  \\ $92.06$   \\ $\mathbf{92.13}$}
& \centered{$89.28$ \\ $88.61$ \\  $90.16$ \\ $91.66$ \\ $91.71$ \\ $\mathbf{91.95}$ }
&\centered{$85.55$ \\ $84.17$ \\  $89.44$ \\ $91.15$ \\ $91.66$ \\ $\mathbf{91.72}$ }
&\centered{$81.56$ \\ $55.68$ \\ $87.85$  \\ $90.55$ \\ $91.07$ \\ $\mathbf{91.21}$}
%& \centered{$54.58$ \\ $38.51$ \\ $86.53$ \\ $89.51$ \\ $90.49$ \\  $\mathbf{90.73}$}
& \centered{$41.91$ \\ $26.41$ \\ $84.84$ \\ $88.21$ \\ $89.64$ \\$\mathbf{90.65}$ }
& \centered{$31.93$ \\ $16.75$ \\ $82.41$ \\ $86.73$ \\ $88.75$   \\ $\mathbf{89.70}$}
&\centered{$21.87$  \\  $11.58$ \\ $74.54$ \\ $84.85$ \\  $87.07$ \\ $\mathbf{87.28}$}
&\centered{$11.72$ \\ $9.95$ \\$24.46$ \\$81.50$ \\ $84.90$ \\$\mathbf{85.04}$}
\\ \midrule
&  \textbf{Pruning Ratio:}  & $79.03\%$   & $86.58 \%$      & $91.41\%$   & $94.50 \%$    & $96.48\%$  & $97.75\%$    & $98.56\%$   & $99.41\%$  & $99.62\%$ \\ \midrule
\centered{ResNet-20} &\centered{Global  \\Uniform  \\ Adaptive  \\ RiGL   \\ LAMP  \\SuRP (ours)}
& \centered{$87.48$ \\ $87.24$ \\ $87.30$  \\ $87.63$ \\ $87.54$ \\$\mathbf{91.37}$ }
&\centered{$86.97$ \\$86.70$ \\ $87.00$ \\ $87.49$  \\ $87.12$  \\$\mathbf{90.44}$}
& \centered{ $86.29$ \\ $86.09$  \\  $86.27$ \\ $86.83$  \\$86.56$  \\ $\mathbf{89.00}$ }
& \centered{$85.02$ \\ $84.53$ \\ $85.00$ \\ $85.84$ \\ $85.64$ \\  $\mathbf{88.87}$}
& \centered{$83.15$ \\ $82.05$ \\ $83.23$ \\ $84.08$ \\ $84.18$  \\ $\mathbf{87.05}$ }
& \centered{$80.52$ \\ $77.19$ \\ $80.40$ \\ $81.76$  \\  $81.56$  \\  $\mathbf{83.98}$ }
& \centered{$76.28$ \\  $64.24$ \\  $76.40$ \\ $78.70$  \\ $78.63$ \\ $\mathbf{79.00}$  }
%& \centered{ $70.69$ \\ $47.97$  \\  $69.31$ \\  $74.40$ \\$74.20$  \\ $\mathbf{74.86}$ }
& \centered{$47.47$ \\ $20.45$ \\  $52.06$  \\  $66.42$ \\ $67.01$  \\ $\mathbf{70.64}$}
& \centered{$12.02$ \\ $13.35$ \\ $20.19$ \\ $50.90$\\ $51.24$\\ $\mathbf{54.22}$}
\\ \midrule
&  \textbf{Pruning Ratio:}  & $94.50\%$   & $95.60 \%$      & $96.48\%$   & $97.18 \%$    & $97.75\%$  & $98.20\%$    & $98.56\%$   & $99.08\%$  & $99.26\%$  \\ \midrule
\centered{DenseNet-121} &\centered{Global  \\Uniform  \\ Adaptive  \\ RiGL   \\ LAMP  \\SuRP (ours)}
& \centered{$90.16$ \\ $90.24$  \\ $90.25$  \\ $90.21$  \\ $90.89$ \\$\mathbf{91.42}$ }
&\centered{$89.52$  \\$89.50$   \\ $89.70$  \\ $89.79$  \\ $90.11$  \\$\mathbf{90.75}$}
& \centered{ $88.83$\\ $88.44$  \\  $89.03$ \\ $88.92$  \\$89.72$  \\ $\mathbf{90.30}$ }
& \centered{$88.00$ \\ $87.94$  \\ $88.22$  \\ $88.20$  \\ $89.12$ \\  $\mathbf{89.62}$}
& \centered{$86.85$ \\ $86.83$  \\ $87.40$  \\ $87.25$  \\ $88.39$  \\ $\mathbf{88.77}$ }
& \centered{$85.32$ \\ $85.00$  \\ $86.26$  \\ $86.22$  \\  $87.75$  \\  $\mathbf{88.06}$ }
& \centered{$77.68$ \\  $82.16$ \\  $84.55$ \\ $84.11$  \\ $86.53$ \\ $\mathbf{86.71}$  }
%& \centered{ $45.30$ \\ $70.13$ \\  $81.87$ \\ $81.82$  \\$85.13$  \\ $\mathbf{85.34}$ }
& \centered{$49.65$ \\ $66.46$  \\  $69.25$ \\ $59.06$  \\ $82.92$  \\ $\mathbf{83.18}$}
& \centered{$20.96$ \\ $48.71$  \\ $58.91$  \\ $59.07$  \\ $79.23$\\ $\mathbf{79.45}$}
\\ \midrule
&  \textbf{Pruning Ratio:}  & $59.00\%$   & $73.80 \%$      & $83.20\%$   & $89.30 \%$    & $93.13\%$  & $95.60\%$    & $97.18\%$   & $98.20\%$   & $99.26\%$ \\ 
\midrule
\centered{EfficientNet-B0} &\centered{Global \\Uniform  \\ Adaptive  \\ RiGL  \\ LAMP  \\SuRP (ours)}
& \centered{$89.66$ \\ $88.99$  \\ $89.18$  \\ $89.54$  \\ $89.52$ \\$\mathbf{90.96}$ }
&\centered{$89.55$  \\$88.26$   \\ $88.03$  \\ $90.09$  \\ $89.95$  \\$\mathbf{90.94}$}
& \centered{$88.80$ \\ $86.48$  \\  $86.71$ \\ $90.01$  \\$89.97$  \\ $\mathbf{90.89}$ }
& \centered{$87.64$ \\ $83.40$  \\ $84.16$  \\ $89.62$  \\ $90.21$ \\  $\mathbf{90.75}$}
& \centered{$84.36$ \\ $23.65$  \\ $36.64$  \\ $88.82$  \\ $89.91$  \\ $\mathbf{90.31}$ }
& \centered{$79.25$ \\ $10.83$  \\ $10.45$  \\ $87.08$  \\  $89.79$  \\  $\mathbf{90.08}$ }
& \centered{$11.09$ \\  $10.00$ \\  $10.00$ \\ $84.72$  \\ $89.30$ \\ $\mathbf{89.88}$  }
& \centered{$10.62$ \\ $10.00$ \\  $10.19$ \\ $81.53$  \\$88.51$  \\ $\mathbf{89.02}$ }
%5& \centered{$10.00$ \\ $10.00$  \\  $10.00$ \\ $51.31$  \\ $86.79$  \\ $\mathbf{87.80}$}
& \centered{$10.00$ \\ $10.00$  \\ $10.00$  \\ $13.40$  \\ $65.76$\\ $\mathbf{70.76}$}
\\
\bottomrule
\\
\end{tabular}
}
\label{tab:experiment_cifar}
\vspace{-2mm}
\end{table*}

\begin{thm} An iteration of SuRP is zero-rate optimal if $\lim_{n\rightarrow\infty}$ $\frac{\log 2\beta_n}{\log n(n-1)}=0$ holds.
\label{thm_zero_rate}
\end{thm}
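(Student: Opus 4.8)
The plan is to reduce the claim to an asymptotic computation, because the two operational quantities have already been pinned down just before the theorem: one iteration of SuRP transmits an ordered pair of distinct indices $(i,j)$ out of the implicit codebook of size $n(n-1)$, so its rate is $R_n=\frac{\log n(n-1)}{n}$ nats per symbol, and it lowers the per-symbol $\ell_1$ distortion by $D_n=\frac{2}{n\lambda}\log\frac{n}{2\beta_n}$. First I would re-derive why $D_n$ has exactly this value: in step~(2) the nonzero coordinates of $\bV^{(1)}$ are $\pm\frac{1}{\lambda}\log\frac{n}{2\beta_n}$ and, by the selection rule in step~(1), they are subtracted from entries $\bU^{(1)}_i,\bU^{(1)}_j$ that already exceed $\frac{1}{\lambda}\log\frac{n}{2\beta_n}$ in magnitude with the matching sign; hence each of the two affected terms in $\frac1n\sum|{\cdot}|$ decreases by exactly $\frac1n\cdot\frac{1}{\lambda}\log\frac{n}{2\beta_n}$, giving the stated $D_n$ deterministically on the event that such indices exist --- an event whose complement has probability $O(e^{-\beta_n})$ by Eq.~\ref{eq:prob}, hence $O(1/n)$ for the choice $\beta_n=\log n$.

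Next I would record the distortion--rate function: inverting $R(D)=-\log(\lambda D)$ from Lemma~\ref{rd_lap} on $0\le D\le 1/\lambda$ gives $D(R)=\frac1\lambda e^{-R}$, so $D'(R)=-\frac1\lambda e^{-R}$ and $D'(0)=-\frac1\lambda$. The first requirement of zero-rate optimality is immediate, since $0< R_n=\frac{\log n(n-1)}{n}\le \frac{2\log n}{n}\to 0$.

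It then remains to evaluate $\lim_{n\to\infty} D_n/R_n$. Writing $\log\frac{n}{2\beta_n}=\log n-\log 2\beta_n$,
\[
\frac{D_n}{R_n}=\frac{\frac{2}{n\lambda}\bigl(\log n-\log 2\beta_n\bigr)}{\frac{1}{n}\log n(n-1)}=\frac{2}{\lambda}\left(\frac{\log n}{\log n(n-1)}-\frac{\log 2\beta_n}{\log n(n-1)}\right).
\]
Since $\log n(n-1)=2\log n+\log(1-\tfrac1n)=2\log n+o(1)$, the first term in parentheses tends to $\tfrac12$, and the second tends to $0$ --- this is precisely the hypothesis $\lim_{n\to\infty}\frac{\log 2\beta_n}{\log n(n-1)}=0$. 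Hence $\frac{D_n}{R_n}\to \frac{2}{\lambda}\cdot\frac12=\frac1\lambda$, equivalently $-\frac{D_n}{R_n}\to -\frac1\lambda=D'(0)$, so the iteration attains the slope of the distortion--rate curve at $R=0$; together with $R_n\to 0$ this is exactly zero-rate optimality.

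I do not anticipate a genuine obstacle in this final step; it is bookkeeping on top of the structural facts argued earlier (namely that a SuRP step behaves like the Lemma~\ref{rd_lap} refinement step, so that $\bU^{(2)}$ is again i.i.d.\ Laplacian and $(R_n,D_n)$ is the honest rate--distortion pair of the step). The only two places needing a little care are establishing the exact value of $D_n$ by the threshold/sign argument above rather than through an expectation, and converting the hypothesis on $\log 2\beta_n/\log n(n-1)$ into the negligibility of the $\log 2\beta_n/\log n$ term, which is routine once $\log n(n-1)\sim 2\log n$ is noted.
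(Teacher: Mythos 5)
Your proposal is correct and follows essentially the same route as the paper: write $\log\frac{n}{2\beta_n}=\log n-\log 2\beta_n$, note $\log n(n-1)\sim 2\log n$ so the first term contributes $-\frac{1}{\lambda}$ to the (signed) ratio, and use the hypothesis to kill the $\log 2\beta_n$ term, matching $D'(0)=-\frac{1}{\lambda}$ from $D(R)=\frac{1}{\lambda}e^{-R}$. The extra material you include (the threshold/sign justification of the value of $D_n$ and the explicit sign bookkeeping for the "decrement" convention) is not in the paper's proof but is consistent with its setup.
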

%\noa{we can move the proof to appendix if it is hard to fit our draft in 8 pages}
%The proof of Theorem~\ref{thm_zero_rate} is given in Appendix~\ref{thm_zero_rate_proof_appendix}.
\begin{proof}
In an iteration of SuRP, where $R_n = \frac{\log n(n-1)}{n}$ and %$D_n = \frac{2}{n \lambda}\log \frac{n}{2\beta_n}$, we have
\begin{align*}
\frac{D_n} {R_n} & = -\frac{\frac{2}{\lambda}\log \frac{n}{2\beta_n}}{\log n(n-1)} \\
& = -\frac{1}{\lambda}\frac{\log n^2}{\log n(n-1)} + \frac{1}{\lambda}\frac{2\log 2\beta_n}{\log n(n-1)}.
\end{align*}
If $\lim_{n\rightarrow\infty}$ $\frac{\log 2\beta_n}{\log n(n-1)}=0$, it is clear that $\frac{D_n}{R_n}$ converges to $D'(0) = -\frac{1}{\lambda}$ as $n$ increases. Therefore, SuRP is zero-rate optimal under the condition that $\lim_{n\rightarrow\infty}$ $\frac{\log 2\beta_n}{\log n(n-1)}=0$.
\end{proof}
In Section~\ref{sec:surp}, we choose $\beta_n=\log n$ to keep the probability in Eq.~\ref{eq:prob} small. With this choice of $\beta_n$, $\lim_{n\rightarrow\infty}$ $\frac{\log 2\beta_n}{\log n(n-1)}=0$ holds. Therefore, from Theorem~\ref{thm_zero_rate}, our implementation of SuRP is indeed zero-rate optimal. 

\begin{rem}
In pure information-theoretic compression setting (main concern is not NN compression),
similar zero-rate optimal schemes were proposed for Gaussian source under mean squared error \citep{venkataramanan2014lossy, no2016rateless}.%, which is also successively refinable. They iteratively applied zero-rate optimal schemes and achieved the rate-distortion limit using a special property of Gaussian random variables.
\end{rem}
\section{Experiments}
\label{experiments}

In this section, we empirically investigate the performance of SuRP compared to recent pruning strategies in terms of accuracy-sparsity tradeoff. % and (2) improvements in the communication efficiency of federated learning (FL) when gradients are compressed with SuRP. %We emphasize that the main contribution of our paper is to bridge the gap between data compression and NN compression through a rate-distortion theoretic analysis of NN compression problem. 
We emphasize that the main contribution of our paper is to provide an information-theoretic justification for pruning. SuRP is designed solely to show that an algorithm derived with an information-theoretic approach indeed outputs a pruned model, as suggested by our findings. This also provides theoretical support for the recent success of pruning strategies.     

%For ease of implementation, instead of reconstructing two weights larger than $\frac{1}{\lambda} \log{\frac{n}{2 \beta}}$ in magnitude at each iteration, we reconstruct one weight larger than $\frac{1}{\lambda} \log{\frac{n}{\beta}}$. This is equivalent to the algorithm described in Section~\ref{method2_SuRP}, and the zero-rate optimality still holds (see Appendix\ref{algorithms_exponential_appendix} for details). 
%\noa{we can separately mention the FL setting (MNIST Caffe) since we haven't mentioned any of FL after introduction}
For our NN compression experiments, we consider two image datasets: CIFAR-10 \citep{krizhevsky2009learning} and ImageNet \citep{imagenet_cvpr09}. For CIFAR-10, we use four architectures: VGG-16 \citep{vgg}, ResNet-20 \citep{he2016deep}, DenseNet-121 \citep{iandola2014densenet}, and EfficientNet-B0 \citep{tan2019efficientnet}. %DenseNet-121 \citep{iandola2014densenet}, and EfficientNet-B0 \citep{tan2019efficientnet}. 
For ImageNet, we use ResNet-50 \citep{he2016deep, NEURIPS2019_9015}. We give additional details on model architectures and hyperparameters in Appendix~\ref{hyperparams_appendix}. We present experimental results averaged over 3-5 runs (see Appendix~\ref{experiments_app} for complete results). 

%We consider three image datasets: CIFAR-10 \cite{krizhevsky2009learning} and ImageNet \cite{imagenet_cvpr09} for NN compression and MNIST \cite{lecun2010mnist} for federated learning experiments. For MNIST, we use LeNet-5-Caffe. For CIFAR-10, we use four architectures: VGG-16 \cite{vgg}, ResNet-20 \cite{he2016deep}, DenseNet-121 \cite{iandola2014densenet}, and EfficientNet-B0 \cite{tan2019efficientnet}. For ImageNet, we use ResNet-50 \cite{he2016deep, NEURIPS2019_9015}. We give additional details on model architectures and hyperparameters in Appendix~\ref{hyperparams_appendix}. We present experimental results averaged over 3-5 runs (see Appendix~\ref{experiments_app} for complete results). 

\textbf{NN Compression/Pruning:}
In Tables~\ref{tab:experiment_cifar} and~\ref{tab:experiment_imagenet}, we compare our scheme with the recent pruning papers. We apply iterative pruning, meaning that we apply SuRP in repeating cycles (see Appendix~\ref{experiments_app} for details). As baselines, we consider Global \citep{morcos2019one}, Uniform \citep{zhu2017prune}, and Adaptive \citep{stateofsparsity} pruning techniques and LAMP \citep{lee2020deeper}. Additionally, we include comparisons to recent works on weight rewinding and dynamic sparsity, in particular SNIP \citep{snip}, DSR \citep{DSR2019}, SNFS \citep{SNFS2019}, and RiGL \citep{evci2020rigging}. 

We present the performance of pruned VGG-16, ResNet-20, and DenseNet-121 architectures on CIFAR-10 in Table~\ref{tab:experiment_cifar} and ResNet-50 on ImageNet in Table~\ref{tab:experiment_imagenet} . As can be seen from Table~\ref{tab:experiment_cifar}, SuRP outperforms prior work in all sparsity levels. From Table~\ref{tab:experiment_imagenet}, SuRP and Adaptive pruning perform similarly (with $\pm 0.06 \%$ difference), and they both outperform other baselines. 

\setlength{\tabcolsep}{3pt}
\begin{table}[!h]
%\vspace{-5mm}
\centering
\caption{Accuracy of ResNet-50 on ImageNet ($3$ runs). %Adaptive \citep{stateofsparsity}, SNIP \citep{snip}, DSR \citep{DSR2019}  ,  SNFS \citep{SNFS2019},  RiGL \citep{evci2020rigging} , LAMP \citep{lee2020deeper}. %Results are averaged over three runs.
}
\resizebox{0.65\columnwidth}{!}{
\begin{tabular}{lcc}
\toprule
\textbf{Pruning Ratio:}             & $80\%$   & $90\%$ \\ \midrule
Adaptive    & $75.60$  & $73.90$ \\
SNIP                    & $72.00$  & $67.20$\\ 
DSR                  & $73.30$  & $71.60$ \\
SNFS                & $74.90$  & $72.90$\\ 
RiGL         & $74.60$  & $72.00$\\ 
LAMP         & $74.96$  & $73.22$\\ 
SuRP (ours)                          & $\mathbf{75.54}$      & $\mathbf{73.95}$\\ \bottomrule
\end{tabular}}
\label{tab:experiment_imagenet}
\vspace{-2mm}
%\end{table}
\end{table}

We provide a comparison for lower pruning ratios in Appendix~\ref{experiments_app}.
%\setlength{\tabcolsep}{3pt}
%\begin{table*}[!h]
%%\vspace{-5mm}
%\centering
%\caption{Accuracy of ResNet-50 on ImageNet. Results are %averaged over three runs.}
%\resizebox{0.82\textwidth}{!}{
%\begin{tabular}{cccccccc}
%\toprule
%\textbf{Pruning Ratio}             &  Adaptive %\cite{stateofsparsity}    & SNIP \cite{snip} & DSR %\cite{DSR2019}  & SNFS \cite{SNFS2019}   & RiGL %\cite{evci2020rigging}  & LAMP \cite{lee2020deeper}  & SuRP %(ours)   \\ \midrule
% $80\%$                         & $75.60$  & $72.00$ & $73.30$ %& $74.90$ & $74.60$ & $74.96$ & $\mathbf{75.54}$ \\
% $90\%$                         & $73.90$  & $67.20$ & $71.60$ %& $72.90$ & $72.00$ & $73.22$ & $\mathbf{73.95}$ \\ \bottomrule
%\end{tabular}}
%\label{tab:experiment_imagenet}
%\vspace{-2mm}
%%\end{table}
%

%\input{sections/06-discussion}
\section{Discussion and Conclusion}
\label{conclusion}

In this work, we connected two lines of research, namely, data compression and NN compression. We investigated the theoretical tradeoff between the compression ratio and output perturbation of NN models, and found out that the rate-distortion theoretic formulation introduces a theoretical foundation for pruning. Guided by this, we developed a NN compression algorithm that outputs a pruned model and outperforms prior work. 

We note that our algorithm SuRP has an additional advantage in optimizing the bitrate of the model thanks to the rate-distortion theoretic basis of our approach. In particular, the decoder has only access to a list of indices, and these indices represent the whole (compressed) model -- more efficiently than describing the precise values of surviving weights. However, our current implementation does not exploit this efficiency to the full extent due to retraining steps after each pruning iteration. Like many, we will also look for ways to prune NN models without a retraining step afterward. That way, SuRP can be improved to provide a better accuracy-bitrate tradeoff, together with the already demonstrated sparsity-accuracy gain. We give more details on this and share experimental results in Appendices~\ref{bit_rate_appendix} and~\ref{experiments_app}. Finally, to give an idea about the bitrate efficiency of SuRP, we apply it for compressing gradients in a federated learning setting. Since the compressed gradients are not exposed to fine-tuning (like retraining in pruning), SuRP provides a substantial improvement on the bitrate compared to prior work. We elaborate more on this in the next paragraph.

%\setlength{\tabcolsep}{7pt}
%\begin{figure}
%\parbox{.37\linewidth}{
%\resizebox{0.37\textwidth}{!}{
%\begin{tabular}{lcc}
%\toprule
%\textbf{Pruning Ratio:}             & $80\%$   & $90\%$  \\ %\midrule
%Adaptive \cite{stateofsparsity}   & $75.60$  & $73.90$   \\
%SNIP \cite{snip}                   & $72.00$  & $67.20$ \\ 
%DSR \cite{DSR2019}                 & $73.30$  & $71.60$ \\
%SNFS \cite{SNFS2019}               & $74.90$  & $72.90$\\ 
%RiGL \cite{evci2020rigging}        & $74.60$  & $72.00$\\ 
%LAMP \cite{lee2020deeper}        & $74.96$  & $73.22$\\ 
%SuRP (ours)                          & $\mathbf{75.54}$      & %$\mathbf{73.93}$    \\ \bottomrule
%\\
%\end{tabular}}

%}
%\parbox{.60\linewidth}{
%\resizebox{0.60\textwidth}{!}{
%        \includegraphics[width=.60\textwidth]{figures/imagenet_p%runing.png}%}
%        \includegraphics[width=.60\textwidth]{figures/imagenet_d%ownstream_test_acc.png}%}
%    }}
%\caption{\textbf{(left)} Accuracy of ResNet-50 on ImageNet. %Results are averaged over three runs. \textbf{(right)} Sparsity %and accuracy of the reconstructed ResNet-50 on ImageNet during %one cycle of SuRP. Iterations correspond to the iterations %running inside SuRP.}\label{tab:imagenet}
%\vspace{-5mm}
%\end{figure}

\textbf{Compression for Federated Learning (FL):} FL is a distributed training setting where edge devices are responsible for doing local training and sending local gradients to a central server \citep{kairouz2019advances}. Given the resource limitations of edge devices, gradient communication is a significant bottleneck in FL, and gradient compression is crucial \citep{federated2}. We show in Appendix~\ref{federated_learning_appendix} that Laplacian distribution is a good fit for NN gradients. Therefore, SuRP is applicable to this problem as well. Our preliminary experiments with LeNet-5-Caffe on MNIST \citep{lecun2010mnist} compare SuRP with DGC \citep{lin2017deep} and rTop-k \citep{barnes2020rtop}. We compute the communication budget for prior work by assuming a naive encoding with $k(\log n+32)$ bits ($n$ is the model size)
since no other method is provided.
%\noa{I think using bold here is bit risky since we may not want to emphasize the FL result too much.
%Also, I think we can mention FL results in Discussion and Future work section}
With the same sparsity ratio $99.9 \%$, DGC achieves $98.5\%$ accuracy with \textbf{$\mathbf{2.05}$KB} of budget, rTop-$k$ achieves $99.1 \%$ accuracy with \textbf{$\mathbf{2.05}$KB} of budget, and SuRP achieves $99.1\%$ accuracy with \textbf{$\mathbf{218}$B}  of budget. Thus, SuRP provides $10\times$ times improvement in the gradients' compression rate while achieving the same accuracy as rTop-$k$.

\textbf{Limitations and Broader Impact:} When we evaluated our strategy, we only considered accuracy as a metric. However, compression might have an impact on other properties of the model as well, such as fairness, as pointed out by \cite{hooker2020characterising}. %Like many NN compression works, our distortion function does not address the potential disproportionate effects of compression on different subgroups of data. 
We agree that this issue deserves more attention from the community.

The codebase for this work is open-sourced
at \url{https://github.com/BerivanIsik/SuRP}.

 %In particular, we explained the compressibility of NN models via rate-distortion theory. Although our initial goal was to understand the theoretical tradeoff between the compression ratio and output perturbation, we found out that the rate-distortion theoretic formulation of the problem also introduces a theoretical foundation for pruning. \
%noa{(how about ``We investigated the theoretical tradeoff between the compression ratio and output perturbation, and we found out that the rat-distortion theoretic formulation ...'')} 

%\berivan{a paragraph to emphasize coarsest to finest approach and similarity to standard image compression techniques. here and intro} 
%\textbf{Limitations:} One limitation of the proposed approach is i.i.d.\ source assumption. While previous score-based pruning approaches make the same assumption implicitly, empirical results indicate that NN weights are correlated, hence not independent. In future work, we plan to formalize the problem without i.i.d.\ assumption while also hypothesizing about the source distribution more rigorously. 

\section{Acknowledgement}
This work was supported in part by a Sony Stanford Graduate Fellowship, a National Science Foundation (NSF) award, a Meta (formerly Facebook) resarch award, 
and a National Research Foundation of Korea (NRF) grant funded by the Korea government (MSIT) (No.\ 2021R1F1A105956711).

\newpage
\bibliography{aistats}
\onecolumn \makesupplementtitle
%\aistatstitle{Appendix}
\section*{}
\label{appendix}
\renewcommand{\thesubsection}{\Alph{subsection}}

\subsection{Proof of Theorem~\ref{thm:l1bound}}
\label{distortion_appendix} 
In this section, we provide the proof of Theorem~\ref{thm:l1bound}.
The fully connected $d$-layer NN model with 1-Lipschitz activations $\sigma(\cdot)$ is given by
\begin{align*}
f(\bx; \bw) = \bw^{(d)}\sigma(\bw^{(d-1)}\sigma(\cdots \bw^{(2)}\sigma(\bw^{(1)}\bx))).
\end{align*}
We let $\bw^{(1:i)} = \{\bw^{(1)}, \ldots, \bw^{(i)} \}$ for $1\leq i\leq d$ where $\bw^{(1:d)} = \bw$.
Furthermore, we define the first $i$ layer of the network by
\begin{align*}
f(\bx; \bw^{(1:i)}) = \bw^{(i)}\sigma(\bw^{(i-1)}\sigma(\cdots \bw^{(2)}\sigma(\bw^{(1)}\bx))).
\end{align*}

Then, the output perturbation is bounded by
\begin{align}
& \|f(\bx; \bw^{(1:d)}) - f(\bx; \bwh^{(1:d)})\|_1\nonumber\\
=& \|\bw^{(d)} \sigma(f(\bx; \bw^{(1:d-1)})) - \bwh^{(d)}\sigma(f(\bx; \bwh^{(1:d-1)}))\|_1\nonumber\\
\leq& \|\bw^{(d)} \sigma(f(\bx; \bw^{(1:d-1)})) - \bwh^{(d)}\sigma(f(\bx; \bw^{(1:d-1)}))\|_1\nonumber\\
& + \|\bwh^{(d)} \sigma(f(\bx; \bw^{(1:d-1)})) - \bwh^{(d)}\sigma(f(\bx; \bwh^{(1:d-1)}))\|_1\label{eq:triangle}\\
\leq& \|\bw^{(d)} - \bwh^{(d)}\|_1 \cdot \| \sigma(f(\bx; \bw^{(1:d-1)}))\|_1 
+ \|\bwh^{(d)}\|_1  \cdot \| \sigma(f(\bx; \bw^{(1:d-1)})) - \sigma(f(\bx; \bwh^{(1:d-1)}))\|_1\label{eq:l1-norm product}\\
\leq& \|\bw^{(d)} - \bwh^{(d)}\|_1 \cdot \| f(\bx; \bw^{(1:d-1)})\|_1 + \|\bwh^{(d)}\|_1 \cdot  \| f(\bx; \bw^{(1:d-1)}) - f(\bx; \bwh^{(1:d-1)})\|_1\label{eq:lipshitz}\\
\leq& \|\bw^{(d)} - \bwh^{(d)}\|_1 \cdot  \prod_{l=1}^{d-1} \| \bw^{(l)}\|_1 \cdot \|\bx \|_1+ \|\bwh^{(d)}\|_1 \cdot \| f(\bx; \bw^{(1:d-1)}) - f(\bx; \bwh^{(1:d-1)})\|_1\label{eq:bound of f}\\
\leq& \|\bw^{(d)} - \bwh^{(d)}\|_1 \cdot  \prod_{l=1}^{d-1} \| \bw^{(l)}\|_1 + \|\bwh^{(d)}\|_1 \cdot \| f(\bx; \bw^{(1:d-1)}) - f(\bx; \bwh^{(1:d-1)})\|_1\label{eq:norm of x}\\
\leq& \|\bw^{(d)} - \bwh^{(d)}\|_1 \cdot \prod_{l=1}^{d-1} \| \bw^{(l)}\|_1 + \|\bw^{(d)}\|_1 \cdot \| f(\bx; \bw^{(1:d-1)}) - f(\bx; \bwh^{(1:d-1)})\|_1\label{eq:w leq what}
\end{align}
where Eq.~\ref{eq:triangle} is due to triangle inequality, and Eq.~\ref{eq:l1-norm product} holds from the property of $\ell_1$-norm (and induced norm).
Eq.~\ref{eq:lipshitz} is from 1-Lipshitzness of activation $\sigma(\cdot)$, i.e., $\|\sigma(\bx)\|_1 \leq \|\bx\|_1$.
Eq.~\ref{eq:bound of f} holds from the following lemma.
\begin{lem}
For all $1\leq i\leq d$, we have
$\|f(\bx;\bw^{(1:i)})\|_1\leq \prod_{j=1}^i \|\bw^{(j)}\|_1 \cdot \|\bx\|_1$.
\end{lem}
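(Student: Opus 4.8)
The plan is a straightforward induction on $i$, peeling off one layer at a time and using exactly the two ingredients that already appeared in the main computation: submultiplicativity of the $\ell_1$-induced matrix norm, and the pointwise bound $|\sigma(t)| = |\sigma(t) - \sigma(0)| \le |t|$ coming from $1$-Lipschitzness together with $\sigma(0) = 0$.

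For the base case $i = 1$ we have $f(\bx; \bw^{(1:1)}) = \bw^{(1)}\bx$, so $\|f(\bx;\bw^{(1:1)})\|_1 \le \|\bw^{(1)}\|_1 \cdot \|\bx\|_1$ is immediate from the definition of the induced norm. For the inductive step, suppose the claim holds for $i-1$. By definition $f(\bx; \bw^{(1:i)}) = \bw^{(i)}\,\sigma\!\big(f(\bx; \bw^{(1:i-1)})\big)$, hence
\begin{align*}
\|f(\bx; \bw^{(1:i)})\|_1 &\le \|\bw^{(i)}\|_1 \cdot \big\|\sigma\!\big(f(\bx; \bw^{(1:i-1)})\big)\big\|_1 \\
&\le \|\bw^{(i)}\|_1 \cdot \big\|f(\bx; \bw^{(1:i-1)})\big\|_1 \\
&\le \|\bw^{(i)}\|_1 \cdot \prod_{j=1}^{i-1}\|\bw^{(j)}\|_1 \cdot \|\bx\|_1 = \prod_{j=1}^{i}\|\bw^{(j)}\|_1 \cdot \|\bx\|_1,
\end{align*}
where the first inequality is submultiplicativity, the second applies $\|\sigma(\bz)\|_1 \le \|\bz\|_1$ coordinatewise (from $1$-Lipschitzness and $\sigma(0)=0$), and the third is the inductive hypothesis. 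Taking $i = d$ (or simply stopping at any $i$) gives the stated bound.

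I do not anticipate a genuine obstacle here; the only point that needs a word of care is justifying $\|\sigma(\bz)\|_1 \le \|\bz\|_1$ for vector inputs, i.e.\ that applying $\sigma$ entrywise is a contraction in $\ell_1$, which follows by summing the scalar inequality $|\sigma(z_k)| \le |z_k|$ over coordinates. This is the same fact already invoked implicitly in Eq.~\eqref{eq:lipshitz} of the main argument, so the lemma is really just the recursive bookkeeping that makes that step legitimate.
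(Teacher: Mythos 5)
Your proof is correct and matches the paper's argument: the paper likewise peels off the outermost layer using submultiplicativity of the induced $\ell_1$ norm together with $\|\sigma(\bz)\|_1 \le \|\bz\|_1$ (from $1$-Lipschitzness and $\sigma(0)=0$), and then "keeps applying the same inequality," which is exactly your induction written informally. Your explicit base case and coordinatewise justification of the contraction step are just a more careful rendering of the same route.
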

\begin{proof}
From the property of $\ell_1$-norm, we have
\begin{align}
\|f(\bx;\bw^{(1:i)})\|_1\leq& \|\bw^{(i)}\|_1 \cdot \|\sigma(f(\bx;\bw^{(1:i-1)}))\|_1\\
\leq& \|\bw^{(i)}\|_1 \cdot \|f(\bx;\bw^{(1:i-1)})\|_1
\end{align}
where the last inequality is due to 1-Lipshitzness of $\sigma$.
Then, we can keep applying the same inequality, which concludes the proof.
\end{proof}
Eq.~\ref{eq:norm of x} follows by the constraint $\|\bx\|_1 \leq 1$ in Theorem~\ref{thm:l1bound}. Finally Eq.~\ref{eq:w leq what} is from the assumption $\|\bwh^{(l)}\|_1\leq \|\bw^{(l)}\|_1$ for all $1\leq l\leq d$.

Thus, we have
\begin{align}
&\left(\prod_{l=1}^d \frac{1}{\|\bw^{(l)}\|_1}\right) \|f(\bx; \bw^{(1:d)}) - f(\bx; \bwh^{(1:d)})\|_1\nonumber\\
&\leq \frac{\|\bw^{(d)} - \bwh^{(d)}\|_1}{\|\bw^{(d)}\|_1}
+\left(\prod_{l=1}^{d-1} \frac{1}{\|\bw^{(l)}\|_1}\right) \|f(\bx; \bw^{(1:d-1)}) - f(\bx; \bwh^{(1:d-1)})\|_1.
\end{align}
We can repeat the same procedure, and get
\begin{align}
&\left(\prod_{l=1}^d \frac{1}{\|\bw^{(l)}\|_1}\right) \|f(\bx; \bw^{(1:d)}) - f(\bx; \bwh^{(1:d)})\|_1 \leq \sum_{l=1}^d \frac{\|\bw^{(l)} - \bwh^{(l)}\|_1}{\|\bw^{(l)}\|_1}.
\end{align}
This completes the proof.

\subsection{Modified Theorem 1}
\label{sec:modifiedTheorem1}

In this section, we provide a symmetric version of Theorem~\ref{thm:l1bound},
which essentially implies the same upper bound on the output perturbation without requiring the additional condition of $\|\bw\|_1 \geq \|\bwh\|_1$.

\begin{thm}\label{thm:modifiedTheorem1}
Suppose $f(\cdot;\bw)$ is a fully-connected NN model with $d$ layers and 1-Lipschitz activations $\sigma(\cdot)$ such that $\sigma(0)=0$, e.g., ReLU. Let $\bwh$ be the reconstructed weights (after compression) where all layers are subject to compression. Then, we have the following bound on the output perturbation:
\begin{align}
\begin{aligned}
\sup_{\|x\|_1\leq 1} \|f(\bx, \bw) - f(\bx, \bwh)\|_1
\leq \left(\sum_{l=1}^d \frac{\|\bw^{(l)} - \bwh^{(l)}\|_1}{\max\{\|\bwh^{(l)}\|_1, \|\bw^{(l)}\|_1\}}\right)
\left(\prod_{k=1}^d \max\{\|\bwh^{(k)}\|_1, \|\bw^{(k)}\|_1\}\right).
\label{eq:modifiedl1bound}
\end{aligned}
\end{align}
%\noa{or we can say
%\begin{align}
%\begin{aligned}
%\sup_{\|x\|_1\leq 1} \frac{ \|f(\bx, \bw) - f(\bx, %\bwh)\|_1}{\prod_{k=1}^d \|\bw^{(k)}\|_1}
%\\ \leq \left(\sum_{l=1}^d \frac{\|\bw^{(l)} - %\bwh^{(l)}\|_1}{\|\bw^{(l)}\|_1}\right) 
%\label{eq:l1bound}
%\end{aligned}
%\end{align}
%which implies ``normalized output perturbation'' is bounded by %``normalized wight distortion''.
%Then, we do not have to explain why we ignore the last term in %Eq.~\ref{eq:l1bound}.
%}
\end{thm}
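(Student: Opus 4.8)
The plan is to re-run the proof of Theorem~\ref{thm:l1bound} almost verbatim, with the single substitution of $M^{(l)} := \max\{\|\bw^{(l)}\|_1, \|\bwh^{(l)}\|_1\}$ wherever the original argument used $\|\bw^{(l)}\|_1$ as a bound. First I would record the auxiliary bound $\|f(\bx;\bw^{(1:i)})\|_1 \le \prod_{j=1}^i \|\bw^{(j)}\|_1 \cdot \|\bx\|_1$, proved exactly as in Appendix~\ref{distortion_appendix} by peeling off one layer at a time and invoking $1$-Lipschitzness of $\sigma$; since this bound involves only the original weights, it is unchanged, and in particular $\|f(\bx;\bw^{(1:i)})\|_1 \le \prod_{j=1}^i M^{(j)} \cdot \|\bx\|_1$.

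Next I would peel off the last layer. By the triangle inequality, submultiplicativity of the induced $\ell_1$ matrix norm, $1$-Lipschitzness of $\sigma$ with $\sigma(0)=0$, and the auxiliary bound above — i.e., exactly the chain Eq.~\ref{eq:triangle}--Eq.~\ref{eq:norm of x} — one obtains
\begin{align*}
\|f(\bx;\bw) - f(\bx;\bwh)\|_1 &\le \|\bw^{(d)} - \bwh^{(d)}\|_1 \cdot \prod_{l=1}^{d-1} M^{(l)} \cdot \|\bx\|_1 \\
&\quad + \|\bwh^{(d)}\|_1 \cdot \|f(\bx;\bw^{(1:d-1)}) - f(\bx;\bwh^{(1:d-1)})\|_1.
\end{align*}
The only place the hypothesis $\|\bwh^{(l)}\|_1 \le \|\bw^{(l)}\|_1$ entered the original proof was the replacement of $\|\bwh^{(d)}\|_1$ by $\|\bw^{(d)}\|_1$ in Eq.~\ref{eq:w leq what}; here I instead bound $\|\bwh^{(d)}\|_1 \le M^{(d)}$, which needs no assumption. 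Dividing through by $\prod_{l=1}^d M^{(l)}$ and using $\|\bx\|_1 \le 1$ gives the recursion
\begin{align*}
\frac{\|f(\bx;\bw) - f(\bx;\bwh)\|_1}{\prod_{l=1}^d M^{(l)}} \le \frac{\|\bw^{(d)} - \bwh^{(d)}\|_1}{M^{(d)}} + \frac{\|f(\bx;\bw^{(1:d-1)}) - f(\bx;\bwh^{(1:d-1)})\|_1}{\prod_{l=1}^{d-1} M^{(l)}}.
\end{align*}

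Finally I would iterate this recursion from $l=d$ down to $l=1$, the base case being $\|f(\bx;\bw^{(1:1)}) - f(\bx;\bwh^{(1:1)})\|_1 = \|(\bw^{(1)}-\bwh^{(1)})\bx\|_1 \le \|\bw^{(1)}-\bwh^{(1)}\|_1$, which after division by $M^{(1)}$ supplies the $l=1$ term. Telescoping yields $\|f(\bx;\bw)-f(\bx;\bwh)\|_1 \le \big(\sum_{l=1}^d \|\bw^{(l)}-\bwh^{(l)}\|_1 / M^{(l)}\big)\big(\prod_{k=1}^d M^{(k)}\big)$, and taking the supremum over $\|\bx\|_1\le 1$ is Eq.~\ref{eq:modifiedl1bound}. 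There is no deep obstacle; the one point to check carefully is that enlarging $\|\bw^{(l)}\|_1$ to $M^{(l)}$ preserves the direction of every inequality, which it does because $M^{(l)}$ occurs only as a multiplicative factor inside \emph{upper} bounds (the accumulating product and the numerators of the residual terms), and $M^{(l)}\ge\|\bw^{(l)}\|_1$ as well as $M^{(l)}\ge\|\bwh^{(l)}\|_1$.
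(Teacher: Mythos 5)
Your proposal is correct and is essentially identical to the paper's own proof: the paper likewise observes that the chain up to Eq.~\ref{eq:norm of x} never uses the assumption $\|\bwh^{(l)}\|_1\leq\|\bw^{(l)}\|_1$, then enlarges $\prod_{l=1}^{d-1}\|\bw^{(l)}\|_1$ and $\|\bwh^{(d)}\|_1$ to the corresponding maxima, divides by $\prod_{l=1}^{d}\max\{\|\bw^{(l)}\|_1,\|\bwh^{(l)}\|_1\}$, and telescopes the resulting recursion.
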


By rearranging the terms in Eq.~\ref{eq:modifiedl1bound}, we get the following relation:
\begin{align}
    \begin{aligned}
     \sup_{\|x\|_1\leq 1} \frac{\|f(\bx, \bw) - f(\bx, \bwh)\|_1}{ \prod_{k=1}^d \max\{\|\bwh^{(k)}\|_1, \|\bw^{(k)}\|_1\} }
\leq \left(\sum_{l=1}^d \frac{\|\bw^{(l)} - \bwh^{(l)}\|_1}{\max\{\|\bwh^{(l)}\|_1, \|\bw^{(l)}\|_1\}}\right), \label{eq:symmetric bound}
    \end{aligned}
\end{align}
which implies that the normalized output perturbation is bounded by the normalized weight differences.
With the additional condition of $\|\bw\|_1 \geq \|\bwh\|_1$, we can simply recover Theorem~\ref{thm:l1bound} from Theorem~\ref{thm:modifiedTheorem1}.
\begin{align}
     \sup_{\|x\|_1\leq 1} \frac{\|f(\bx, \bw) - f(\bx, \bwh)\|_1}{ \prod_{k=1}^d \max\{\|\bwh^{(k)}\|_1, \|\bw^{(k)}\|_1\} }
\leq& \left(\sum_{l=1}^d \frac{\|\bw^{(l)} - \bwh^{(l)}\|_1}{\max\{\|\bwh^{(l)}\|_1, \|\bw^{(l)}\|_1\}}\right)\\
\leq& \sum_{l=1}^d \frac{\|\bw^{(l)} - \bwh^{(l)}\|_1}{\|\bw^{(l)}\|_1},   
\end{align}
which is compatible with the rest of our results.
The proof of Theorem~\ref{thm:modifiedTheorem1} is almost identical to the proof of Theorem~\ref{thm:l1bound}.
\begin{proof}[Proof of Theorem~\ref{thm:modifiedTheorem1}]
Since Eq.~\ref{eq:norm of x} still holds without the additional condition $\|\bw\|_1\geq \|\bwh\|_1$,
\begin{align}
& \|f(\bx; \bw^{(1:d)}) - f(\bx; \bwh^{(1:d)})\|_1\nonumber\\
\leq& \|\bw^{(d)} - \bwh^{(d)}\|_1 \cdot  \prod_{l=1}^{d-1} \|\bw^{(l)}\|_1 
    + \|\bwh^{(d)}\|_1 \cdot \| f(\bx; \bw^{(1:d-1)}) - f(\bx; \bwh^{(1:d-1)})\|_1\label{eq:norm of x modified}\\
\leq& \|\bw^{(d)} - \bwh^{(d)}\|_1 \cdot \prod_{l=1}^{d-1} \max\{\| \bw^{(l)}\|_1, \|\bwh^{(l)}\|_1\} 
    + \max\{\|\bw^{(d)}\|_1, \|\bwh^{(d)}\|_1\} \cdot \| f(\bx; \bw^{(1:d-1)}) - f(\bx; \bwh^{(1:d-1)})\|_1, \label{eq:w leq what modified}
\end{align}
which implies
\begin{align}
& \frac{\|f(\bx; \bw^{(1:d)}) - f(\bx; \bwh^{(1:d)})\|_1}{\prod_{l=1}^{d} \max\{\| \bw^{(l)}\|_1, \|\bwh^{(l)}\|_1\}} \nonumber\\
\leq& \frac{\|\bw^{(d)} - \bwh^{(d)}\|_1}{\max\{\|\bw^{(d)}\|_1, \|\bwh^{(d)}\|_1\}}
    + \frac{\| f(\bx; \bw^{(1:d-1)}) - f(\bx; \bwh^{(1:d-1)})\|_1}{\prod_{l=1}^{d-1} \max\{\| \bw^{(l)}\|_1, \|\bwh^{(l)}\|_1\}}.
    \label{eq:symmetric bound step}
\end{align}
Similar to the proof of Theorem~\ref{thm:l1bound}, we can recursively apply the above inequality to obtain Eq.~\ref{eq:symmetric bound}.
\end{proof}

\subsection{Density Estimation for Neural Network Parameters without Normalization}
\label{density_appendix}
   \begin{figure*}[h!]
        \centering %
        \includegraphics[width=.51\textwidth]{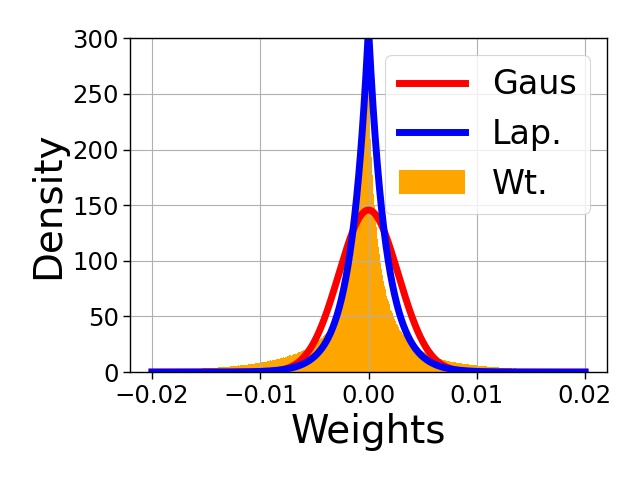}
    \caption{Weight Density of ResNet-18 (trained on CIFAR-10) before normalization.
    }\label{fig:density_resnet18}
   \end{figure*}
In Section~\ref{method1_rate_distortion}, we have justified our assumption of Laplacian distribution over normalized NN weights through density plots for three distinct architectures. We have also emphasized that Laplacian would be a good fit for unnormalized NN weights as well. We give the density plots of unnormalized weights of ResNet-18 in Figure~\ref{fig:density_resnet18} to justify our claim empirically. This claim implies that SuRP can also be applied to NNs without normalization and it would achieve rate-distortion theoretic optimal performance for reconstructing the NN weights back. However, recall from Theorem~\ref{thm:l1bound} that $\ell_1$ distortion of normalized weights upper bounds the output perturbation. Since we care more about maintaining the outputs rather than the weights themselves, we have applied SuRP after the normalization. 

We have additionally observed that weights of layers closer to the input tend to follow a Gaussian-like distribution.
In contrast, the weights of layers closer to the output behave like Laplacian random variables.
Since the last layers have larger number of parameters in the architectures used in this work, we see a Laplacian distribution over the weights globally. Therefore, different pruning strategies might be necessary for layers with Gaussian and Laplacian behaviour for a layer by layer pruning approach.

\subsection{Proof of Lemma~\ref{rd_lap}}
\label{lemma_proof_appendix}
In this section, we briefly describe the proof outline of Lemma~\ref{rd_lap} in Section~\ref{method1_rate_distortion}, which is provided in \citep{berger2003rate}. Consider the Laplacian source $U\sim P_{U}$ with parameter $\lambda$, and the target distortion $D$ satisfies $0\leq D\leq 1/\lambda$.
Then,
\begin{align*}
R(D) =& \min_{\mathbb{E}[d(U, \Uh)]\leq D} I(U;\Uh)\\
=& \inf_{\mathbb{E}[|U-\Uh|]\leq D} I(U;\Uh).
\end{align*}
Let $Q$ be another conditional distribution where $Q_{U|\Uh}(u|\uh) = \frac{1}{D}e^{-|u-\uh|/D}$.
Then, 
\begin{align}
I(U;\Uh) =& D_{KL}(P_{U|\Uh} \| P_U|P_{\Uh})\nonumber\\
=& D_{KL}(P_{U|\Uh} \|Q_{U|\Uh} | P_{\Uh})
+ \mathbb{E}_{P_{U, \Uh}} \left[\log\frac{Q_{U|\Uh}(U|\Uh)}{P_{U}(U)}\right]\nonumber\\
\geq& \mathbb{E}_{P_{U, \Uh}} \left[\log\frac{Q_{U|\Uh}(U|\Uh)}{P_{U}(U)}\right]\label{app:KL}\\
=& - \log(\lambda D)  - \frac{1}{D}\mathbb{E}[|U-\Uh|] + \lambda \mathbb{E}[|U|]\nonumber\\
\geq& -\log(\lambda D) \label{app:l1distance}
\end{align}
where Eq.~\ref{app:KL} is due to nonnegativity of KL divergence, and Eq.~\ref{app:l1distance} is because
$\mathbb{E}[|U|] = \frac{1}{\lambda}$ and $\mathbb{E}[|U-\Uh|]\leq D$.
This implies that $R(D) \geq -\log (\lambda D)$.
We note that we followed a technique inspired by Verdu's proof for rate-distortion function of exponential source \citep{verdu1996exponential}. The same lower bound can also be achieved via Shannon lower bound (SLB) \citep{shannon1959coding}.

On the other hand, we need to show that the lower bound $R(D)\geq -\log (\lambda D)$ is indeed tight.
Let $V$ be a mixture of point measure and Laplacian random variable, where the probability density function is given by
\begin{align*}
P_{V}(v) = \lambda^2 D^2 \cdot\delta(v) + (1-\lambda^2 D^2)\cdot \frac{\lambda}{2} e^{-\lambda |v|}.
\end{align*}
We further let $N$ be a Laplacian random variable with parameter $1/D$, where $V$ and $N$ are independent.
Then, the Laplace transform of $P_{V}$ and $P_{N}$ are given by
\begin{align*}
\mathbb{E}[e^{-sV}] =& \lambda^2 D^2 + (1-\lambda^2 D^2) \frac{\lambda^2}{\lambda^2+s^2}\\
\mathbb{E}[e^{-sN}] =& \frac{1/D^2}{1/D^2+s^2}.
\end{align*}
Consider the sum of two random variables $V+N$.
Since they are independent, Laplace transform of the density of $V+N$ is a product of the above two terms.
\begin{align*}
\mathbb{E}[e^{-s(V+N)}] =& \mathbb{E}[e^{-sV}] \cdot \mathbb{E}[e^{-sN}]\\
=& \frac{\lambda^2}{\lambda^2+s^2}.
\end{align*}
Since it coincides with the Laplace transform of $P_{U}$, we conclude that $U \stackrel{(d)}{=} V+N$.
Thus, by letting $U=V+N$, we obtain the conditional distribution $Q_{U|V}(u|v) = \frac{1}{D} e^{-|u-v|/D}$.
It is clear that $Q_{U|V}$ satisfies the equality conditions in Eq.~\ref{app:KL} and Eq.~\ref{app:l1distance},
and therefore it achieves the lower bound $I(U;\Uh) = -\log (\lambda D)$ with $\Uh=V$.

To sum, the optimal rate-distortion tradeoff is $R(D) = -\log (\lambda D)$ and it can be achieved with a reconstruction that follows
\begin{align}
\begin{aligned}
P_{V}(v) = \lambda^2 D^2 \cdot\delta(v) + (1-\lambda^2 D^2)\cdot \frac{\lambda}{2} e^{-\lambda |v|}.
\end{aligned}
\label{marginal_app}
\end{align}

\subsection{Algorithms}
\label{algorithms_exponential_appendix}
    \begin{algorithm}[!h]
        {\bf Hyperparameters:} $\beta$ \\
        {\bf Inputs:} weights $w_1, \dots, w_n$ in $d$ layers\\
        %{\bf Inputs:} Laplacian sequence $u_1, \dots, u_n$ with $n$ symbols\\
        {\bf Output:} reconstructed weights $w^{recon}_1, \dots, w^{recon}_n$ \\
    %\vspace{-.2in}
            \begin{algorithmic}
            \STATE{\underline{\textbf{Normalization:}}}
            \FOR{$l=1,\ldots,d$} 
            \STATE{$u^{(l)}\gets \frac{w^{(l)}}{\|w^{(l)} \|_1}$}
            \ENDFOR \\
            \STATE
            %\STATE{\underline{\textbf{Compression:}}}
            \STATE{$(u^{recon}_1, \dots, u^{recon}_n) \gets 0$}
            \STATE{$\lambda \gets \mathsf{ParamEst}((u_1,\dots,u_n))$}
            \STATE{$\mathsf{Encoder}$ sends $\lambda$ to the $\mathsf{Decoder}$}. 
            \FOR{$t=1,\ldots,L$} 
            \STATE{\hspace{.15in}\underline{\textbf{Encoder:}}}
            \STATE{$m_{max} \gets$ (indices of the components in $(u_1,\dots,u_n)$ that are larger than $\frac{1}{\lambda} \cdot \log{\frac{n}{2 \beta}}$.)}
            \STATE{$m_{min} \gets$ (indices of the components in $(u_1,\dots,u_n)$ that are smaller than $-\frac{1}{\lambda} \cdot \log{\frac{n}{2 \beta}}$.)}
            \IF{$m_{max}$ or $m_{min}$ is empty}
            \STATE{$\lambda \gets \mathsf{ParamEst}((u_1,\dots,u_n))$}
            \STATE{sends $\lambda$ to the $\mathsf{Decoder}$}.
            \ENDIF
            \STATE $m_1 \gets$ (a random index from $m_{max}$)
            \STATE $m_{-1} \gets$ (a random index from $m_{min}$)
            \STATE {sends $m_1$ and $m_{-1}$ to the Decoder.}
            \STATE {$u_{m_1} = u_{m_1} - \frac{1}{\lambda} \cdot \log{\frac{n}{2 \beta}}$}
            \STATE {$u_{m_{-1}} = u_{m_{-1}} + \frac{1}{\lambda} \cdot \log{\frac{n}{2 \beta}}$}
            \STATE  {$\lambda\gets \frac{n}{n-2\log{\frac{n}{2 \beta}}} \cdot\lambda$}\\
            \hspace{.15in} \underline{\textbf{Decoder:}}
            \STATE {receives $m_1$ and $m_{-1}$ from the Encoder.}
            \STATE {$u^{recon}_{m_1} = u^{recon}_{m_1} + \frac{1}{\lambda} \cdot \log{\frac{n}{2 \beta}}$}
            \STATE {$u^{recon}_{m_{-1}} = u^{recon}_{m_{-1}} - \frac{1}{\lambda} \cdot \log{\frac{n}{2 \beta}}$}
            \STATE  {$\lambda\gets \frac{n}{n-2\log{\frac{n}{2 \beta}}} \cdot\lambda$}
            \ENDFOR \\
            \STATE {$w^{recon}_1, \dots, w^{recon}_n \gets$ (denormalize $u^{recon}_1, \dots, u^{recon}_n$.)}
            \STATE
            \STATE $\mathsf{ParamEst}((u_1,\dots,u_n)):$
            \STATE \hspace{.15in}{$1/ \lambda \gets$ mean of $(|u_1|, \dots, |u_n|)$}
            %\STATE \hspace{.15in}{$\lambda \gets \alpha \cdot \lambda$}
            \STATE \hspace{.15in}{\bf return} $\lambda$  
            \end{algorithmic}
        \caption{SuRP}
        \label{algorithm_laplacian_appendix}
    % \vspace{-1mm}
    \end{algorithm}

    \begin{algorithm}[!h]
        {\bf Hyperparameters:} $\beta$ \\
        {\bf Inputs:} weights $w_1, \dots, w_n$ in $d$ layers\\
        %{\bf Inputs:} Laplacian sequence $u_1, \dots, u_n$ with $n$ symbols\\
        {\bf Output:} reconstructed weights $w^{recon}_1, \dots, w^{recon}_n$ \\
    %\vspace{-.2in}
            \begin{algorithmic}
            \STATE{\underline{\textbf{Normalization:}}}
            \FOR{$l=1,\ldots,d$} 
            \STATE{$u^{(l)}\gets \frac{|w^{(l)|}}{\|w^{(l)} \|_1}$}
            \ENDFOR \\
            \STATE
            \STATE{$(u^{recon}_1, \dots, u^{recon}_n) \gets 0$}
            \STATE{$\lambda \gets \mathsf{ParamEst}((u_1,\dots,u_n))$}
            \STATE{$\mathsf{Encoder}$ sends $\lambda$ to the $\mathsf{Decoder}$}. 
            \FOR{$t=1,\ldots,L$} 
            \STATE{\hspace{.15in}\underline{\textbf{Encoder:}}}
            \STATE{$m_{inds} \gets$ (indices of the components in $(u_1,\dots,u_n)$ that are larger than $\frac{1}{\lambda} \cdot \log{\frac{n}{\beta}}$.)}
            \IF{$m_{inds}$ is empty}
            \STATE{$\lambda \gets \mathsf{ParamEst}((u_1,\dots,u_n))$}
            \STATE{sends $\lambda$ to the $\mathsf{Decoder}$}.
            \ENDIF
            \STATE $m \gets$ (a random index from $m_{inds}$)
            \STATE {sends $m$ to the Decoder.}
            \STATE {$u_m = u_m - \frac{1}{\lambda} \cdot \log{\frac{n}{\beta}}$}
            \STATE {$\lambda\gets \frac{n}{n-\log{\frac{n}{\beta}}} \cdot\lambda$} \\
            \hspace{.15in} \underline{\textbf{Decoder:}}
            \STATE {receives $m$ from the Encoder.}
            \STATE {$u^{recon}_m = u^{recon}_m + \frac{1}{\lambda} \cdot \log{\frac{n}{\beta}}$}
            \STATE {$\lambda\gets \frac{n}{n-\log{\frac{n}{\beta}}} \cdot\lambda$}
            \ENDFOR \\
            \STATE {$w^{recon}_1, \dots, w^{recon}_n \gets$ (denormalize $u^{recon}_1, \dots, u^{recon}_n$ and add sign bits.)}
            \STATE
            \STATE $\mathsf{ParamEst}((u_1,\dots,u_n)):$
            \STATE \hspace{.15in}{$1/ \lambda \gets$ mean of $(u_1, \dots, u_n)$}
            %\STATE \hspace{.15in}{$\lambda \gets \alpha \cdot \lambda$}
            \STATE \hspace{.15in}{\bf return} $\lambda$  
            \end{algorithmic}
        \caption{SuRP-modified}
        \label{algorithm_exponential_appendix}
    % \vspace{-1mm}
    \end{algorithm}

We give the algorithm described in Section~\ref{method2_SuRP} in Algorithm~\ref{algorithm_laplacian_appendix}. For the experiments in Section~\ref{experiments}, we slightly modified Algorithm~\ref{algorithm_laplacian_appendix} and used Algorithm~\ref{algorithm_exponential_appendix}. 

As mentioned in Section~\ref{experiments}, these two algorithms are equivalent except the fact that Algorithm~\ref{algorithm_exponential_appendix} applies the same compression scheme after taking the absolute value of the normalized weights. Furthermore, Algorithm~\ref{algorithm_exponential_appendix} is rate-distortion theoretic optimal too. To see this, it is enough to follow the same steps in Sections~\ref{method1_rate_distortion} and~\ref{method2_SuRP} for exponential source instead of Laplacian source since the magnitude of Laplacian source sequence follows an exponential distribution. We now give the rate-distortion function for exponential source (magnitude of normalized weights). We consider i.i.d. exponential source sequence $u_1, \dots, u_n$ with distribution $f_{exp}(u;\lambda) = \lambda e^{-\lambda u}$ for $u \geq 0$, reconstruction $v_1, \dots, v_n$, and one-sided $\ell_1$ distortion given by:
\begin{align*}
    d(u, v) = \begin{cases} u - v, & \text{if} \ u \geq v \\
                    \infty,  & \text{otherwise.}\end{cases}
\end{align*}    
Then, the rate-distortion function is given in Lemma~\ref{lem:rd_exp}:

\begin{lem}{\citep{verdu1996exponential}} The rate-distortion function for an exponential source with one-sided distortion is given by
\begin{align}
\begin{aligned}
    R(D) = \begin{cases} -\log(\lambda D), & \ 0 \leq D \leq \frac{1}{\lambda} \\
    0, &  D > \frac{1}{\lambda}\end{cases}
\end{aligned}
\label{eq_rd_exp}
\end{align}
with the following optimal conditional probability distribution that achieves the minimum mutual information
\begin{align}
\begin{aligned}
    f_{\bU|\bV}(u|v) = \begin{cases}\frac{1}{D} e^{-(u-v)/D},& \mbox{ $u \geq v \geq 0$}\\
    0, & \mbox{ otherwise}.
    \end{cases}
\end{aligned}
\label{eq_cond_dist_exp}
\end{align}
Moreover, the marginal distribution of $\bV$ is as follows
\begin{align}
\begin{aligned}
f_{\bV}(v) = \lambda D \cdot \delta (v) + (1-\lambda D) \cdot \lambda e^{-\lambda v}
\end{aligned}
\label{eq:marginal_exp}
\end{align}
where $\delta(v)$ is a Dirac measure at 0.
\label{lem:rd_exp}
\end{lem}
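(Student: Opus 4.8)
The plan is to mirror the proof of Lemma~\ref{rd_lap} almost verbatim (see Appendix~\ref{lemma_proof_appendix}): the one-sided $\ell_1$ distortion on an $\mathrm{Exp}(\lambda)$ source is the ``positive half'' of the symmetric $\ell_1$ distortion on a Laplacian$(\lambda)$ source, since the magnitude of a zero-mean Laplacian$(\lambda)$ variable is $\mathrm{Exp}(\lambda)$. The argument splits into a converse (lower bound on $R(D)$) and an achievability step showing the bound is tight and is attained by the stated conditional and marginal.

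\textbf{Converse.} Fix $0\leq D\leq 1/\lambda$ and any test channel $P_{\Uh|U}$ with $\E[d(U,\Uh)]\leq D$. Since $d(u,\uh)=\infty$ whenever $\uh>u$, feasibility forces $\Uh\leq U$ almost surely and $\E[U-\Uh]\leq D$. I would introduce the auxiliary kernel $Q_{U|\Uh}(u|\uh)=\frac{1}{D}e^{-(u-\uh)/D}$ for $u\geq\uh$ and decompose, exactly as in the Laplacian proof,
\begin{align*}
I(U;\Uh) &= \KL\!\left(P_{U|\Uh}\,\big\|\,Q_{U|\Uh}\mid P_{\Uh}\right)+\E\!\left[\log\frac{Q_{U|\Uh}(U|\Uh)}{P_U(U)}\right]\\
&\geq \E\!\left[\log\frac{Q_{U|\Uh}(U|\Uh)}{P_U(U)}\right],
\end{align*}
using nonnegativity of the conditional KL divergence. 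With $P_U(u)=\lambda e^{-\lambda u}$ the surviving expectation equals $-\log(\lambda D)-\frac{1}{D}\E[U-\Uh]+\lambda\E[U]=-\log(\lambda D)-\frac{1}{D}\E[U-\Uh]+1\geq-\log(\lambda D)$, where I used $\E[U]=1/\lambda$ and $\E[U-\Uh]\leq D$. Hence $R(D)\geq-\log(\lambda D)$.

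\textbf{Achievability.} To show tightness I would construct the backward channel that meets the bound with equality. Let $V$ have the mixed density $f_V(v)=\lambda D\,\delta(v)+(1-\lambda D)\,\lambda e^{-\lambda v}$ and let $N\sim\mathrm{Exp}(1/D)$ be independent of $V$. A short Laplace-transform computation gives $\E[e^{-sV}]=\lambda D+(1-\lambda D)\frac{\lambda}{\lambda+s}=\frac{\lambda(1+Ds)}{\lambda+s}$ and $\E[e^{-sN}]=\frac{1}{1+Ds}$, so $\E[e^{-s(V+N)}]=\frac{\lambda}{\lambda+s}$, the transform of $\mathrm{Exp}(\lambda)$. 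Thus $U\stackrel{(d)}{=}V+N$; realizing $U=V+N$ and setting $\Uh:=V$ yields the conditional law $Q_{U|V}(u|v)=\frac{1}{D}e^{-(u-v)/D}$ on $u\geq v\geq 0$, makes the KL term vanish, and gives $\E[U-\Uh]=\E[N]=D$, so $I(U;\Uh)=-\log(\lambda D)$, matching the converse. For $D>1/\lambda$, the choice $\Uh\equiv 0$ is feasible ($\E[d(U,0)]=\E[U]=1/\lambda<D$) with $I(U;\Uh)=0$, so $R(D)=0$, consistent with the $D\uparrow 1/\lambda$ limit.

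\textbf{Where the difficulty lies.} The converse is routine once the kernel $Q_{U|\Uh}$ is guessed; the KL-nonnegativity argument plus $\E[U]=1/\lambda$ finish it. The delicate part is achievability: one must (i) correctly posit the reconstruction marginal with an atom of mass $\lambda D$ at the origin (the ``sparsity'' of the optimal reconstruction), (ii) verify through the Laplace-transform factorization that convolving this atom-plus-exponential law with $\mathrm{Exp}(1/D)$ reproduces the source exactly, and (iii) check that the induced forward channel is simultaneously feasible, KL-tight, and distortion-tight. A minor bookkeeping point is the boundary $D=1/\lambda$, where the atom carries all the mass and the exponential component vanishes, still consistent with $R(1/\lambda)=0$.
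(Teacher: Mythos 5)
Your proof is correct and follows essentially the same route the paper relies on: the paper defers this lemma to Verd\'u's argument and proves the Laplacian analogue (Lemma~\ref{rd_lap}, Appendix~\ref{lemma_proof_appendix}) with exactly the same two steps you use — the auxiliary backward kernel plus KL-nonnegativity for the converse, and the Laplace-transform factorization of the atom-plus-exponential reconstruction marginal for achievability. Nothing further is needed.
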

%\vspace*{-1.85\baselineskip}

Proof of Lemma~\ref{lem:rd_exp} can be found in \citep{verdu1996exponential}. It is clear to see from Eq.s~\ref{eq_cond_dist_exp} and~\ref{eq:marginal_exp} that exponential source has the same nice properties as Laplacian:
\begin{enumerate}
    \item It suggests pruning as an essential step in a good compression algorithm since Eq.~\ref{eq:marginal_exp} is a sparse distribution.
    \item It is successively refinable, allowing for a both practical and rate-distortion theoretic optimal algorithm (see Algorithm~\ref{algorithm_exponential_appendix}). 
\end{enumerate}

Following the same steps in Section~\ref{method2_SuRP}, it can be proven that Algorithm~\ref{algorithm_exponential_appendix}, which we used in our experiments, is zero-rate optimal with $\beta = \log{n}$.

\subsubsection{Effect of $\beta$}
\label{appedix_beta}
Table~\ref{tab:experiment_beta_neurips} shows the effect of the hyperparameter $\beta$ on the model accuracy, the number of SuRP iterations needed to achieve the desired sparsity, and the number of required refreshment for the Laplacian parameter $\lambda$. We perform one-shot pruning experiments without retraining, i.e., apply SuRP once, with different $\beta$ values as shown in the table. We can consider the accuracy as a measure of distortion; and 'the number of iterations' and 'the number of parameter  refreshments' as a measure of rate. More concretely, we first fix a target sparsity for all the experiments. If the number of iterations to achieve this sparsity is large, then the compression amount is small since higher number of indices represents the same model. During one running of SuRP, there might be a need for re-estimating the parameter $\lambda$ of the underlying Laplacian distribution, as we stated in Section~\ref{sec:surp}. This is an undesirable situation since this requires the encoder to send the re-estimated parameter $\lambda$ to the decoder; we call this a "refreshment". The numbers in this table verify our theoretical analysis that as $\beta$ increases, the number of refreshments becomes negligible compared to the total number of iterations (see Eq.~\ref{eq:prob}). However, for very large $\beta$ such as $\beta=(\log n)^2$, zero-rate optimality is not as strong as $\beta = \log n$ (see Theorem~\ref{thm_zero_rate}). This can also be seen from the table since the number of iterations needed is significantly larger for $\beta=(\log n)^2$, indicating that the bit-rate is large and we do not compress the model much. Since the accuracy is similar across different $\beta$ values, we can conclude that $c \cdot \log n$ is indeed a reasonable choice for $\beta$ as it balances the two factors (number of iterations and number of refreshments) that contribute to the rate.

\setlength{\tabcolsep}{3pt}
\begin{table*}[!h]
%\vspace{-5mm}
\centering
\caption{Effect of the choice of $\beta$ on the model accuracy, the number of SuRP iterations needed to achieve the desired sparsity, and the number of required refreshment for the Laplacian parameter $\lambda$. The experiments are run with VGG-16 model on CIFAR-10 dataset. The sparsity is $95 \%$ in all cases. Note that these experiments do not involve multiple pruning steps or retraining, that is why the accuracy is slightly smaller than the numbers in Table~\ref{tab:experiment_cifar}. 
}
\begin{tabular}{lccccc}
\toprule
\textbf{$\beta$ \ :}                   & $\sqrt{\log n}$   & $1/2 \cdot \log n$ & $\log n$ & $2 \cdot \log n$ & $(\log n)^2$ \\ \midrule
Accuracy                            & $89.2\%$            & $90.02\%$            & $90.02\%$        & $90.00\%$           & $90.00\%$\\
Required Number of Iterations       & $1.1M$            & $1.2M$             & $1.2M$           & $1.3M$             & $20M$\\ 
Number of Parameter Refreshment     & $20K$             & $500$              & $22$             & $20$              & $10$ \\ \bottomrule
\\
\end{tabular}

\label{tab:experiment_beta_neurips}
\vspace{-2mm}
\end{table*}

\subsection{Visualization of SuRP}
\label{succ_pruning_app}%

Figure~\ref{fig:no_retrain} shows the decreasing $\ell_1$ distortion and sparsity, and increasing accuracy of the reconstructed model through the iterations (running inside SuRP). We note that SuRP is applied only once in Figure~\ref{fig:no_retrain} and iterations correspond to the iterations running inside SuRP. However, as stated in Section~\ref{experiments}, we adopted iterative pruning approach, where each pruning iteration corresponds to running SuRP one time. After each pruning iteration, SuRP outputs a sparse model, and a retraining procedure is applied to the sparse model. When we do iterative pruning, we apply SuRP several times by increasing the target sparsity ratio every time. For instance, let us say we want to prune $90\%$ of the parameters in the first pruning iteration. Then, as shown in Figure~\ref{fig:phase_1}, SuRP stops once the sparsity ratio drops to $90\%$. Before starting the next iteration (next round of SuRP), we retrain the sparse model by excluding the pruned parameters (as proposed in \citep{deep_compression}). In the next iteration, as shown in Figure~\ref{fig:phase_2}, the sparsity can never be lower than $90\%$ no matter how long we run the algorithm since $90\%$ of the parameters are already pruned in the previous iteration of the pruning. As we typically desire a higher sparsity ratio in the later iterations, we need to stop SuRP at the target sparsity (which is higher than $90\%$).   

\begin{figure*}[h!]
        \centering %
        \subfigure[$\ell_1$ Distortion.]{\includegraphics[width=.31\textwidth]{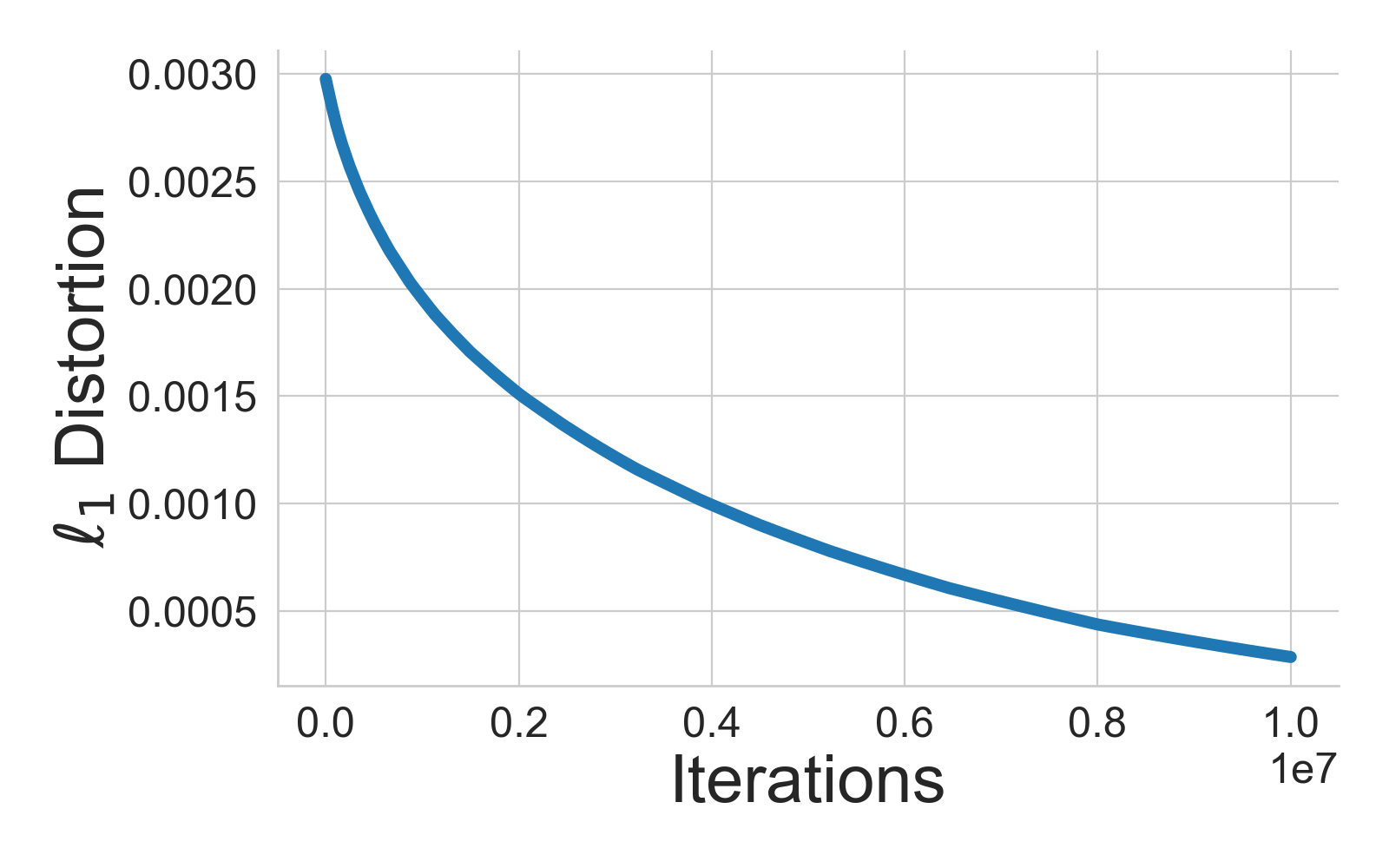}}
        \subfigure[Sparsity.]{\includegraphics[width=.31\textwidth]{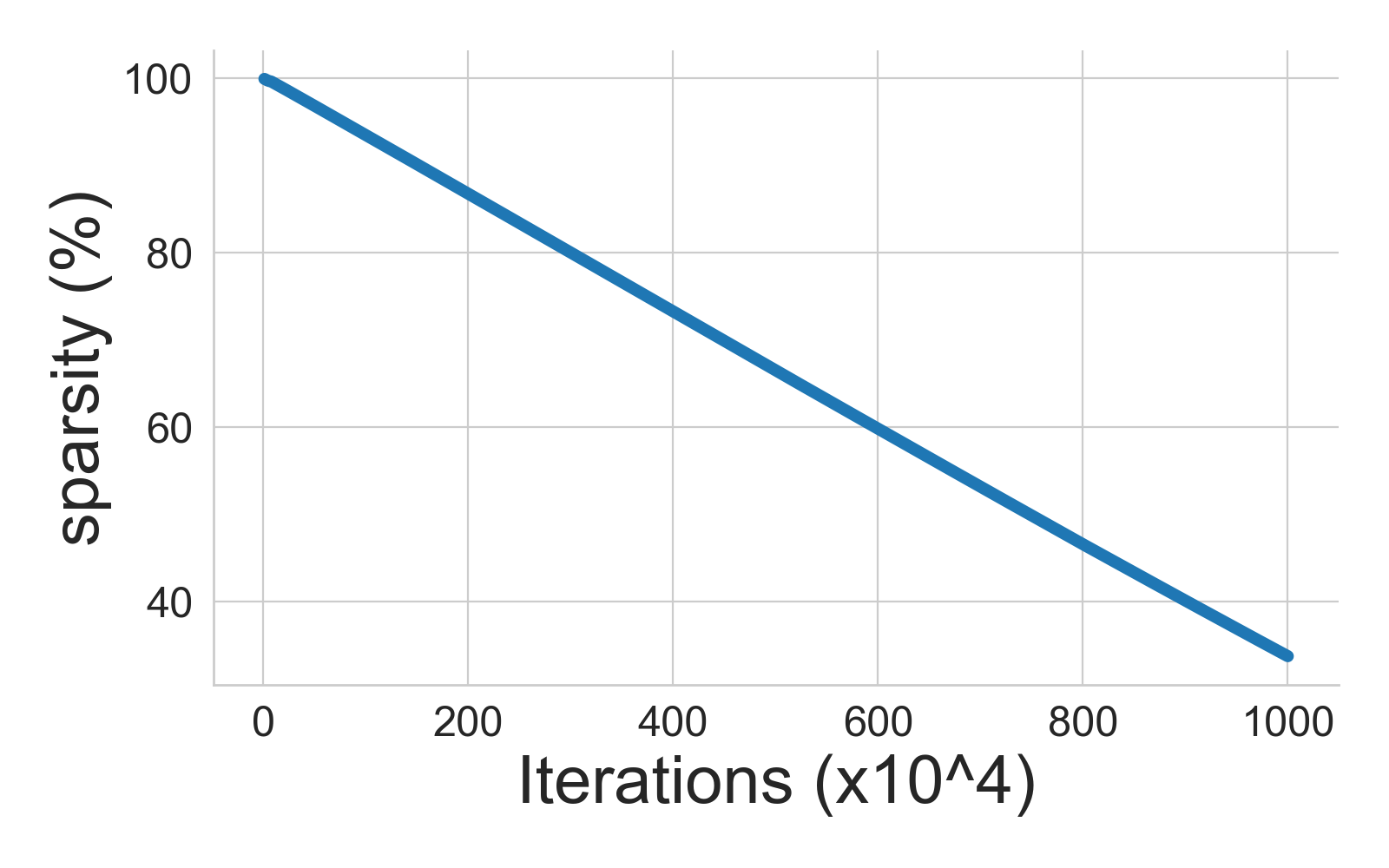}}
        \subfigure[Accuracy.]{\includegraphics[width=.31\textwidth]{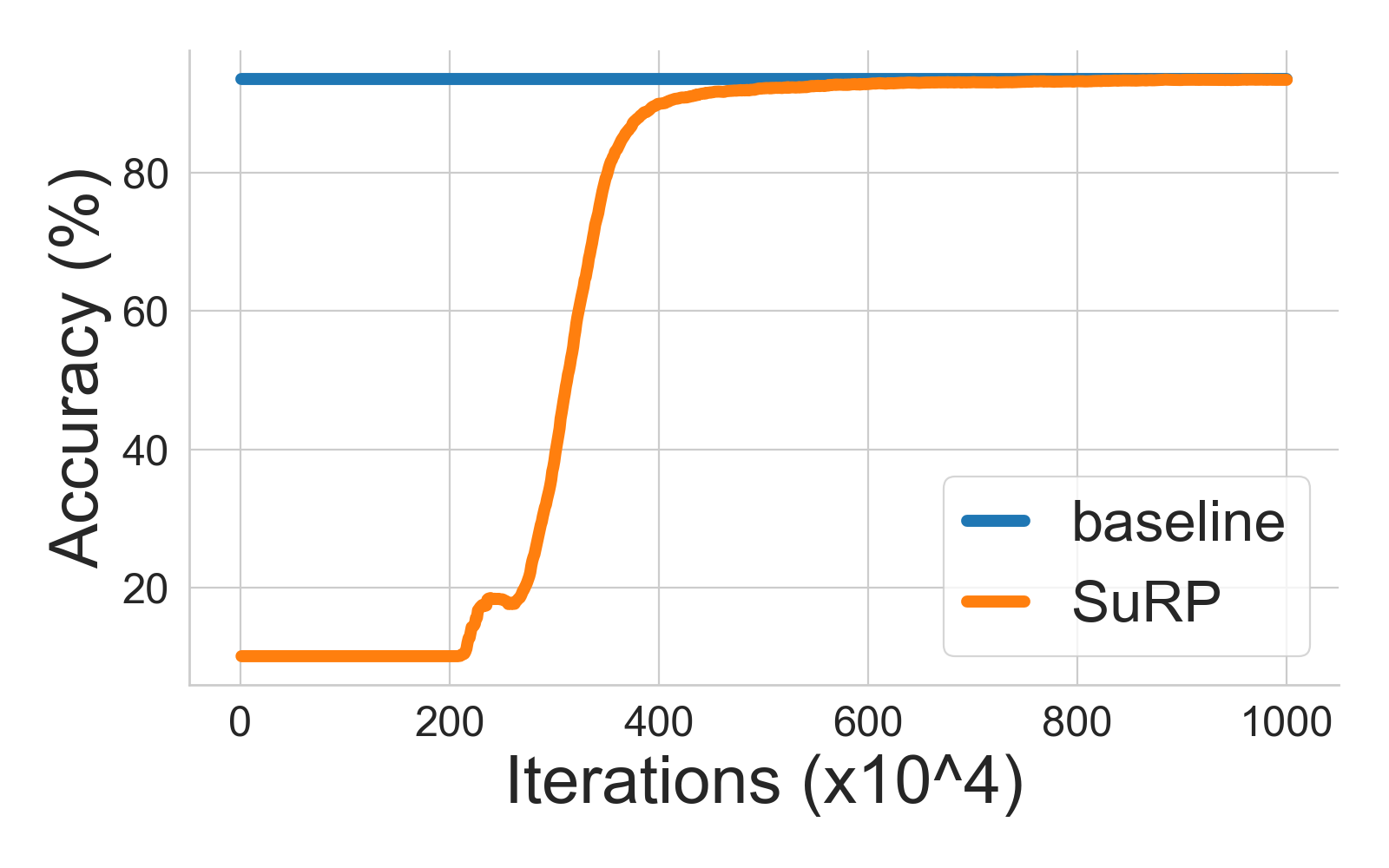}}
    \caption{(a) Average $\ell_1$ distortion, (b) sparsity and (c) accuracy of the reconstructed VGG-16 when SuRP is applied once (no iterative pruning). Baseline: fully-trained model without compression.}\label{fig:no_retrain}
   \end{figure*}
   
 \begin{figure*}[h!]
        \centering %
                \subfigure[$\ell_1$ Distortion.]{\includegraphics[width=.31\textwidth]{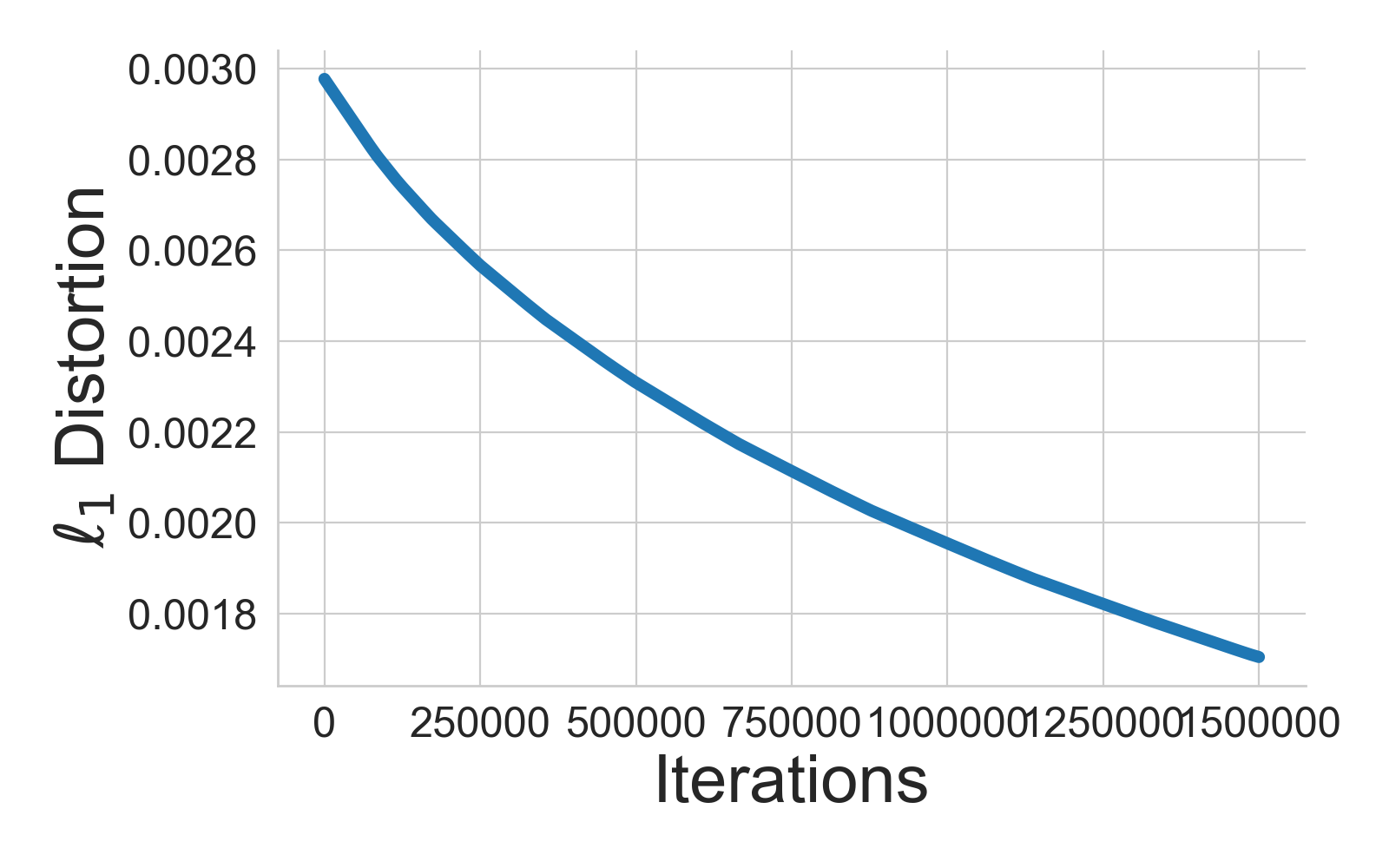}}
        \subfigure[Sparsity.]{\includegraphics[width=.31\textwidth]{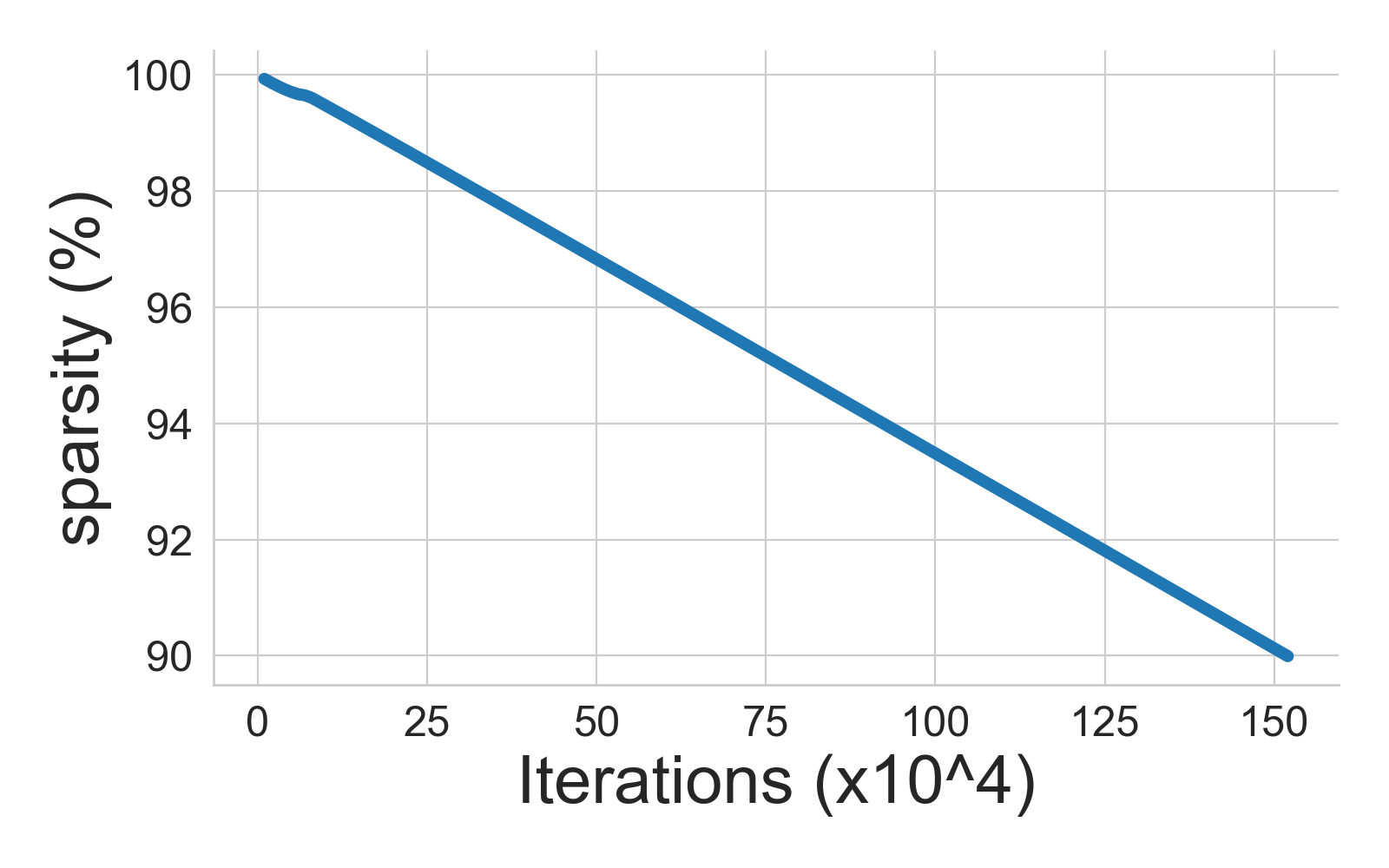}}
        \subfigure[Accuracy.]{\includegraphics[width=.31\textwidth]{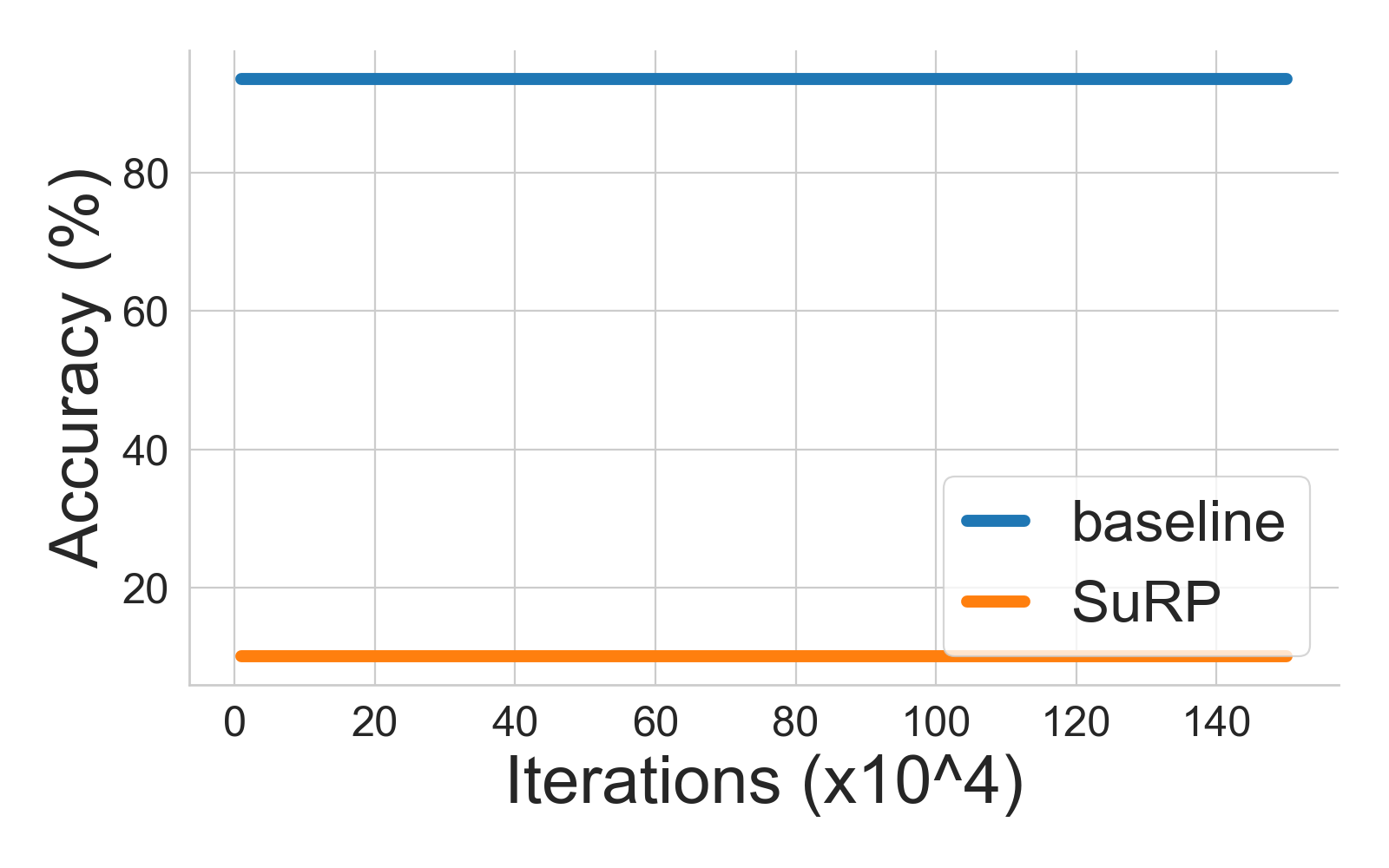}}
    \caption{(a) Average $\ell_1$ distortion, (b) sparsity and (c) accuracy of the reconstructed VGG-16 during SuRP (first iteration of the iterative pruning). SuRP stops at the desired sparsity $90\%$. Baseline: fully-trained model without compression.}\label{fig:phase_1}
   \end{figure*}
   
  \begin{figure*}[h!]
        \centering %
        \subfigure[$\ell_1$ Distortion.]{\includegraphics[width=.31\textwidth]{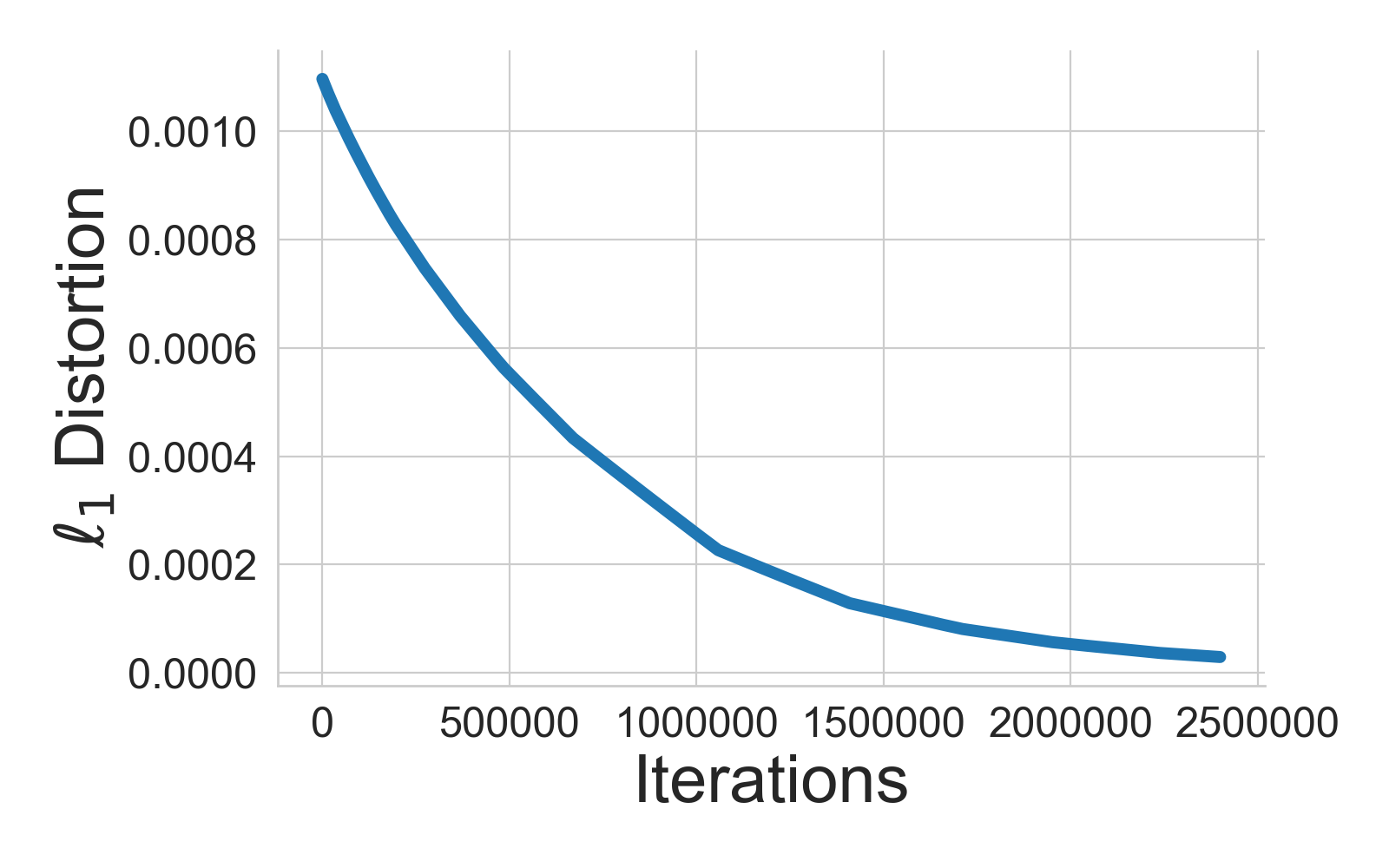}}
        \subfigure[Sparsity.]{\includegraphics[width=.31\textwidth]{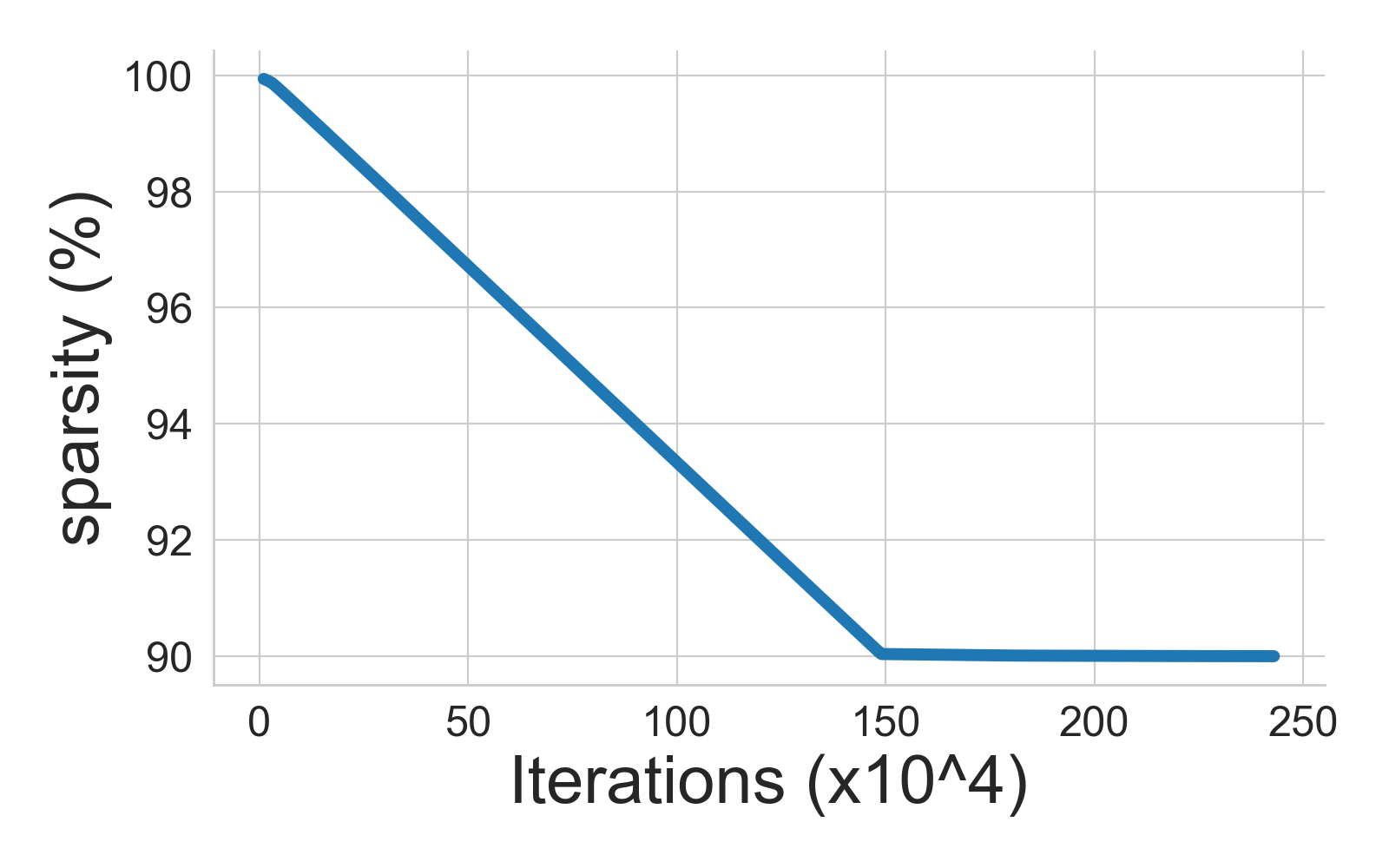}}
        \subfigure[Accuracy.]{\includegraphics[width=.31\textwidth]{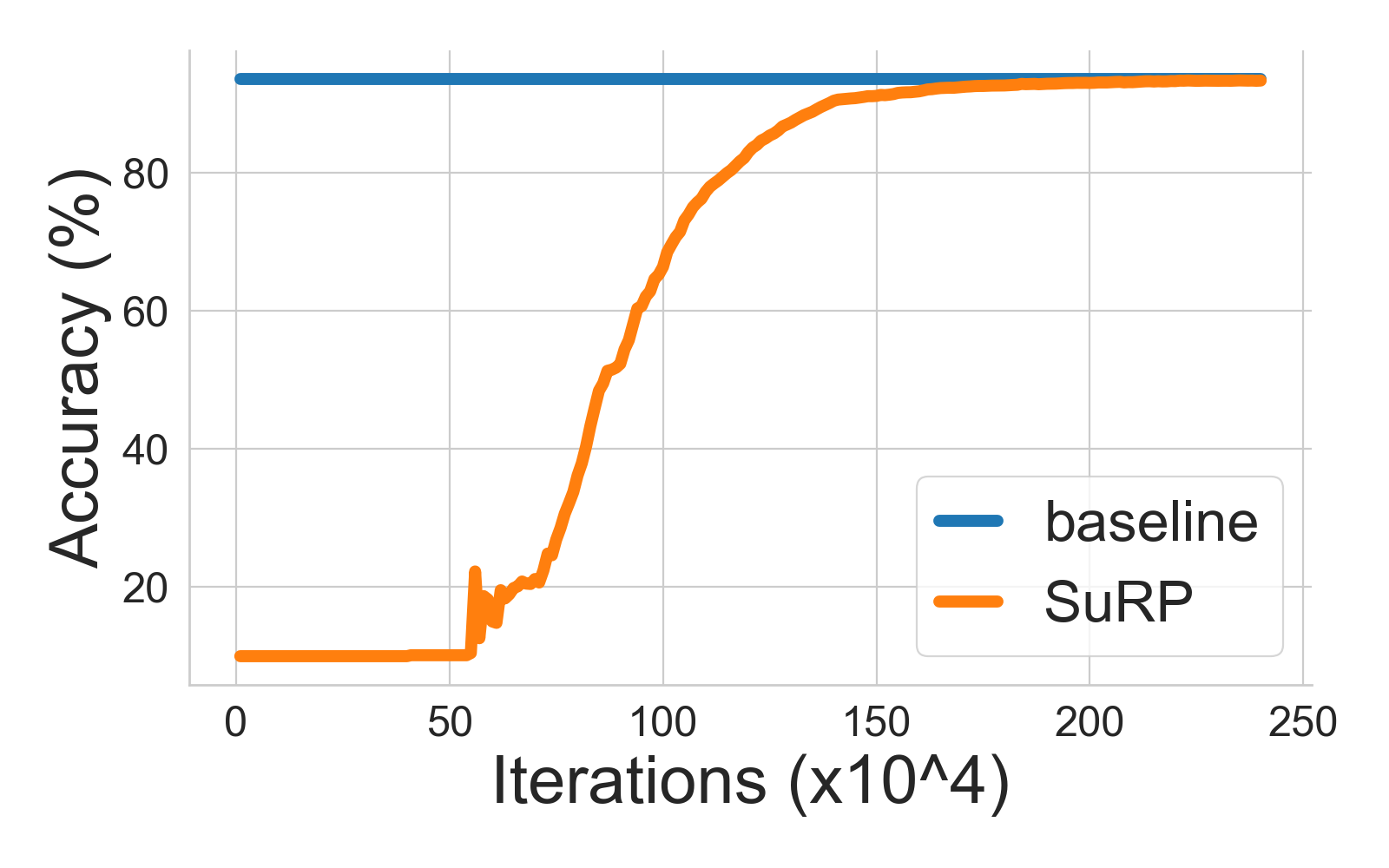}}
    \caption{(a) Average $\ell_1$ distortion, (b) sparsity and (c) accuracy of the reconstructed VGG-16 during SuRP (after the first iteration of the iterative pruning). The sparsity cannot be lower than $90\%$ and SuRP must stop at the desired sparsity (which is higher than $90\%$). Baseline: fully-trained model without compression. }\label{fig:phase_2}
   \end{figure*}

\subsection{Proof of Theorem~\ref{thm_zero_rate}}
\label{thm_zero_rate_proof_appendix}
In this section, we provide the proof of Theorem~\ref{thm_zero_rate}. In an iteration of SuRP, where $R_n = \frac{\log n(n-1)}{n}$ and $D_n = \frac{2}{n \lambda}\log \frac{n}{2\beta_n}$, we have
\begin{align*}
\frac{D_n} {R_n} & = -\frac{\frac{2}{\lambda}\log \frac{n}{2\beta_n}}{\log n(n-1)} \\
& = -\frac{1}{\lambda}\frac{\log n^2}{\log n(n-1)} + \frac{1}{\lambda}\frac{2\log 2\beta_n}{\log n(n-1)}.
\end{align*}
If $\lim_{n\rightarrow\infty}$ $\frac{\log 2\beta_n}{\log n(n-1)}=0$, it is clear that $\frac{D_n}{R_n}$ converges to $D'(0) = -\frac{1}{\lambda}$ as $n$ increases. Therefore, SuRP is zero-rate optimal under the condition that $\lim_{n\rightarrow\infty}$ $\frac{\log 2\beta_n}{\log n(n-1)}=0$.

\subsection{Optimizing the Bit Rate}
\label{bit_rate_appendix}
In this section, we highlight a useful byproduct of SuRP as a way to minimize the bit rate of the pruned model. Recall that SuRP requires transmitting two indices $i,j \in\{1,\dots,n\}$ from the encoder to the decoder for each iteration. This means that SuRP automatically gives the integer (indices are integers from $1, \dots, n$) representation of the model. Therefore, without dealing with floating points, i.e., precise values of the weights, we can reconstruct the model back using these indices. In order to further optimize this, we need a lossless compression scheme, namely entropy coding, to represent these indices as binary sequences. In information theory, the optimal entropy coding method can be found when the source distribution is known in advance \citep{huffman1952method}. Although there are universal codes that encode any source regardless of the distribution, they are preferable only when the source distribution is unknown since an entropy coding that matches the source distribution is always better than a universal code. Fortunately, our coding scheme for Laplacian (also for exponential) source induces a well-defined distribution that allows us to choose an optimal entropy coding method. Notice that randomly picking two indices $i,j$ from $\{k: \bU^{(t)}_k \geq \frac{1}{\lambda_{t}} \cdot \log{\frac{n}{2\beta}}\}$ and $\{k: \bU^{(t)}_k \leq - \frac{1}{\lambda_{t}} \cdot \log{\frac{n}{2\beta}}\}$ is equivalent to; (1) first randomly permuting $\bU^{(t)}$, and (2) selecting the minimum indices $i,j$ from $\{k: \bU^{(t)}_k \geq \frac{1}{\lambda_{t}} \cdot \log{\frac{n}{2\beta}}\}$ and $\{k: \bU^{(t)}_k \leq - \frac{1}{\lambda_{t}} \cdot \log{\frac{n}{2\beta}}\}$. The second approach induces a geometric  distribution under the i.i.d. assumption on the indices where small indices are always more probable to be selected. For geometric sources, there are two standard entropy coding methods: unary coding and Golomb coding \citep{golomb1966run, gallager1975optimal}. In our additional experiments in Appendix~\ref{experiments_app}, for comparing SuRP with other works on accuracy-bit rate tradeoff, we use Golomb coding. Now, we give more details on both methods.

\paragraph{Unary Coding.} Unary coding is a prefix-free code that is optimally efficient for the following geometric distribution:
\begin{align}
\begin{aligned}
    P_B(b)= 2^{-b}
\end{aligned}
\label{geo_2}
\end{align}
where $b$ is a positive integer. In the simplest term, unary coding encodes an integer $b$ with single $1$ followed by $b-1$ consecutive $0$'s. For instance, $72$ would be uniquely encoded as $100000010$. In our problem, indices follow the distribution in Eq.~\ref{geo_2} only when the fraction of normalized weights larger than $1/\lambda_t \cdot \log{\frac{n}{2 \beta}}$ in magnitude is exactly equal to $1/2$. Since this is not the case in every iteration, unary coding is not the optimal entropy coding method for indices in SuRP. 
\paragraph{Golomb Coding.} Golomb coding is an optimal prefix-free code for any geometric source, i.e., it is more general than unary coding. The construction of Golomb codes can be found in \citep{golomb1966run}. In our additional experiments in Appendix~\ref{experiments_app}, we implemented Golomb coding to represent NN models as binary arrays.

\subsection{Compression for Federated Learning}
\label{federated_learning_appendix}

\begin{figure*}[h!]
        \centering %
        \includegraphics[width=.31\textwidth]{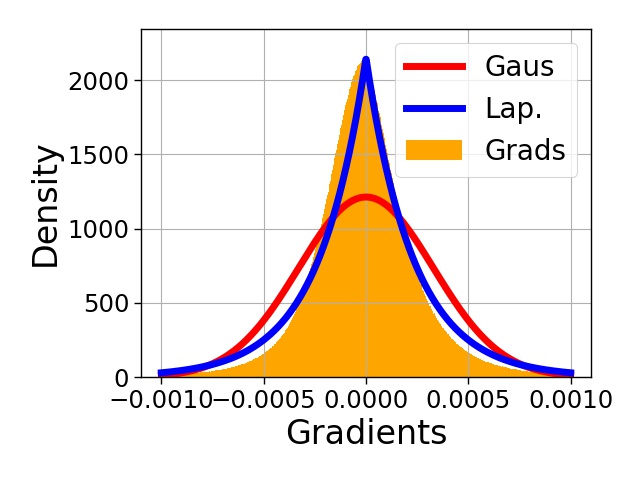}
    \caption{Density of gradients of ResNet-50 trained on ImageNet. We present only the gradients from the late stages of training since we use a pretrained ResNet-50.}\label{fig:density_grad_imagenet_app}
   \end{figure*}
   
\begin{figure*}[h!]
        \centering %
        \subfigure[Gradients at Early Stage.]{\includegraphics[width=.31\textwidth]{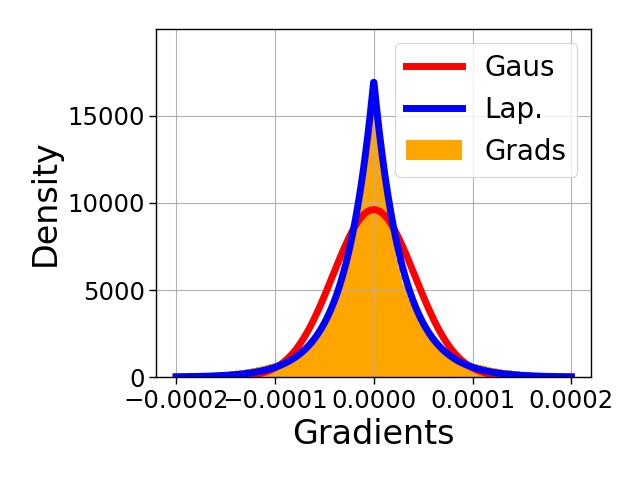}}
        \subfigure[Gradients at Mid Stage.]{\includegraphics[width=.31\textwidth]{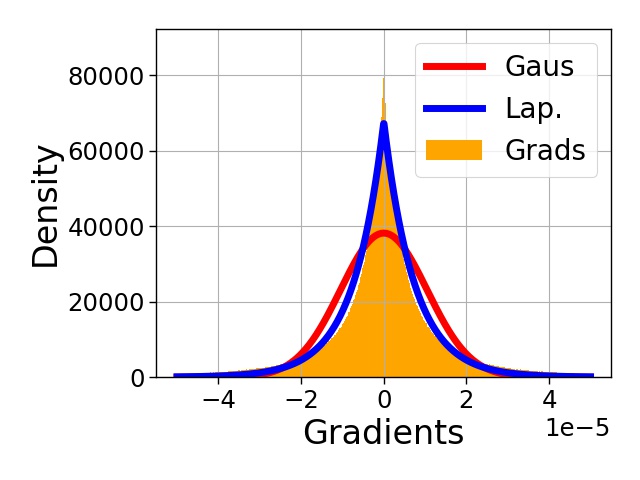}}
        \subfigure[Gradients at Late Stage.]{\includegraphics[width=.31\textwidth]{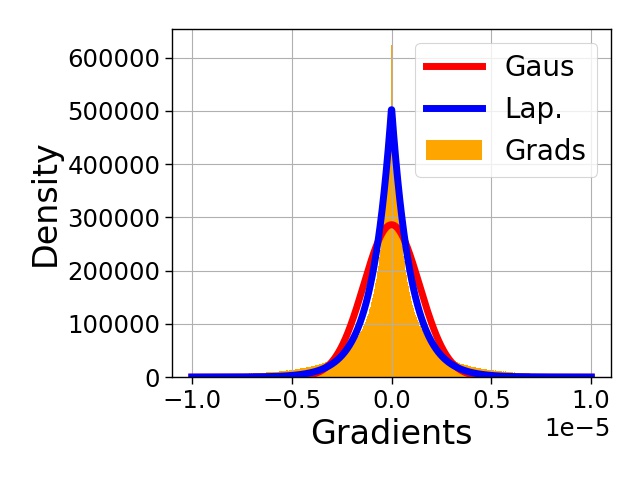}}
    \caption{Density of gradients of ResNet-18 trained on CIFAR-10 during (a) early stages of training (epoch 32), (b) middle stages of training (epoch 155), (c) late stages of training (epoch 336).}\label{fig:density_grad_resnet18_app}
   \end{figure*}   
   
\begin{figure*}[h!]
        \centering %
        \subfigure[Gradients at Early Stage.]{\includegraphics[width=.31\textwidth]{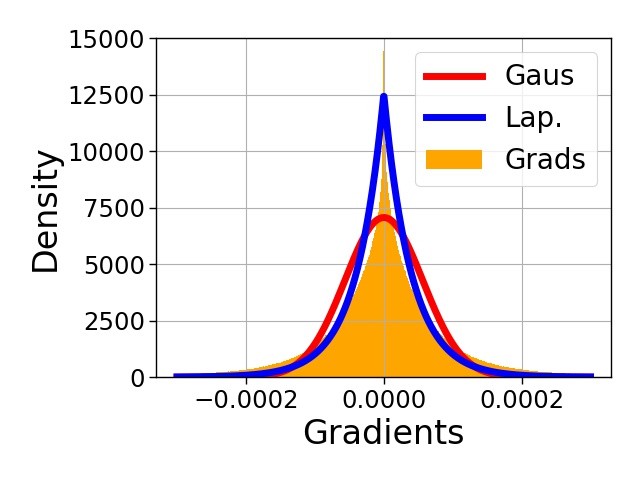}}
        \subfigure[Gradients at Mid Stage.]{\includegraphics[width=.31\textwidth]{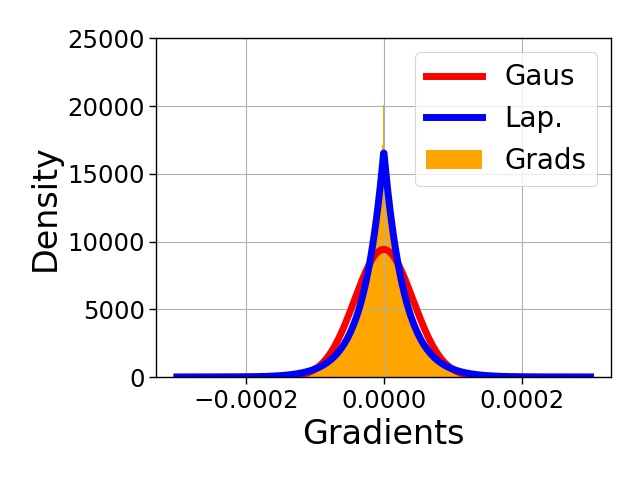}}
        \subfigure[Gradients at Late Stage.]{\includegraphics[width=.31\textwidth]{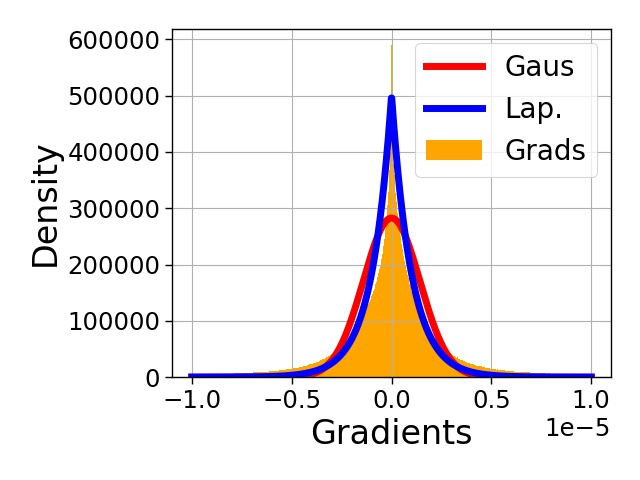}}
    \caption{Density of gradients of small VGG-16 trained on CIFAR-10 during (a) early stages of training (epoch 16), (b) middle stages of training (epoch 100), (c) late stages of training (epoch 170). }\label{fig:density_grad_vgg16_app}
   \end{figure*}

In Section~\ref{experiments}, we have applied SuRP to compress gradients in federated learning. In Figures~\ref{fig:density_grad_imagenet_app}, \ref{fig:density_grad_resnet18_app}, and \ref{fig:density_grad_vgg16_app}, we justify that Laplacian is a good fit for gradients of ResNet-50 trained on ImageNet, ResNet-18 trained on CIFAR-10, and VGG-16 trained on CIFAR-10. Since we need to compress the gradients before each communication round of federated training, SuRP requires the gradients to follow a Laplacian distribution throughout the learning process. In other words, although the parameter of the Laplacian distribution might change, we must be able to fit a Laplacian distribution to the gradients in every round.
We provide the density estimation of gradients in early-, mid-, and late-stages of training in Figures~\ref{fig:density_grad_imagenet_app}, \ref{fig:density_grad_resnet18_app}, and \ref{fig:density_grad_vgg16_app} to show that Laplacian distribution is a good fit for gradients starting from the early stages of training till the training ends. Therefore, we can apply SuRP to compress gradients at every communication round. In our experiments, we update the parameter of the Laplacian distribution ($\lambda$) at every communication round.  

Among other gradient sparsification methods for federated learning \citep{Aji, lin2017deep,federated0, Wang, Wangni}, SuRP is most similar to rTop-$k$ \citep{barnes2020rtop}, in that they also communicate a random subset of the large gradients. Different from our work, they approach the communication-efficient federated learning problem from a distributed statistical estimation point of view. By modeling the gradients with a sparse Bernoulli distribution, they show that the optimal compression strategy for each user (device) is to communicate a random $k/r$ fraction of the $r$ largest gradients. In contrast, we study the gradient compression problem from an information-theoretic approach and assume Laplacian distribution over the gradients. With this assumption, we conclude that each user must communicate the parameter of underlying Laplacian distribution of the local gradients and a list of indices that are randomly selected among the gradients larger than $\frac{1}{\lambda_t} \cdot \log{\frac{n}{2 \beta}}$ in magnitude at iteration $t$. Since the threshold $\frac{1}{\lambda_t} \cdot \log{\frac{n}{2 \beta}}$ is decreasing at each iteration, SuRP assigns a higher probability for larger (in magnitude) gradients to be selected, whereas rTop-$k$ picks the gradients uniformly random from the large gradients.

\subsection{Additional Experimental Details}
\label{hyperparams_appendix}

We conducted our experiments on NVIDIA Titan X (MNIST and CIFAR-10) and NVIDIA Titan Xp (ImageNet) GPUs on an internal cluster server. We used 1 GPU for MNIST and CIFAR-10 experiments and 2 GPUs for ImageNet experiments. We set the target sparsity of each SuRP round so that at each pruning iteration, $20\%$ of the surviving parameters will be pruned, e.g., sparsity schedule is as follows $20\%, 36\%, 48.8\%, 59.04\%, \dots$.

\subsubsection{MNIST:} 
We provide the architectural details and hyperparameters for LeNet-5 Caffe in Table~\ref{table:lenet-5_arch} \citep{lecun1998gradient}. We use a batch size of 100 and train for 100 epochs, early stopping at the best accuracy on validation set. We use the Adam optimizer with learning rate $=0.001$, and $\beta_1=0.9, \beta_2=0.999$ with weight decay $=5e^{-4}$.

\begin{table}[h!]
\centering
\caption{LeNet-5 Caffe convolutional architecture.}
\begin{tabular}{c|c}
\hline
\textbf{Name}& \textbf{Component}\\
\hline
conv1 & [$5 \times 5$ conv, 20 filters, stride 1], ReLU, $2 \times 2$ max pool \\
\hline
conv2 & [$5 \times 5$ conv, 50 filters, stride 1], ReLU, $2 \times 2$ max pool \\
\hline
Linear & Linear $800 \rightarrow 500$, ReLU \\
\hline
Output Layer & Linear $500 \rightarrow 10$ \\
\hline
\end{tabular}
\label{table:lenet-5_arch}
\end{table}

\subsubsection{CIFAR-10:} We provide the architectural details and hyperparameters for the ResNet-20 \citep{he2016deep}  and (small) VGG-16 \citep{vgg} in Tables~\ref{table:resnet_20_arch} and ~\ref{table:vgg16_arch}, respectively. For both ResNet-20 and VGG-16, we use a batch size of 128, we train ResNet-20 for 350 epochs and VGG-16 for 200 epochs, early stopping at the best accuracy on validation set. We use SGD with learning rate $=0.1$, momentum $=0.9$, and weight decay $=5e^{-4}$. We note that VGG-16 architecture is a smaller version of the original VGG architecture in \citep{vgg}. We retrain both models for $20$ epochs at the end of each pruning iteration.

\begin{table}[h!]
\centering
\caption{Slim ResNet-20 architecture.}
\begin{tabular}{c|c}
\hline
\textbf{Name}& \textbf{Component}\\
\hline
conv1 & $3\times3$ conv, 16 filters. stride 1, BatchNorm \\
\hline
Residual Block 1 & 
$
\begin{bmatrix}
    3 \times 3 \text{ conv, } 16 \text{ filters} \\
    3 \times 3 \text{ conv, } 16 \text{ filters}
\end{bmatrix}
\times 2$ \\
Residual Block 2 & 
$
\begin{bmatrix}
    3 \times 3 \text{ conv, } 32 \text{ filters} \\
    3 \times 3 \text{ conv, } 32 \text{ filters}
\end{bmatrix}
\times 2$ \\
\hline
Residual Block 3 & $
\begin{bmatrix}
    3 \times 3 \text{ conv, } 64 \text{ filters} \\
    3 \times 3 \text{ conv, } 64 \text{ filters}
\end{bmatrix}
\times 2$ \\
\hline
Output Layer & $7 \times 7$ average pool stride 1, fully-connected, softmax \\
\hline
\end{tabular}
\label{table:resnet_20_arch}
\end{table}

\begin{table}[h!]
\centering
\caption{VGG-16 architecture.}
\begin{tabular}{c|c}
\hline
\textbf{Name}& \textbf{Component}\\
\hline
conv1-2 & [$3\times3$ conv, 64 filters. stride 1, BatchNorm, ReLU] $\times 2$  \\
\hline
max pool & $2\times2$, stride 2  \\
\hline
conv3-4 & [$3\times3$ conv, 128 filters. stride 1, BatchNorm, ReLU] $\times 2$  \\
\hline
max pool & $2\times2$, stride 2  \\
\hline
conv5-7 & [$3\times3$ conv, 256 filters. stride 1, BatchNorm, ReLU] $\times 3$  \\
\hline
max pool & $2\times2$, stride 2  \\
\hline
conv8-10 & [$3\times3$ conv, 512 filters. stride 1, BatchNorm, ReLU] $\times 3$  \\
\hline
max pool & $2\times2$, stride 2  \\
\hline
conv11-13 & [$3\times3$ conv, 512 filters. stride 1, BatchNorm, ReLU] $\times 3$  \\
\hline
max pool & $2\times2$, stride 2  \\
\hline
Output Layer & $1 \times 1$ average pool stride 1, fully-connected, softmax \\
\hline
\end{tabular}
\label{table:vgg16_arch}
\end{table}

\subsubsection{ImageNet:}
We provide the architectural details and hyperparameters for the ResNet-50 used in our experiments in Table~\ref{table:resnet_arch3}. We use the pretrained ResNet-50 from PyTorch (\texttt{https://github.com/pytorch/vision/blob/master/torchvision/models/resnet.py}), with a batch size of 64. At the end of each pruning iteration, we retrain the model for 15 epochs. We use SGD with learning rate $=0.001$, momentum $=0.9$ and weight decay $=5e^{-4}$.

\begin{table}[h!]
\centering
\caption{ResNet-50 architecture.}
\begin{tabular}{c|c}
\hline
\textbf{Name}& \textbf{Component}\\
\hline
conv1 & $3\times3$ conv, 64 filters. stride 1, BatchNorm \\
\hline
Residual Block 1 & 
$
\begin{bmatrix}
    1 \times 1 \text{ conv, } 64 \text{ filters} \\
    3 \times 3 \text{ conv, } 64 \text{ filters} \\
    1 \times 1 \text{ conv, } 256 \text{ filters}
\end{bmatrix}
\times 3$ \\
\hline
Residual Block 2 & 
$
\begin{bmatrix}
    1 \times 1 \text{ conv, } 128 \text{ filters} \\
    3 \times 3 \text{ conv, } 128 \text{ filters} \\
    1 \times 1 \text{ conv, } 512 \text{ filters}
\end{bmatrix}
\times 4$ \\
\hline
Residual Block 3 & $
\begin{bmatrix}
    1 \times 1 \text{ conv, } 256 \text{ filters} \\
    3 \times 3 \text{ conv, } 256 \text{ filters} \\
    1 \times 1 \text{ conv, } 1024 \text{ filters}
\end{bmatrix}
\times 6$ \\
\hline
Residual Block 4 & $
\begin{bmatrix}
    1 \times 1 \text{ conv, } 512 \text{ filters} \\
    3 \times 3 \text{ conv, } 512 \text{ filters} \\
    1 \times 1 \text{ conv, } 2048 \text{ filters}
\end{bmatrix}
\times 3$ \\
\hline
Output Layer & $4 \times 4$ average pool stride 1, fully-connected, softmax \\
\hline
\end{tabular}
\label{table:resnet_arch3}
\end{table}

\subsection{Additional Experimental Results}
\label{experiments_app}
We give a more detailed version of Table~\ref{tab:experiment_cifar} in Tables~\ref{tab:experiment_cifar_vgg16_app},~\ref{tab:experiment_cifar_resnet20_app},~\ref{tab:experiment_cifar_densenet_app} and~\ref{tab:experiment_cifar_efficientnet_app} and a more detailed version of Table~\ref{tab:experiment_imagenet} in Table~\ref{tab:experiment_imagenet_app} with confidence intervals included in SuRP results.

\setlength{\tabcolsep}{3pt}
\begin{table*}[!h]
%\vspace{-5mm}
\centering
\caption{Accuracy of VGG-16 on CIFAR-10. Results are averaged over five runs. 
}
\resizebox{\textwidth}{!}{
\begin{tabular}{lccccccccccc}
\toprule
 \textbf{Pruning Ratio:}  & $93.12\%$   & $95.60\%$      & $97.19\%$    & $98.20\%$    & $98.85\%$  & $99.26\%$    & $99.53\%$   & $99.70\%$   & $99.81\%$ & $99.88\%$ \\ \midrule
Global \citep{morcos2019one} & $91.30$     & $90.80$        & $89.28$      & $85.55$      & $81.56$    & $54.58$      &$41.91$      & $31.93$     & $21.87$   & $11.72$  \\
Uniform \citep{zhu2017prune}                   & $91.47$     & $90.78$        & $88.61$      & $84.17$      & $55.68$    & $38.51$      &$26.41$      & $16.75$     & $11.58$   & $9.95$  \\ 
Adaptive \citep{stateofsparsity}    & $91.54$     & $91.20$        & $90.16$      & $89.44$      & $87.85$    & $86.53$      &$84.84$      & $82.41$     & $74.54$   & $24.46$  \\
RiGL \citep{evci2020rigging}          & $92.34$     & $91.99$        & $91.66$      & $91.15$      & $90.55$    & $89.51$      &$88.21$      & $86.73$     & $84.85$   & $81.50$  \\ 
LAMP \citep{lee2020deeper}                     & $92.24$     & $92.06$        & $91.71$      & $91.66$      & $91.07$    & $90.49$      &$89.64$      & $88.75$     & $87.07$   & $84.90$  \\
SuRP (ours)                & $\textbf{92.55} \pm 0.19$     & $\mathbf{92.13} \pm 0.20$        & $\mathbf{91.95} \pm 0.21$      & $\mathbf{91.72} \pm 0.28$      & $\mathbf{91.21} \pm 0.24$    & $\mathbf{90.73} \pm 0.21$      &$\mathbf{90.65} \pm 0.27$      & $\mathbf{89.70} \pm 0.32$     & $\mathbf{87.28} \pm 0.32$   & $\mathbf{85.04} \pm 0.35$ 
\\
\bottomrule
\\
\end{tabular}
}

\label{tab:experiment_cifar_vgg16_app}
\vspace{-2mm}
\end{table*}

\setlength{\tabcolsep}{3pt}
\begin{table*}[!h]
%\vspace{-5mm}
\centering
\caption{Accuracy of ResNet-20 on CIFAR-10. Results are averaged over five runs.
}
\resizebox{\textwidth}{!}{
\begin{tabular}{lccccccccccc}
\toprule
\textbf{Pruning Ratio:}  & $79.03\%$   & $86.58 \%$      & $91.41\%$   & $94.50 \%$    & $96.48\%$  & $97.75\%$    & $98.56\%$   & $99.08\%$   & $99.41\%$  & $99.62\%$ \\ \midrule
Global \citep{morcos2019one}                    &$87.48$      &$86.97$          & $86.29$     & $85.02$       & $83.15$    & $80.52$      & $76.28$     & $70.69$     & $47.47$    & $12.02$\\
Uniform \citep{zhu2017prune}                   &$87.24$      &$86.70$          & $86.09$     & $84.53$       & $82.05$    & $77.19$      & $64.24$     & $47.97$     & $20.45$    & $13.35$\\% 
Adaptive \citep{stateofsparsity}         &$87.30$      &$87.00$          & $86.27$     & $85.00$       & $83.23$    & $80.40$      & $76.40$     & $69.31$     & $52.06$    & $20.19$\\ 
RiGL \citep{evci2020rigging}          &$87.63$      &$87.49$          & $86.83$     & $85.84$       & $84.08$    & $81.76$      & $78.70$     & $74.40$     & $66.42$    & $50.90$\\
LAMP \citep{lee2020deeper}                     &$87.54$      &$87.12$          & $86.56$     & $85.64$       & $84.18$    & $81.56$      & $78.63$     & $74.20$     & $67.01$    & $51.24$\\
SuRP (ours)                &$\mathbf{91.37} \pm 0.24$      &$\mathbf{90.44} \pm 0.26$          & $\mathbf{89.00} \pm 0.21$     & $\mathbf{88.87} \pm 0.26$       & $\mathbf{87.05} \pm 0.28$    & $\mathbf{83.98} \pm 0.20$      & $\mathbf{79.00} \pm 0.34$     & $\mathbf{74.86} \pm 0.29$     & $\mathbf{70.64} \pm 0.38$    & $\mathbf{54.22} \pm 0.42$\\
\bottomrule
\\
\end{tabular}
}

\label{tab:experiment_cifar_resnet20_app}
\vspace{-2mm}
\end{table*}

\setlength{\tabcolsep}{3pt}
\begin{table*}[!h]
%\vspace{-5mm}
\centering
\caption{DenseNet-121 on CIFAR-10. Results are averaged over five runs.
}
\resizebox{\textwidth}{!}{
\begin{tabular}{lcccccccccc}
\toprule
  \textbf{Pruning Ratio:}  & $94.50\%$   & $95.60 \%$      & $96.48\%$   & $97.18 \%$    & $97.75\%$  & $98.20\%$    & $98.56\%$ & $98.85\%$   & $99.08\%$  & $99.26\%$  \\ \midrule
 Global \citep{morcos2019one} & $90.16$ & $89.52$ &  $88.83$  & $88.00$ & $86.85$ & $85.32$ & $77.68$ &  $45.30$ & $49.65$ & $20.96$ \\
 Unif. \citep{zhu2017prune}  &  $90.24$ &  $89.50$   &  $88.44$ & $87.94$ & $86.83$ &  $85.00$ & $82.16$ & $70.13$ & $66.46$ & $48.71$  \\
 Adap. \citep{stateofsparsity} & $90.25$  & $89.70$ & $89.03$  & $88.22$  & $87.40$  & $86.26$ & $84.55$ & $81.87$  & $69.25$ & $58.91$ \\
  RiGL \citep{evci2020rigging} & $90.21$ & $89.79$ &  $88.92$ & $88.20$  &  $87.25$ & $86.22$  & $84.11$ & $81.82$ & $59.06$ &  $59.07$ \\
  LAMP \citep{lee2020deeper} & $90.89$ & $90.11$  & $89.72$ & $89.12$ & $88.39$  & $87.75$ & $86.53$ & $85.13$ &  $82.92$ & $79.23$ \\
  SuRP (ours) & $\mathbf{91.42}\pm 0.11$ & $\mathbf{90.75}\pm 0.08$ & $\mathbf{90.30} \pm 0.20$ & $\mathbf{89.62} \pm 0.17$ & $\mathbf{88.77} \pm 0.08$ & $\mathbf{88.06} \pm 0.22$  & $\mathbf{86.71} \pm 0.15$  & $\mathbf{85.34} \pm 0.27$  & $\mathbf{83.18} \pm 0.24$ & $\mathbf{79.45} \pm 0.36$
\\
\bottomrule
\\
\end{tabular}
}
\label{tab:experiment_cifar_densenet_app}
\vspace{-2mm}
\end{table*}

\setlength{\tabcolsep}{3pt}
\begin{table*}[!h]
%\vspace{-5mm}
\centering
\caption{EfficientNet-B0 on CIFAR-10. Results are averaged over five runs.
}
\resizebox{\textwidth}{!}{
\begin{tabular}{lcccccccccc}
\toprule

\textbf{Pruning Ratio:}  & $59.00\%$   & $73.80 \%$      & $83.20\%$   & $89.30 \%$    & $93.13\%$  & $95.60\%$    & $97.18\%$   & $98.20\%$   & $98.85\%$  & $99.26\%$ \\ 
\midrule
Global \citep{morcos2019one} & $89.66$ & $89.55$ & $88.80$ & $87.64$ & $84.36$ & $79.25$ & $11.09$ & $10.62$ & $10.00$ & $10.00$ \\
Uniform \citep{zhu2017prune} & $88.99$ & $88.26$ & $86.48$ & $83.40$ &  $23.65$ & $10.83$ & $10.00$ & $10.00$ & $10.00$ & $10.00$ \\
Adaptive \citep{stateofsparsity} & $89.18$ & $88.03$ & $86.71$ & $84.16$ & $36.64$  &  $10.45$ & $10.00$ & $10.19$ & $10.00$ & $10.00$ \\
RiGL \citep{evci2020rigging} & $89.54$ &  $90.09$ & $90.01$ & $89.62$ & $88.82$ & $87.08$ & $84.72$ & $81.53$ & $51.31$ & $13.40$  \\
LAMP \citep{lee2020deeper} & $89.52$ & $89.95$ & $89.97$  & $90.21$ & $89.91$ & $89.79$ & $89.30$ & $88.51$ & $86.79$ & $65.76$ \\
SuRP (ours) & $\mathbf{90.96} \pm 0.10$ & $\mathbf{90.94} \pm 0.12$ & $\mathbf{90.89} \pm 0.12$ & $\mathbf{90.75} \pm 0.16$ & $\mathbf{90.31} \pm 0.21$ & $\mathbf{90.08} \pm 0.20$ &  $\mathbf{89.88} \pm 0.27$ & $\mathbf{89.02} \pm 0.38$ & $\mathbf{87.80} \pm 0.0.36$ & $\mathbf{70.76} \pm 0.52$
\\
\bottomrule
\\
\end{tabular}
}
\label{tab:experiment_cifar_efficientnet_app}
\vspace{-2mm}
\end{table*}

\setlength{\tabcolsep}{3pt}
\begin{table*}[!h]
%\vspace{-5mm}
\centering
\caption{Accuracy of ResNet-50 on ImageNet. Results are averaged over three runs.
}
\begin{tabular}{lcc}
\toprule
\textbf{Pruning Ratio:}             & $80\%$   & $90\%$  \\ \midrule
Adaptive \citep{stateofsparsity}   & $75.60$  & $73.90$   \\
SNIP \citep{snip}                   & $72.00$  & $67.20$ \\ 
DSR \citep{DSR2019}                 & $73.30$  & $71.60$ \\
SNFS \citep{SNFS2019}               & $74.90$  & $72.90$\\ 
RiGL \citep{evci2020rigging}        & $74.60$  & $72.00$\\ 
SuRP (ours)                          & $\mathbf{75.54} \pm 0.05$      & $\mathbf{73.93} \pm 0.04$    \\ \bottomrule
\\
\end{tabular}
\label{tab:experiment_imagenet_app}
\vspace{-2mm}
\end{table*}

 We also provide a comparison between SuRP and LAMP at lower pruning rates in Table~\ref{tab:lower_pruning}. 
 
\setlength{\tabcolsep}{3pt}
\begin{table}[!h]
\centering
\caption{Additional Results with Low Pruning Ratios.
}
\begin{tabular}{clcccccc}
\toprule
 \textbf{Pruning Ratio:} &  & $20\%$   & $36\%$      & $49\%$    & $59\%$    & $67\%$    & $79\%$\\ \midrule
\centered{VGG-16}& \centered{ LAMP  \citep{lee2020deeper}  \\ SuRP (ours) }
& \centered{ $93.12$ \\ $\textbf{93.72}$} 
& \centered{ $93.08$   \\ $\mathbf{93.75}$}
& \centered{ $93.05$ \\ $\mathbf{93.72}$ }
&\centered{ $92.89$ \\ $\mathbf{93.63}$ }
&\centered{ $92.81$ \\ $\mathbf{93.64}$}
& \centered{ $92.75$ \\$\mathbf{93.56}$ }
\\ \midrule
%&  \textbf{Pruning Ratio:}  & $20\%$   & $36\%$      & $49\%$    & $59\%$    & $67\%$    & $79\%$ \\ \midrule
\centered{ResNet-20} &\centered{ LAMP \citep{lee2020deeper}  \\SuRP (ours)}
& \centered{$89.12$ \\$\mathbf{92.47}$ }
&\centered{$88.81$  \\$\mathbf{92.43}$}
& \centered{$88.67$  \\ $\mathbf{92.29}$ }
& \centered{$88.27$ \\  $\mathbf{92.30}$}
& \centered{$87.95$  \\ $\mathbf{91.98}$}
& \centered{$87.54$  \\  $\mathbf{91.37}$}
\\
\bottomrule
\\
\end{tabular}
\label{tab:lower_pruning}
\end{table}

Additionally, we provide accuracy-bit rate comparisons between SuRP and relevant baselines such as Deep Comp. \citep{deep_compression}, DeepCABAC \citep{DeepCABAC}, DNS \citep{guo2016dynamic}, and SWS \citep{ullrich2017soft} in Table~\ref{tab:comparison_app}. It is seen from Table~\ref{tab:comparison_app} that SuRP outperforms the baselines both in terms of accuracy-sparsity and accuracy-bit rate tradeoffs. 

\setlength{\tabcolsep}{3pt}
\begin{table*}[h!]
%\vspace{-5mm}
\centering
\caption{Comparison of SuRP with other pruning strategies in terms of accuracy, sparsity and size (bit rate). 
}
%\resizebox{\columnwidth}{!}{
\begin{tabular}{cccccc}
\toprule
Model              & Original         & Method        & Sparsity                      & Comp.       &Comp.\\ 
(Original size)    & Acc. ($\%$)     &               &  $\frac{|w =0|}{|w|} (\%)$       & Size           & Acc. ($\%$)\\ \midrule
 \centered{
    LeNet-5-Caffe  \\
    MNIST\\
    (1.72 MB)}     &\centered{99.14} &\centered{Deep Comp. \citep{deep_compression}\\
                                        DNS \citep{guo2016dynamic} \\
                                        SWS \citep{ullrich2017soft}\\
                                        DeepCABAC \citep{DeepCABAC}\\
                                        SuRP (ours) \\
                                        SuRP (ours)}    & \centered{92.0 \\
                                                                    99.1 \\ 
                                                                    99.5 \\ 
                                                                    98.1 \\ 
                                                                    99.2 \\
                                                                    99.3}           & \centered{44 KB ($\times 39$)\\
                                                                                                16 KB ($\times 107$) \\
                                                                                                11 KB ($\times 156$) \\
                                                                                                12 KB ($\times 143$) \\
                                                                                                \textbf{7 KB ($\times 246$)}\\
                                                                                                \textbf{5 KB ($\times 344$)}}     & \centered{99.3\\
                                                                                                                                                99.1 \\
                                                                                                                                                99.0 \\
                                                                                                                                                99.1 \\
                                                                                                                                                99.3 ($\pm$ 0.0) \\
                                                                                                                                                98.2 ($\pm$ 0.1)}  \\ \midrule
\centered{ResNet-18 \\
CIFAR-10  \\
(44.70 MB) }        &\centered{95.60 }       &\centered{SuRP (ours) \\
                                                        SuRP (ours) \\
                                                        SuRP (ours)}    &\centered{90.0 \\
                                                                                    95.0 \\
                                                                                    97.0}        &\centered{3.1 MB ($\times 15$) \\
                                                                                                            1.1 MB ($\times 42$) \\
                                                                                                            \textbf{875 KB ($\times 53$)}}              &\centered{ 95.1 ($\pm$ 0.0)\\
                                                                                         92.2 ($\pm$ 0.1) \\
                                                                                        90.0 ($\pm$ 0.2) }\\ \midrule
\centered{Small VGG-16 \\
CIFAR-10 \\
(58.91 MB)}                &\centered{93.60}        &\centered{DeepCABAC \citep{DeepCABAC} \\
                                                             SuRP (ours)\\
                                                             SuRP (ours)}     &\centered{92.4 \\
                                                                                          95.0 \\
                                                                                          90.0 }   &\centered{956 KB ($\times 61$) \\
                                                                                                        \textbf{1.1 MB ($\times 54$)}\\
                                                                                                        3.0 MB  ($\times 20$)}                      &\centered{91.0 \\
                                                                                                                                                            92.4 ($\pm$ 0.1)\\
                                                                                                                                                            93.5 ($\pm$ 0.1)} \\ \midrule
\centered{ResNet-50 \\
        ImageNet \\
        (102.23 MB)  }         & \centered{76.60}     & \centered{Deep Comp. \citep{deep_compression} \\
                                                            DeepCABAC \citep{DeepCABAC}\\
                                                            SuRP (ours)}         & \centered{71.0 \\
                                                                                          74.6 \\
                                                                                          71.0}                   & \centered{6.1 MB ($\times 16$) \\
                                                                                                                                6.1 MB ($\times 16$) \\
                                                                                                                                 \textbf{6.1 MB ($\times 16$)}}           & \centered{76.1 \\
                                                                                                                                                                                74.1 \\
                                                                                                                                             
                                                                                                                                                                                76.4 ($\pm$ 0.0)} \\
\bottomrule
\\
\end{tabular}

\label{tab:comparison_app}
\end{table*}

\end{document}